\documentclass{article}

\PassOptionsToPackage{numbers,compress}{natbib}

 \usepackage[preprint]{neurips_2026}

\usepackage[utf8]{inputenc} 
\usepackage[T1]{fontenc}    
\usepackage{hyperref}       
\usepackage{url}            
\usepackage{booktabs}       
\usepackage{amsfonts}       
\usepackage{nicefrac}       
\usepackage{microtype}      
\usepackage{xcolor}         

\usepackage{graphicx}
\usepackage{float}
\usepackage{amsmath}
\usepackage{amsthm}
\usepackage{amssymb}
\usepackage{bm}
\usepackage{cleveref}
\usepackage{tikz}
\usepackage{pgfplots}
\pgfplotsset{compat=1.18}
\usepgfplotslibrary{groupplots}
\usetikzlibrary{calc}

\newtheorem{theorem}{Theorem}
\newtheorem{lemma}[theorem]{Lemma}

\newtheorem{proposition}[theorem]{Proposition}
\newtheorem{remark}[theorem]{Remark}

\theoremstyle{definition}

\allowdisplaybreaks

\title{When, why, and how do diffusion posterior samplers fail? A finite-sample lens}

\author{%
  Benjamin A. Burns\\
  Georgia Institute of Technology\\
  \texttt{bburns46@gatech.edu} \\
  \And
  Sara Fridovich-Keil \\
  Georgia Institute of Technology \\
  \texttt{sfk@gatech.edu} \\
}

\begin{document}

\maketitle

\addtocontents{toc}{\protect\setcounter{tocdepth}{-1}}

\begin{abstract}
  
Diffusion models have excellent capacity to model complex distributions of natural data, which has made them a popular and effective choice for posterior sampling in imaging inverse problems.  Existing methods can incorporate any measurement model at inference time but must use an inexact approximation for the likelihood at intermediate timesteps for computational tractability. Although these approximations can often work well empirically, their downstream effect on the sampled posterior is poorly understood and can result in unexplained failures.  To understand when, why, and how these likelihood approximations propagate to erroneous posterior distributions, we introduce a finite-sample perspective on posterior sampling that approximates the posterior to arbitrary precision as training set size tends towards infinity, for any forward model and prior distribution. Using this finite-sample lens, we observe that popular posterior sampling approximations tend to under- or over-estimate the spread of the posterior at intermediate timesteps, causing downstream consequences including sensitivity to early stopping time, inaccurate relative weighting of posterior modes, and hallucination, both of prior modes that are not in the posterior and likelihood modes that are not supported by the prior. Moreover, we find that the cause of these posterior errors requires neither a nonlinear measurement model nor a multimodal posterior, but can arise solely due to a multimodal prior and inaccurate posterior spread at intermediate sampling times. Our finite-sample posterior sampling approach is agnostic to the type of likelihood approximation and the type of (linear or nonlinear) forward model, and can thus serve as a drop-in diagnostic to evaluate the accuracy and failure modes of existing and future posterior samplers. All code for experiments is available at: \href{https://github.com/voilalab/diagnosing-posterior-sampling}{\texttt{https://github.com/voilalab/diagnosing-posterior-sampling}}.

\end{abstract}

\section{Introduction}
\label{sec:introduction}

Computational imaging has become a staple of modern science, medicine, security, and industry, allowing us to computationally visualize inside solid objects and at scales invisible to the human eye. Computational imaging works by first collecting indirect measurements through a known physical \emph{forward model}, and then computationally inverting that forward model to recover an interpretable image of the target object. In cases where the forward model is linear, and we have prior knowledge that constrains the target image to lie in a convex set, the rich theory of compressive sensing guarantees that the true image can be recovered exactly from a minimal number of measurements \citep{adcock2021compressive}.

However, in many computational imaging problems of modern interest, the forward model may be nonlinear, our prior knowledge may be nonconvex or statistical rather than structural, and we may have measurements that are too few or too noisy to uniquely determine the true image. In this context, the \emph{posterior distribution} is the primary object of interest, as it allows us to (i) produce candidate reconstructions via sampling and (ii) quantify confidence of candidate reconstructions via density evaluation, both of which have downstream applications in uncertainty quantification \citep{sullivan2015introduction, ghanem2017handbook}, active measurement acquisition \citep{lin2025tomographic, elata2025psc, goli2024bayes}, and risk-aware decision-making \citep{rockafellar2000optimization, majumdar2017how}. A key challenge in posterior sampling is to accurately capture events or posterior modes which have low probability but high impact, such as observing a tumor in medical imaging or a small dog or child in autonomous driving.

Diffusion models \citep{sohl-dickstein2015deep,ho2020denoising, song2020scorebased} have become an increasingly popular technique for learning a complex distribution from a dataset of samples, and thus an increasingly popular statistical prior for computational imaging \citep{heckel2025deep}.
In particular, score-based methods are an appealing option for posterior sampling because the posterior score, the object required for sampling, decomposes exactly as a sum of a prior score plus a likelihood score. The prior score is learnable from clean image examples, while the likelihood can be written in marginal form given the prior, noise schedule, and forward model. However, these so-called \emph{plug-and-play} \citep{zheng2025inversebench} or \emph{zero-shot} methods \citep{dasgupta2026unifying} cannot directly use this marginal for posterior sampling as evaluating the integral for each sample at each denoising step is computationally tractable.

Moment-matching methods replace the true marginal with a tractable surrogate, specifically by selecting a surrogate for the denoising process of the diffusion model. Although the denoiser is unknown, its moments are accessible through the noise schedule and learned prior score via Tweedie's formula \citep{robbins1956empirical}. However, satisfying both conditions is restrictive. Dirac approximations such as Diffusion Posterior Sampling (DPS) \citep{chung2023diffusion} analytically simplify for all measurement models, but capture no spread information about the data \emph{nor} denoising schedule. Gaussian approximations \citep{song2023pseudoinverseguided, boys2024tweedie} capture additional information, yet only analytically simplify for linear inverse problems with Gaussian measurement noise.

The posterior samples generated under these approximations are often high quality, but can fail dramatically in relatively common imaging contexts such as quantized sensing \citep{xu2024provably}. The root causes of both (i) the unexpectedly strong general performance and (ii) sporadic failure of these approximations are poorly understood. Thus, despite the increasing prevalence of diffusion posterior samplers, relatively little is known about when such methods will fail, why these approximations only fail in isolated examples, and what the characteristic nature of their failure is.

\textbf{Contributions:} To study when, why, and how these existing approaches fail, we introduce a finite-sample perspective on posterior sampling that approximates the posterior to arbitrary precision as training set size tends towards infinity, for \emph{any} forward model and prior. We provide algorithmic analysis and finite-sample rates which inform when the finite-sample perspective gives rise to a tractable posterior sampling method, and leverage this perspective as a lens into the nature of the approximations proposed in prior work and their downstream consequences.
Through this finite-sample lens, we observe that Gaussian approximations \citep{song2023pseudoinverseguided, boys2024tweedie} tend to commit to prior modes too early in the sampling process, which can lead them to place inaccurate relative weight on different posterior modes as well as hallucinate samples from prior modes that are not in the posterior. 
For Dirac approximations (DPS) \cite{chung2023diffusion}, our finite-sample lens reveals that the choice of parameter $\zeta$ causes DPS to over- or under-weight the likelihood relative to the prior at intermediate timesteps during posterior sampling, which can cause erroneous posterior variance at intermediate timesteps, hallucination of prior modes that are inconsistent with the measurement, and/or measurement-consistent modes that are inconsistent with the prior.
These effects can arise even under linear measurement models, Gaussian measurement noise, and unimodal posteriors, as long as the prior distribution is multimodal.

\section{Related work}
\label{sec:brief-related-work}

\vspace{-1mm}
We briefly discuss key related work here, but defer extended discussion to \Cref{sec:generative-posterior-samplers,sec:benchmarking-vs-understanding,sec:exact-diffusion}.
\vspace{-1mm}

\paragraph{Generative posterior samplers.}
Our goal is to analyze the effect of popular moment-matching likelihood (and hence likelihood score) approximations \citep{chung2023diffusion,song2023pseudoinverseguided,boys2024tweedie} on posterior sampling. Alternative methods which employ either learning-based likelihood or posterior scores \citep{elata2025psc} or exact likelihood sampling (e.g. via variable splitting) \citep{xu2024provably} exist. However, we focus on moment-matching methods \cite{chung2023diffusion, song2023pseudoinverseguided, boys2024tweedie} due to two of their key benefits: (i) they can reuse pre-trained prior scores, which are often pre-trained with significantly more resources than are available to train individually for each measurement model, and (ii) the forward model $\mathcal{A}$ can change at inference time, without requiring any additional training. For example, in inpainting, changing the size or location of the mask entirely changes the likelihood and posterior scores, which is a straightforward and efficient adjustment for moment-matching methods but necessitates retraining for methods that learn the likelihood or posterior score.

\paragraph{Benchmarking vs. understanding.}
Many existing works benchmark various diffusion posterior samplers against each other by studying the quality of the resulting posterior samples at terminal time \citep{zheng2025inversebench, crafts2025benchmarking,zhang2025improving}, and provide hypotheses on when and why certain methods perform better than others or struggle in particular contexts \citep{zheng2025inversebench,xu2025rethinking}. However, little is known about how the various methods function at intermediate times, which we demonstrate is key to understanding how and why errors at terminal time arise, and for predicting when the sampler will fail in the future.

\paragraph{Exact diffusion.}
While the finite-sample regime has been used in developing unconditional \citep{scarvelis2025closedform} and conditional \citep{zhang2025exact,zhang2025iensf} sampling algorithms, and for understanding unconditional diffusion \citep{mimikos-stamatopoulos2024scorebased}, we present the first (to our knowledge) finite-sample perspective for understanding posterior sampling.

\section{Preliminaries}
\label{sec:preliminaries}

We begin with a review of Bayesian inverse problems in which posterior sampling is used, diffusion-based generative models that are used to model the prior distribution in these problems, and existing popular approximations to the likelihood score used in posterior sampling with diffusion models.

\paragraph{Notation:} We adopt the convention of using boldface lowercase characters $\mathbf x, \mathbf y$ for (finite-dimensional) vectors, boldface uppercase letters $\mathbf A, \mathbf C$ for matrices, and lowercase non-bold characters $t, \alpha$ for scalars. The typesetting of a function typically denotes the ``type'' of its argument. For example, the noise schedule $\bar\alpha(t): \mathbb{R} \to \mathbb{R}$ and weight function $w_i(\mathbf x, t) : \mathbb{R}^n \times \mathbb{R} \to \mathbb{R}$ are scalar-valued functions as $\alpha, w$ are lowercase non-bold, the denoiser mean $\mathbf m_{0 \mid t}(\mathbf x_t) : \mathbb{R}^n \to \mathbb{R}^n$ is vector-valued, and the denoiser covariance $\mathbf C_{0 \mid t}(\mathbf x_t) : \mathbb{R}^n \to \mathbb{R}^{n\times n}$ is matrix-valued. The main exceptions are that we utilize non-bold calligraphic $\mathcal{A} : \mathbb{R}^n \to \mathbb{R}^m$ to stress situations where the (generally vector-valued) measurement operator is either nonlinear or \emph{possibly} nonlinear, and that we use typical notation for standard probability objects (normal distribution $\mathcal{N}$, expectation $\mathbb{E}$, etc.) irrespective of their output types. We let $\nabla_{\mathbf x} f(\mathbf x)$ and $\nabla^2_{\mathbf x} f(\mathbf x)$ denote the gradient and Hessian, respectively, of scalar-valued function $f : \mathbb{R}^n \to \mathbb{R}$ evaluated at point $\mathbf x$, and let $J_{\mathbf f}(\mathbf x)$ denote the Jacobian of vector-valued function $\mathbf f : \mathbb{R}^n \to \mathbb{R}^m$ evaluated at point $\mathbf x$.

\subsection{Bayesian inverse problems}

A prototypical computational imaging problem can be described by a forward model of the form in \Cref{eq:forward-model}, where $\mathbf x \in \mathcal{X} \subseteq \mathbb{R}^n$ is the true image or signal and $\mathbf y \in \mathbb{R}^m$ is our measurements taken according to a known measurement operator $\mathcal{A}$, with measurement noise $\bm \eta$:
\begin{equation}
      \mathbf y = \mathcal{A}(\mathbf x) + \bm \eta, \quad \bm\eta \sim \mathcal{N}(\mathbf 0, \bm \Sigma_{\mathbf y}).
      \label{eq:forward-model}
\end{equation}
We assume that the measurement operator $\mathcal{A}$ is known and deterministic, but may be linear or nonlinear. For example, in magnetic resonance imaging (MRI), $\mathcal{A}$ is a (subsampled) Fourier transform \citep{mribook}, while in computed tomography (CT) $\mathcal{A}$ involves an exponential nonlinearity due to Beer--Lambert attenuation \citep{ctbook, fridovich2023gradient}. The measurement noise $\bm \eta$ may take various distributions in different imaging problems (for example, in low-dose CT $\bm \eta$ is typically Poisson), but for simplicity of analysis is often assumed to be zero-mean Gaussian \citep{ctbook, heckel2025deep}, as in \Cref{eq:forward-model}. 

Even in this setting where $\mathcal{A}$ and the distribution of $\bm \eta$ are assumed known, solving \Cref{eq:forward-model} for the image $\mathbf x$ remains challenging because (i) the measurement operator $\mathcal{A}$ may be nonlinear, under-determined, non-invertible, or ill-conditioned, and (ii) the measurement noise $\bm \eta$ is stochastic and may obscure relevant signal in the measurements $\mathbf y$. To assist with these challenges, it is common to assume that the signal $\mathbf x$ follows known structure or statistics; the former is standard in compressive sensing \citep{adcock2021compressive} and the latter forms the basis of posterior sampling, which is the focus of this work.

Specifically, we work in the context of \emph{Bayesian} inverse problems, in which we assume data follow a prior distribution $p_{\mathrm{pr}}(\mathbf x)$
and formulate the posterior distribution $p(\mathbf x\mid \mathbf y)$ via Bayes' rule:
\begin{equation}
      p(\mathbf x \mid \mathbf y) = p(\mathbf y \mid \mathbf x)p_{\mathrm{pr}}(\mathbf x) / Z(\mathbf y).
      \label{eq:posterior}
\end{equation}
Crucially, we are interested in recovering the full posterior distribution, to obtain not only the most likely samples but also an accurate representation of uncertainty over the range of possible reconstructions that are consistent with both the prior and the measurements. The likelihood $p(\mathbf y\mid \mathbf x)$ in \Cref{eq:posterior} is induced directly by the forward model in \Cref{eq:forward-model}, and accounts for the measurement operator $\mathcal{A}$ and measurement noise $\bm \eta$.
Thus, what remains is to characterize the prior distribution $p_{\mathrm{pr}}$ and the normalization constant $Z(\mathbf y)$, as we describe next.

\subsection{Diffusion-based generative models}

To characterize the prior distribution $p_{\mathrm{pr}}$, we turn to generative models, which have shown strong capacity to capture the rich multimodal structure of natural data distributions. In particular, score-based diffusion models are rather natural for solving Bayesian inference problems, because the score of the posterior distribution has no dependence on the normalization constant $Z(\mathbf y)$.

Following prior work \citep{chung2023diffusion, song2023pseudoinverseguided, boys2024tweedie},
we consider the variance-preserving (VP) stochastic differential equation (SDE) formulation \citep{song2020scorebased}, which progressively noises samples $\mathbf x_0 \sim p_{\mathrm{pr}}$ via the (rescaled) Ornstein-Uhlenbeck process \citep{uhlenbeck1930theory} :
\begin{equation}
      \label{eq:vpsde}
      \mathrm{d}\mathbf x_t = \underbrace{-\tfrac{1}{2}\beta(t)\mathbf x_t}_{\text{drift}} \, \mathrm{d}t + \underbrace{\sqrt{\beta(t)}}_{\text{diffusion}}\, \mathrm{d}\mathbf{w}_t, \quad \mathbf x_0 \sim p_{\mathrm{pr}},
\end{equation}
where $\beta \colon [0, \infty) \to \mathbb{R}_{>0}$ is a positive function, $\mathbf{w}_t$ is a standard Wiener process, and $\mathbf x_t$ denotes the evolution of sample $\mathbf x_0$ subject to \Cref{eq:vpsde} for $t$ time. Importantly, \Cref{eq:vpsde} is linear in the state $\mathbf x_t$, which allows us to analytically compute the forward transition to any positive time $t$:
\begin{equation}
      p_{t\mid 0}(\mathbf x_t \mid \mathbf x_0)
      = \mathcal{N}\Bigl(\mathbf x_t; \sqrt{\bar\alpha(t)}\mathbf x_0, \bigl(1 - \bar\alpha(t)\bigr)\mathbf I_n\Bigr),\quad\text{where}\quad\bar\alpha(t) := \exp\left(-\int_0^t\beta(\tau)\mathrm{d}\tau\right).\label{eq:def-alpha-bar}
\end{equation}
Observe that when $t = 0$, the integral in \Cref{eq:def-alpha-bar} is zero, hence $\bar \alpha (0) = 1$. Additionally, assuming that $\beta(t)$ is chosen such that the integral in \Cref{eq:def-alpha-bar} becomes arbitrarily large for large $t$, we have $\bar\alpha(t) \to 0$ as $t \to \infty$. Thus, the forward transition kernel $p_{t \mid 0}(\mathbf x_t\mid \mathbf x_0)$ converges to a standard normal $\mathcal{N}(\mathbf x_t ; \mathbf 0, \mathbf I_n)$ as $t\to \infty$, for all $\mathbf x_0$.  Furthermore, by \citet{anderson1982reversetime} the reverse diffusion process is analytically computable:
\begin{equation}
      \mathrm{d}\mathbf x_t = \left[-\tfrac12 \beta(t)\mathbf x_t - \beta(t)\nabla_{\mathbf x_t} \log p_t(\mathbf x_t)\right] \mathrm{d}t + \sqrt{\beta(t)}\, \mathrm{d}\bar{\mathbf{w}}_t,
      \label{eq:backward-process}
\end{equation}
where $\bar{\mathbf{w}}_t$ is a standard reverse-time Wiener process. The reverse process maps the unit Gaussian (at $t \to \infty$) to $p_{\mathrm{pr}}$ (at $t = 0$), thus given access to the score function $\nabla_{\mathbf x_t} \log p_t(\mathbf x_t)$ we can (i) draw Gaussian noise i.i.d.~to initialize $\mathbf x_T$ at a large time $T$, and (ii) numerically simulate the reverse process to time $t = 0$ (for example, via the Euler-Maruyama scheme) to produce samples from $p_{\mathrm{pr}}$. 

Importantly, while diffusion models let us draw samples from complex distributions, they do not give us access to the distribution (density) itself. This distinguishes diffusion models from normalizing flows \citep{rezende2015variational}, which employ (restrictive) reversible and differentiable flow architectures that provide density access via change of variables. However, the utility of diffusion models is that score-based methods avoid computation of normalization constants by learning the score function via score matching \citep{hyvarinen2005estimation,vincent2011connection}.

\subsection{Diffusion posterior sampling via moment matching}

The most direct approach for applying a diffusion model to sample from the posterior distribution $p(\mathbf x \mid \mathbf y)$ introduced in \Cref{eq:posterior} would be to drive the reverse process in \Cref{eq:backward-process} using the posterior score
$\nabla_{\mathbf x_t} \log p_{t \mid \mathbf y}(\mathbf x_t \mid \mathbf y)$ instead of the prior score $\nabla_{\mathbf x_t} \log p_{t}(\mathbf x_t)$:
\begin{equation}
      \mathrm{d}\mathbf x_t = \left[-\tfrac12 \beta(t)\mathbf x_t - \beta(t)\nabla_{\mathbf x_t} \log p_{t \mid \mathbf y}(\mathbf x_t \mid \mathbf y)\right] \mathrm{d}t + \sqrt{\beta(t)}\, \mathrm{d}\bar{\mathbf{w}}_t.
      \label{eq:posterior-backward-process}
\end{equation}
A key merit of this approach is that, by our Bayes' rule categorization of the posterior in \Cref{eq:posterior}, evaluating the posterior score requires knowledge of only the likelihood score and prior score, but not of the normalization constant $Z(\mathbf y)$:
\begin{equation}
\begin{split}
      \nabla_{\mathbf x_t} \log p_{t \mid \mathbf y}(\mathbf x_t \mid \mathbf y) &= \nabla_{\mathbf x_t} \log p_{\mathbf y \mid t}(\mathbf y \mid \mathbf x_t) + \nabla_{\mathbf x_t} \log p_t(\mathbf x_t) \label{eq:score-identity}  - \underbrace{\nabla_{\mathbf x_t} \log Z(\mathbf y)}_{0}.
\end{split}
\end{equation}
A pre-trained prior score $\nabla_{\mathbf x_t} \log p_t(\mathbf x_t)$ in \Cref{eq:score-identity} is often available from a diffusion model trained to generate samples from $p_{\mathrm{pr}}$, for example high-resolution natural images. The unknown denoising likelihood $p_{\mathbf y \mid t}(\mathbf y \mid \mathbf x_t)$ can be written in marginal form
\begin{equation}
    p_{\mathbf y \mid t}(\mathbf y \mid \mathbf x_t)
    = \int_{\mathcal X} \underbrace{p_{\mathbf y\mid 0, t}(\mathbf y \mid \mathbf x_0, \mathbf x_t)}_{\text{Markov property}} p_{0\mid t}(\mathbf x_0 \mid \mathbf x_t) \mathrm{d} \mathbf x_0
    = \int_{\mathcal X} p_{\mathbf y \mid 0}(\mathbf y \mid \mathbf x_0)\, p_{0 \mid t}(\mathbf x_0 \mid \mathbf x_t)\mathrm{d} \mathbf x_0 , \label{eq:true-marginal}
\end{equation}
but cannot be evaluated analytically in general as (i) the denoiser $p_{0 \mid t}(\mathbf x_0 \mid \mathbf x_t)$ is not typically known, and (ii) the integral does not typically analytically simplify, even given knowledge of $p_{0 \mid t}$. Furthermore, estimating the latter integral in \Cref{eq:true-marginal} via Monte Carlo integration is prohibitively expensive in high dimensions because it requires drawing many samples from the denoiser, each of which requires running the reverse diffusion process from time $t$ to time $0$.

Moment-matching methods \citep{chung2023diffusion,song2023pseudoinverseguided,boys2024tweedie} approximate the marginal integral by approximating the denoiser $p_{0 \mid t}(\mathbf x_0 \mid \mathbf x_t)$ in a way that (i) allows the integral to simplify analytically, (ii) approximates the denoiser well, and (iii) is accessible given only knowledge of the unconditional marginal score $\nabla_{\mathbf x_t} \log p_t(\mathbf x_t)$ and noise schedule $\bar\alpha(t)$. The core differences between moment-matching methods is what specific denoiser approximation is chosen, and what assumptions are made about the forward model such that the approximated marginal analytically simplifies.

The name ``moment-matching'' stems from the technique of picking a simple distribution (e.g., a single Dirac delta or a unimodal Gaussian) whose first and/or second moments are equal to the moments of the denoiser. Moment-matching techniques are tractable because the denoiser's moments are computable using the (unconditional) marginal score and noise schedule via Tweedie's formula \citep{robbins1956empirical} since the forward transition $p_{t \mid 0}$ is an exponential family, requiring no additional knowledge or assumptions on the prior or denoiser's distribution class.

\begin{proposition}[due to \citep{boys2024tweedie}]
    Let $\mathbf m_{0 \mid t}(\mathbf x_t)$ and $\mathbf C_{0 \mid t}(\mathbf x_t)$ denote the mean and covariance of the denoiser $p_{0 \mid t}(\cdot \mid \mathbf x_t)$. Given knowledge of marginal score $\nabla_{\mathbf x_t} \log p_t(\mathbf x_t)$ and noise schedule $\bar\alpha(t)$, the mean and covariance are computable as
    \begin{equation}
        \mathbf m_{0 \mid t}(\mathbf x_t) := \mathbb{E}[\mathbf x_0 \mid \mathbf x_t] = \frac{1}{\sqrt{\bar \alpha(t)}}
      \Bigl(\mathbf x_t + \bigl(1 - \bar\alpha(t)\bigr)\nabla_{\mathbf x_t} \log p_t(\mathbf x_t)\Bigr),
      \label{eq:tweedies-formula}
    \end{equation}
    \begin{equation}
    \begin{split}
         \mathbf C_{0\mid t}(\mathbf x_t) 
        := \operatorname{Cov}(\mathbf x_0 \mid \mathbf x_t)
        = \tfrac{1 - \bar\alpha(t)}{\bar\alpha(t)} (\mathbf I_n + (1 - \bar\alpha(t))\nabla^2_{\mathbf x_t}\log p_t(\mathbf x_t))
        = \tfrac{1 - \bar\alpha(t)}{\sqrt{\bar\alpha(t)}} J_{\mathbf m_{0\mid t}}(\mathbf x_t).
        \label{eq:tweedies-covariance}
    \end{split}
    \end{equation}
\end{proposition}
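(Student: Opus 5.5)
We derive both identities by differentiating the marginal $p_t(\mathbf x_t) = \int_{\mathcal X} p_{t\mid 0}(\mathbf x_t \mid \mathbf x_0)\, p_{\mathrm{pr}}(\mathbf x_0)\,\mathrm d\mathbf x_0$ under the integral sign. The only input is the Gaussian transition kernel of \Cref{eq:def-alpha-bar}. Write $a := \sqrt{\bar\alpha(t)}$ and $s := 1-\bar\alpha(t)$, so that $p_{t\mid 0}(\mathbf x_t\mid\mathbf x_0) \propto \exp(-\|\mathbf x_t - a\mathbf x_0\|^2/(2s))$. Then
\begin{equation*}
\nabla_{\mathbf x_t} p_{t\mid 0}(\mathbf x_t\mid\mathbf x_0) = -\tfrac{1}{s}(\mathbf x_t - a\mathbf x_0)\, p_{t\mid 0}(\mathbf x_t\mid\mathbf x_0).
\end{equation*}
Dividing by $p_t(\mathbf x_t)$ and using Bayes' rule, $p_{0\mid t}(\mathbf x_0\mid\mathbf x_t) = p_{t\mid 0}(\mathbf x_t\mid\mathbf x_0)\,p_{\mathrm{pr}}(\mathbf x_0)/p_t(\mathbf x_t)$, we obtain
\begin{equation*}
\nabla_{\mathbf x_t}\log p_t(\mathbf x_t) = -\tfrac1s\bigl(\mathbf x_t - a\,\mathbb E[\mathbf x_0\mid\mathbf x_t]\bigr).
\end{equation*}
Solving for the conditional mean gives \Cref{eq:tweedies-formula}.

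For the covariance, we differentiate the mean identity a second time. The cleanest route is to write $\mathbf m_{0\mid t}(\mathbf x_t) = \int \mathbf x_0\, p_{0\mid t}(\mathbf x_0\mid\mathbf x_t)\,\mathrm d\mathbf x_0$ and differentiate the denoiser density. Its log-gradient is
\begin{equation*}
\nabla_{\mathbf x_t}\log p_{0\mid t}(\mathbf x_0\mid\mathbf x_t) = -\tfrac1s(\mathbf x_t - a\mathbf x_0) - \nabla_{\mathbf x_t}\log p_t(\mathbf x_t) = \tfrac{a}{s}\bigl(\mathbf x_0 - \mathbf m_{0\mid t}(\mathbf x_t)\bigr),
\end{equation*}
where the second equality uses the first identity. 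Hence
\begin{equation*}
J_{\mathbf m_{0\mid t}}(\mathbf x_t) = \int \mathbf x_0\,\tfrac{a}{s}\bigl(\mathbf x_0 - \mathbf m_{0\mid t}\bigr)^\top p_{0\mid t}\,\mathrm d\mathbf x_0 = \tfrac{a}{s}\,\mathbf C_{0\mid t}(\mathbf x_t).
\end{equation*}
Rearranging gives $\mathbf C_{0\mid t} = \tfrac{s}{a}J_{\mathbf m_{0\mid t}}$, which is the last expression in \Cref{eq:tweedies-covariance}. Differentiating \Cref{eq:tweedies-formula} directly gives $J_{\mathbf m_{0\mid t}} = \tfrac1a(\mathbf I_n + s\nabla^2_{\mathbf x_t}\log p_t)$. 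Substituting this yields the middle expression $\tfrac{s}{a^2}(\mathbf I_n + s\nabla^2\log p_t)$, with $a^2=\bar\alpha(t)$.

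The only genuine obstacle is justifying the interchange of differentiation and integration, twice. I would handle it by dominated convergence. The Gaussian kernel and its first two $\mathbf x_t$-derivatives are bounded, on any compact neighborhood of $\mathbf x_t$, by a polynomial in $\mathbf x_0$ times $\exp(-c\|\mathbf x_0\|^2)$ for $t>0$. For the mean and covariance terms to be integrable we additionally need $p_{\mathrm{pr}}$ to have finite second moments, and in fact the Gaussian factor makes even that unnecessary locally. We also use that $p_t>0$ everywhere, which holds for $t>0$, so that $\log p_t$ and the division above are well defined. I would state these mild assumptions ($t>0$, $p_{\mathrm{pr}}$ a probability measure) explicitly, and note that they hold for any prior, including the empirical distributions used later in the paper.
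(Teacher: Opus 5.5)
Your proof is correct and is essentially the standard Tweedie derivation that the paper cites instead of proving (Appendix A.1 of \citet{boys2024tweedie}), which rests on the exponential-family structure of $p_{t\mid 0}$ in $\mathbf x_t$; your key identity $\nabla_{\mathbf x_t}\log p_{0\mid t}(\mathbf x_0\mid\mathbf x_t) = \tfrac{\sqrt{\bar\alpha(t)}}{1-\bar\alpha(t)}\bigl(\mathbf x_0 - \mathbf m_{0\mid t}(\mathbf x_t)\bigr)$ is exactly that structure, and for an empirical prior it reduces to the paper's \Cref{lem:weight-gradient}. Your dominated-convergence justification is also sound and needs only $t>0$ (so that $0<\bar\alpha(t)<1$), with no moment assumption on $p_{\mathrm{pr}}$.
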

\begin{proof}
    See Appendix A.1 of \citet{boys2024tweedie}.
\end{proof}
\begin{remark}
    We adopt the $\mathbf m_{0 \mid t}$ and $\mathbf C_{0 \mid t}$ notation from \citet{boys2024tweedie} for ease of comparison. However, we emphasize the dependence of $\mathbf m_{0 \mid t}(\mathbf x_t)$ and $\mathbf C_{0 \mid t}(\mathbf x_t)$ on $\mathbf x_t$ by including the state as an argument of the estimate.
\end{remark}

\paragraph{Dirac approximations}
\citet{chung2023diffusion} (DPS) apply a first-moment denoiser approximation, replacing the true denoiser $p_{0 \mid t}(\mathbf x_0 \mid \mathbf x_t)$ with a Dirac distribution at the denoiser mean $\mathbf m_{0 \mid t}(\mathbf x_t)$:
\begin{align}
      \int_{\mathcal X} p_{\mathbf y \mid 0}(\mathbf y \mid \mathbf x_0)\, p_{0 \mid t}(\mathbf x_0 \mid \mathbf x_t)\mathrm{d} \mathbf x_0 
      &\approx \int_{\mathcal X} p_{\mathbf y \mid 0}(\mathbf y \mid \mathbf x_0)\, \delta(\mathbf x_0 - \mathbf m_{0 \mid t}(\mathbf x_t))\mathrm{d} \mathbf x_0 . \label{eq:chung-approximation} 
\end{align}
Although the integral in \Cref{eq:chung-approximation} simplifies for all measurement operators and noise models (so long as $p_{\mathbf y \mid 0}(\mathbf y \mid \cdot)$ is measurable and finite), the resulting likelihood adopts the distribution class of the measurement noise. 
For example, when the measurement noise $\bm \eta \sim \mathcal{N}(\mathbf 0, \bm \Sigma_{\mathbf y})$ is additive Gaussian, \Cref{eq:chung-approximation} is a unimodal Gaussian
\begin{align}
    \int_{\mathcal X} p_{\mathbf y \mid 0}(\mathbf y \mid \mathbf x_0)\, \delta(\mathbf x_0 - \mathbf m_{0 \mid t}(\mathbf x_t))\mathrm{d} \mathbf x_0
    &= \int_{\mathcal X} \mathcal{N}\bigl(\mathbf y; \mathcal{A}(\mathbf x_0), \bm \Sigma_{\mathbf y}\bigr)\, \delta(\mathbf x_0 - \mathbf m_{0 \mid t}(\mathbf x_t))\mathrm{d} \mathbf x_0  \\
    &= \mathcal{N}\bigl(\mathbf y; \mathcal{A}(\mathbf m_{0 \mid t}(\mathbf x_t)), \bm \Sigma_{\mathbf y}\bigr) \label{eq:chung-gaussian}
\end{align}
whose score is computable via the chain rule and \Cref{eq:tweedies-covariance}:
\begin{align}
    \nabla_{\mathbf x_t} \log \mathcal{N}\bigl(\mathbf y; \mathcal{A}(\mathbf m_{0 \mid t}(\mathbf x_t)), \bm \Sigma_{\mathbf y}\bigr)
    &= -\nabla_{\mathbf x_t}\left\|\mathbf y - \mathcal{A}(\mathbf m_{0 \mid t}(\mathbf x_t))\right\|^2_{\bm\Sigma_{\mathbf y}^{-1}} \\
    &= J_{\mathbf m_{0\mid t}}(\mathbf x_t)^\top J_{\mathcal{A}}(\mathbf m_{0 \mid t}(\mathbf x_t))^\top\bm \Sigma_{\mathbf y}^{-1} (\mathbf y - \mathcal{A}(\mathbf m_{0 \mid t}(\mathbf x_t))) \\
    &= \tfrac{\sqrt{\bar\alpha(t)}}{1 - \bar\alpha(t)}\mathbf C_{0 \mid t}(\mathbf x_t) J_{\mathcal{A}}(\mathbf m_{0 \mid t}(\mathbf x_t))^\top\bm \Sigma_{\mathbf y}^{-1} (\mathbf y - \mathcal{A}(\mathbf m_{0 \mid t}(\mathbf x_t))). \label{eq:chung-score}
\end{align}

\paragraph{Gaussian approximations}
Gaussian denoiser approximations \citep{song2023pseudoinverseguided,boys2024tweedie} improve on the first-moment approximation by additionally allowing for denoiser covariance approximation:
\begin{equation}\label{eq:gaussian-marginal}
    \int_{\mathcal{X}} p_{\mathbf y \mid 0}(\mathbf y \mid \mathbf x_0) p_{0 \mid t}(\mathbf x_0 \mid \mathbf x_t) \mathrm{d}\mathbf x_0 \approx \int_{\mathcal{X}} p_{\mathbf y \mid 0}(\mathbf y \mid \mathbf x_0)\, \mathcal{N}\bigl(\mathbf x_0; \mathbf m_{0 \mid t}(\mathbf x_t), \bm \Sigma_t(\mathbf x_t) \bigr) \mathrm{d}\mathbf x_0,
\end{equation}
where $\bm \Sigma_t(\mathbf x_t)$ denotes some user-chosen, time- and/or state-dependent covariance estimate. While Gaussian approximations can encode strictly more information than first-moment Dirac approximations, their applicability is more limited. The resulting marginal integral in \Cref{eq:gaussian-marginal} only simplifies when the measurement likelihood $p_{y \mid 0}$ is Gaussian affine, i.e. when the measurement operator $\mathcal{A}(\mathbf x_0) = \mathbf A \mathbf x_0 + \mathbf b$ is affine and the measurement noise $\bm \eta \sim \mathcal{N}(\mathbf 0, \bm \Sigma_{\mathbf y})$ is additive Gaussian:
\begin{equation}
    \int_{\mathcal{X}} p_{\mathbf y \mid 0}(\mathbf y \mid \mathbf x_0)\, \mathcal{N}\bigl(\mathbf x_0; \mathbf m_{0 \mid t}(\mathbf x_t), \bm \Sigma_t(\mathbf x_t) \bigr) \mathrm{d}\mathbf x_0
    = \mathcal{N}\bigl(\mathbf y; \mathbf A \mathbf m_{0 \mid t}(\mathbf x_t) {+} \mathbf b, \bm \Sigma_{\mathbf y} {+} \mathbf A \bm \Sigma_t(\mathbf x_t) \mathbf A^\top \bigr). \label{eq:gaussian-approximation}
\end{equation}
Similar to \Cref{eq:chung-score}, the resulting likelihood score can be computed via chain rule and \Cref{eq:tweedies-covariance} as:
\begin{align}
    \nabla_{\mathbf x_t} \log p_{\mathbf y \mid t}(\mathbf y \mid \mathbf x_t) 
    &= J_{\mathbf m_{0\mid t}}(\mathbf x_t)^\top J_{\mathbf A}(\mathbf m_{0 \mid t}(\mathbf x_t))^\top (\bm \Sigma_{\mathbf y} + \mathbf A \bm \Sigma_t(\mathbf x_t) \mathbf A^\top)^{-1}(\mathbf y - \mathbf A \mathbf m_{0 \mid t} - \mathbf b).\\
    &= \tfrac{\sqrt{\bar\alpha(t)}}{1 - \bar\alpha(t)}\mathbf C_{0 \mid t}(\mathbf x_t) \mathbf A^\top (\bm \Sigma_{\mathbf y} + \mathbf A \bm \Sigma_t(\mathbf x_t) \mathbf A^\top)^{-1}(\mathbf y - \mathbf A \mathbf m_{0 \mid t} - \mathbf b).
    \label{eq:linear-likelihood-score}
\end{align}

\citet{song2023pseudoinverseguided} inflate the covariance of the denoiser approximation by a time-dependent scalar factor $\bm \Sigma_t(\mathbf x_t) = r_t^2\mathbf I_n$, which is chosen offline using known information about likely prior data $\mathbf x_0$ and the noise schedule $\bar\alpha(t)$ \citep[see][Appendix A.3 for discussion on choosing $r_t$]{song2023pseudoinverseguided}:
\begin{equation}\label{eq:song-approximation}
    p_{\mathbf y \mid t}(\mathbf y \mid \mathbf x_t) = \mathcal{N}\bigl(\mathbf y; \mathbf A \mathbf m_{0 \mid t}(\mathbf x_t) + \mathbf b, \bm \Sigma_{\mathbf y} + r_t^2\mathbf A \mathbf A^\top \bigr).
\end{equation}
\begin{remark}
    \citet{song2023pseudoinverseguided} extend their algorithm to nonlinear $\mathcal{A}$ when the measurement process is noiseless (i.e., $\bm \eta \equiv \mathbf 0$) by finding a generalized inverse $\mathcal{A}^\dagger$ to the measurement operator satisfying $(\mathcal{A} \circ \mathcal{A}^\dagger \circ \mathcal{A})(\mathbf x) = \mathcal{A}(\mathbf x)$. However, because we consider noisy inverse problems, our analysis of \citet{song2023pseudoinverseguided} is restricted to the case of affine measurement operators.
\end{remark}

\citet{boys2024tweedie} select the Gaussian approximation closest to the denoiser in KL divergence, which is given
by setting $\bm\Sigma_{t}$ equal to the denoiser's true covariance $\mathbf C_{0\mid t}$ \citep[see][Proposition 2]{boys2024tweedie}, which can again be computed via Tweedie's formula \citep[see][Appendix A.1]{boys2024tweedie}:
\begin{equation}\label{eq:boys-approximation}
    p_{\mathbf y \mid t}(\mathbf y \mid \mathbf x_t) = \mathcal{N}\bigl(\mathbf y; \mathbf A \mathbf m_{0 \mid t}(\mathbf x_t) + \mathbf b, \bm \Sigma_{\mathbf y} + \mathbf A \mathbf C_{0\mid t}(\mathbf x_t) \mathbf A^\top \bigr).
\end{equation}
In the case where the prior $p_{\mathrm{pr}}$ is Gaussian, the denoiser $p_{0\mid t}$ is truly Gaussian, and so \Cref{eq:boys-approximation} is exact. If we compare \Cref{eq:boys-approximation} to the approximation of \citet{chung2023diffusion} in \Cref{eq:chung-gaussian}, we see that the latter approach is missing the additive, time-dependent $\mathbf A \mathbf C_{0\mid t}(\mathbf x_t) \mathbf A^\top$ covariance factor, and instead adopts the measurement noise covariance $\bm \Sigma_{\mathbf y}$ across all denoiser times (and hence noise levels).

Again, while the Gaussian approximations encode strictly more information about the denoising process compared to the first-moment approximation of \citet{chung2023diffusion} (\Cref{eq:chung-approximation}), their usage is restricted to measurement models (\Cref{eq:forward-model}) with affine measurement operator $\mathcal{A}(\mathbf x) = \mathbf A \mathbf x + \mathbf b$ and additive Gaussian measurement noise $\bm \eta \sim \mathcal{N}(\mathbf 0, \bm \Sigma_{\mathbf y})$ with \emph{isotropic} covariance $\bm \Sigma_{\mathbf y} = \sigma_{\mathbf y}^2 \mathbf I_m$.
Although the Dirac approximation \citep{chung2023diffusion} can be used when the measurement noise $\bm \eta$ is non-Gaussian and/or data-dependent, we limit our analysis to the additive Gaussian noise case for sake of comparison. There may be additional impacts of the first-moment approximation when the noise is non-Gaussian and/or data-dependent, the analysis of which we leave for future work.

\subsection{Practical diffusion posterior sampling}

Although the Dirac and Gaussian denoiser approximations lead to closed form likelihoods and likelihood scores, additional approximations are often introduced in practice to either reduce computational cost or improve reconstruction quality. For example, \citet{chung2023diffusion} modify the score derived from their Dirac approximation 
\begin{equation}
    \nabla_{\mathbf x_t} \log \mathcal{N}\bigl(\mathbf y; \mathcal{A}(\mathbf m_{0 \mid t}(\mathbf x_t)), \bm \Sigma_{\mathbf y}\bigr)
    = - \frac{1}{\sigma_{\mathbf y}^2}\nabla_{\mathbf x_t} \|\mathbf y - \mathcal{A}(\mathbf m_{0 \mid t}(\mathbf x_t))\|_2^2,
\end{equation} 
by replacing the constant, data-independent measurement noise prefactor $1 / \sigma_{\mathbf y}^2$ with a data-dependent step size $\rho(\mathbf x_t)$, most commonly a scalar $\zeta$ divided by the measurement residual \citep[see][page 6, footnote 5]{chung2023diffusion}:
\begin{align}
    \nabla_{\mathbf x_t} \log \mathcal{N}\bigl(\mathbf y; \mathcal{A}(\mathbf m_{0 \mid t}(\mathbf x_t)), \bm \Sigma_{\mathbf y}\bigr) 
    &\approx -\rho(\mathbf x_t) \nabla_{\mathbf x_t} \|\mathbf y - \mathcal{A}(\mathbf m_{0 \mid t}(\mathbf x_t))\|_2^2, \\
    &= -\frac{\zeta}{\|\mathbf y - \mathcal{A}(\mathbf m_{0 \mid t}(\mathbf x_t))\|_2} \nabla_{\mathbf x_t} \|\mathbf y - \mathcal{A}(\mathbf m_{0 \mid t}(\mathbf x_t))\|_2^2. \label{eq:modified-chung-score}
\end{align}
This is equivalent to, in addition to the Dirac approximation of the denoiser, approximating the Gaussian measurement likelihood $p_{\mathbf y \mid 0}(\mathbf y \mid \mathbf x_0) = \mathcal{N}(\mathbf y; \mathcal{A}(\mathbf x_0), \sigma_{\mathbf y}^2 \mathbf I_m)$ with a Laplace distribution $p_{\mathbf y \mid 0}(\mathbf y \mid \mathbf x_0) \propto \exp\bigl(-2\zeta \|y - \mathcal{A}(\mathbf x_0)\|_2\bigr)$.
The authors state that the scalar $\zeta$ is a problem-dependent hyperparameter which must be tuned, with different $\zeta$ values producing different reconstructions \citep[see][Figure 8, Appendix C.2, and Appendix D.1]{chung2023diffusion}. Additional works have studied the effect that the choice of $\zeta$ has on reconstructed image quality \citep{boys2024tweedie}. The details of our $\zeta$ tuning can be found in \Cref{sec:zeta-tuning}.

\citet{song2023pseudoinverseguided} primarily demonstrate their method for noiseless inverse problems. Among noisy inverse problems, \citet{song2023pseudoinverseguided} solely consider measurement operators admitting matrix representations $\mathbf A \in \mathbb{R}^{m \times n}$ with orthonormal rows $\mathbf A \mathbf A^\top = \mathbf I_m$, and hence are precisely the class of measurement operators where the matrix inversion necessary for evaluating the likelihood score in \Cref{eq:linear-likelihood-score} reduces to scalar division:
\begin{equation}
    (\bm \Sigma_{\mathbf y} + \mathbf A \bm \Sigma_t(\mathbf x_t) \mathbf A^\top)^{-1}
    = (\sigma_{\mathbf y}^2 \mathbf I_m + r_t^2\underbrace{\mathbf A \mathbf A^\top}_{\mathbf I_m})^{-1}
    = (\sigma_{\mathbf y}^2 + r_t^2)^{-1}.
\end{equation}
\citet{song2023pseudoinverseguided} do not describe the computational cost of their method for high-dimensional, noisy inverse problems where $\mathbf A \mathbf A^\top \neq \mathbf I_m$,  or how this inverse is tractably computed or approximated. 

\citet{boys2024tweedie} in general aim to use the full $\mathbf A \mathbf C_{0 \mid t}(\mathbf x_t) \mathbf A^\top$ despite the prohibitive cost of (i) repeatedly assembling this term for each $\mathbf x_t$ and (ii) inverting the full likelihood covariance to compute the likelihood score. 
One option presented by \citet{boys2024tweedie} employs the conjugate gradient (CG) method previously applied by \citet{rozet2024learning} for non-sparse $\mathcal{A}$, for example accelerated MRI \citep[][Appendix E.1]{boys2024tweedie}. Alternatively, the covariance can be heuristically approximated to decrease computational cost, namely by decreasing the necessary number of Jacobian--vector products per iteration. For example, the full denoiser covariance may be replaced by its diagonal $\mathbf C_{0 \mid t}(\mathbf x_t) \approx \operatorname{diag}\bigl(\mathbf C_{0 \mid t}(\mathbf x_t)\bigr)$, in effect assuming that the dimensions of the denoiser $p_{0 \mid t}$ are independent. For specific imaging problems $\mathbf A$ such as inpainting, the diagonal may be replaced by the Jacobian's row sum
\citep[see][Section 3.4]{boys2024tweedie}.

Despite their approximations, these moment-matching have been widely and effectively used across a wide range of inverse problems, both in imaging \citep{heckel2025deep} and in physical sciences \citep{yao2026guided, wang2026fundiff}. However, they have been shown to suffer unexpected failures in certain imaging problems \citep{xu2024provably}, and the effects of their approximations remain understudied, such that these failures are difficult to predict.
\citet{chung2023diffusion} provide an error bound on the approximated likelihood \emph{density}, but (i) do not identify when this error bound is small, and (ii) do not bound the error in the likelihood \emph{score}, which is what drives posterior sampling. Moreover, even given understanding of approximation error in the likelihood score, it is unclear how this error propagates to the posterior samples, which are iteratively influenced by both the likelihood score and the prior score.

To understand the impact of these popular approximations on the posterior samples, we need a way to compute the exact posterior $p_{0 \mid \mathbf y}(\mathbf x_0 \mid \mathbf y)$. To understand how and why particular errors manifest and arise, we also need a way to compute the intermediate-timestep posterior $p_{t \mid \mathbf y}(\mathbf x_t \mid \mathbf y)$. We show that both of these quantities are computable in the finite-sample regime, which we describe next.

\section{Finite-sample diffusion posterior sampling}
\label{sec:finite-sample-regime}
In the \emph{finite-sample regime}, we assume that the prior distribution $p_{\mathrm{pr}}$ is a discrete measure comprised of $N$ i.i.d. samples, rather than a continuous distribution \emph{learned} from these samples. Studying posterior sampling in this regime has two key advantages: (i) as we show next, it allows us to compute the posterior analytically at any timestep $t$, and (ii) it allows us to study the approximation errors induced by the likelihood approximations of \citep{chung2023diffusion, song2023pseudoinverseguided, boys2024tweedie}.

Let $p_{\mathrm{pr}}$ denote the prior distribution from which we are provided i.i.d.\ samples $\{\mathbf x^{(i)}\}_{i=1}^{N} \sim p_{\mathrm{pr}}$, whose \emph{empirical distribution} we denote as
\begin{equation}
      \label{eq:def-empirical-distribution}
      p_{\mathrm{pr}}^N(\mathbf x_t) := \frac{1}{N} \sum_{i = 1}^{N} \delta(\mathbf x_t -\mathbf x^{(i)}).
\end{equation}
The utility of studying the empirical distribution is that it gives us a closed form approximation for the prior distribution $p_{\mathrm{pr}}^N \approx p_{\mathrm{pr}}$ whose positive-time marginals $p_{t}^N(\mathbf x_t)$ are computable by linearity of the VP-SDE dynamics (\Cref{eq:vpsde}). By Bayes' rule, this allows us to analytically compute the denoiser $p_{0 \mid t}^N(\mathbf x_0 \mid \mathbf x_t)$, which is inaccessible in the density setting, as stated in \Cref{lem:finite-sample-diffusion}.
\begin{proposition}[finite-sample diffusion]
      \label[proposition]{lem:finite-sample-diffusion}
      The marginal distribution $p_t^N$ at time $t \geq 0$ of the variance-preserving SDE defined in \Cref{eq:vpsde} acting on empirical distribution $p_{\mathrm{pr}}^N$ is a Gaussian mixture
      \begin{equation*}
            p_t^N(\mathbf x_t) = \frac{1}{N}\sum_{i = 1}^{N} \mathcal{N}
            \Bigl(\mathbf x_t; \sqrt{\bar\alpha(t)}\mathbf x^{(i)}, \bigl(1 - \bar \alpha(t)\bigr)\mathbf I_n\Bigr).
      \end{equation*}
      Furthermore, the denoising distribution $p_{0\mid t}^N$ is a discrete measure
      \begin{equation*}
            p_{0\mid t}^N(\mathbf x_0 \mid \mathbf x_t) = \sum_{i = 1}^{N} w_i(\mathbf x_t, t) \delta(\mathbf x_0 - \mathbf x^{(i)}), \;\,\,
            w_i(\mathbf x_t, t) :=
            \frac{\mathcal{N}\Bigl(\mathbf x_t; \sqrt{\bar\alpha(t)}\mathbf x^{(i)}, \bigl(1 - \bar \alpha(t)\bigr)\mathbf I_n\Bigr)}{\sum_{j = 1}^{N}\mathcal{N}\Bigl(\mathbf x_t; \sqrt{\bar\alpha(t)}\mathbf x^{(j)}, \bigl(1 - \bar \alpha(t)\bigr)\mathbf I_n\Bigr)}.
      \end{equation*}
\end{proposition}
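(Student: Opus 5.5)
The plan has two steps: first compute the marginal by pushing the forward kernel through the empirical prior, then obtain the denoiser from Bayes' rule for the joint law of $(\mathbf x_0,\mathbf x_t)$.

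\textbf{Marginal.} Write the marginal as the integral of the forward transition kernel against the initial law,
\[
p_t^N(\mathbf x_t)=\int p_{t\mid 0}(\mathbf x_t\mid \mathbf x_0)\,p_{\mathrm{pr}}^N(\mathrm d\mathbf x_0).
\]
The VP-SDE in \Cref{eq:vpsde} is linear in the state with state-independent diffusion. Hence the transition kernel in \Cref{eq:def-alpha-bar} holds for every deterministic initial condition, and therefore for each atom $\mathbf x^{(i)}$. Substituting the empirical measure from \Cref{eq:def-empirical-distribution} and applying the sifting property of the Dirac measure turns the integral into the stated average of $N$ Gaussians with means $\sqrt{\bar\alpha(t)}\mathbf x^{(i)}$ and covariance $(1-\bar\alpha(t))\mathbf I_n$. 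At $t=0$ these Gaussians degenerate to Diracs, since $\bar\alpha(0)=1$. The formula then reads as the empirical measure itself, interpreted as a measure rather than a density.

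\textbf{Denoiser.} For $t>0$, the joint law of $(\mathbf x_0,\mathbf x_t)$ is $p_{\mathrm{pr}}^N(\mathrm d\mathbf x_0)\,p_{t\mid 0}(\mathbf x_t\mid\mathbf x_0)\,\mathrm d\mathbf x_t$. Because $\mathbf x_0$ is supported on the finite set $\{\mathbf x^{(i)}\}$, Bayes' rule gives the conditional mass of each atom:
\[
P(\mathbf x_0=\mathbf x^{(i)}\mid \mathbf x_t)=\frac{\tfrac1N\,p_{t\mid 0}(\mathbf x_t\mid\mathbf x^{(i)})}{p_t^N(\mathbf x_t)}.
\]
The factors $1/N$ cancel between numerator and denominator, which yields exactly $w_i(\mathbf x_t,t)$. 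The weights are nonnegative and sum to one, and the denominator is strictly positive because each summand is a nondegenerate Gaussian density. So the denoiser is a well-defined discrete probability measure. If some samples coincide, their atoms simply merge and the corresponding weights add, so the stated formula remains valid.

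\textbf{Main obstacle.} The only delicate point is rigor in mixing densities with Dirac measures. This is cleanest if Bayes' rule is phrased as a disintegration: check the defining identity
\[
\int_B \sum_i w_i(\mathbf x_t,t)\,\mathbf 1_A(\mathbf x^{(i)})\,p_t^N(\mathbf x_t)\,\mathrm d\mathbf x_t = P(\mathbf x_0\in A,\ \mathbf x_t\in B)
\]
for all Borel sets $A,B$. This identity follows by direct substitution of the weights and the mixture formula for $p_t^N$. The case $t=0$ is handled separately and trivially: there $\mathbf x_0=\mathbf x_t$ almost surely.
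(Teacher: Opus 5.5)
Your proposal is correct and takes essentially the same route as the paper: the marginal comes from integrating the Gaussian forward kernel against the atomic measure $p_{\mathrm{pr}}^N$, and the denoiser is the Bayes reweighting of the atoms (the paper packages these as its Gaussian--discrete marginal and posterior lemmas with $\mathbf A = \sqrt{\bar\alpha(t)}\mathbf I_n$ and covariance $(1-\bar\alpha(t))\mathbf I_n$). Your disintegration check plays the role of the paper's Radon--Nikodym argument, and your separate handling of $t=0$ and of coinciding samples is extra care the paper does not provide, since its lemmas assume a positive definite covariance, which fails at $t=0$.
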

\begin{proof}
      See \Cref{proof:finite-sample-marginal}.
\end{proof}

First, recall that $\bar\alpha(t) \searrow 0$ as $t \to \infty$, thus the mixture weights $w_i(\mathbf x, t)$ converge to uniform for all $\mathbf x \in \mathbb{R}^n$ for sufficiently large $t$. Second, observe that the denoiser $p^N_{0 \mid t}(\mathbf x_0 \mid \mathbf x_t)$ assigns probability zero to all $\mathbf x_0$ outside our dataset, and therefore does not generalize without additional modifications, such as by smoothing \citep{scarvelis2025closedform}.
However, the discrete structure of the denoiser is what makes the finite-sample regime useful for analyzing posterior sampling.

\subsection{Posterior sampling in the finite-sample regime}

In the standard diffusion posterior sampling setting (\Cref{eq:true-marginal}), we were unable to evaluate the marginal
\begin{equation}
    p_{\mathbf y \mid t}(\mathbf y \mid \mathbf x_t)
    = \int_{\mathcal{X}} \mathcal{N}\left(\mathbf y; \mathcal{A}(\mathbf x_0), \bm\Sigma_{\mathbf y}\right)\, p_{0 \mid t}(\mathbf x_0 \mid \mathbf x_t) \mathrm{d} \mathbf x_0
\end{equation}
due to lack of knowledge about the denoiser $p_{0 \mid t}(\mathbf x_0 \mid \mathbf x_t)$, motivating either (i) first-order Dirac approximations \citep{chung2023diffusion} which fail to capture higher moments or (ii) Gaussian approximations \cite{song2023pseudoinverseguided, boys2024tweedie} which only apply when $\mathcal{A}$ is affine. However, in the finite-sample regime the denoiser is a known discrete measure by \Cref{lem:finite-sample-diffusion}, giving us a closed-form observation likelihood even when $\mathcal{A}$ is nonlinear, as described in \Cref{thm:finite-sample-likelihood}.

\begin{proposition}[finite-sample likelihood]
      \label[proposition]{thm:finite-sample-likelihood}
      Given noised sample $\mathbf x_t$ at time $t$ produced by the variance-preserving SDE defined in \Cref{eq:vpsde} acting on empirical distribution $p_{\mathrm{pr}}^N$,
      the likelihood $p_{\mathbf y \mid t}^N$ of observational data $\mathbf y$ is a Gaussian mixture
      \begin{equation*}
            p_{\mathbf y \mid t}^N(\mathbf y \mid \mathbf x_t) = \sum_{i = 1}^{N} w_i(\mathbf x_t, t)\, \mathcal{N}\Bigl(\mathbf y; \mathcal{A}\bigl(\mathbf x^{(i)}\bigr), \bm\Sigma_{\mathbf y}\Bigr).
      \end{equation*}
\end{proposition}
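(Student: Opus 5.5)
The proof substitutes the finite-sample denoiser of \Cref{lem:finite-sample-diffusion} into the marginal representation \Cref{eq:true-marginal} and integrates against the resulting Dirac measures. The first step is to justify \Cref{eq:true-marginal} in the finite-sample setting. The joint law of $(\mathbf x_0, \mathbf x_t, \mathbf y)$ is built from $\mathbf x_0 \sim p_{\mathrm{pr}}^N$, then $\mathbf x_t \sim p_{t\mid 0}(\cdot \mid \mathbf x_0)$ from \Cref{eq:def-alpha-bar}, and independently $\mathbf y = \mathcal{A}(\mathbf x_0) + \bm\eta$ with $\bm\eta$ independent of the diffusion noise. Hence $\mathbf y$ and $\mathbf x_t$ are conditionally independent given $\mathbf x_0$, which is the Markov property used there, so $p_{\mathbf y\mid 0,t}(\mathbf y\mid \mathbf x_0,\mathbf x_t) = \mathcal{N}(\mathbf y;\mathcal{A}(\mathbf x_0),\bm\Sigma_{\mathbf y})$. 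Conditioning on $\mathbf x_t$ and applying the law of total probability then gives
\begin{equation*}
p^N_{\mathbf y\mid t}(\mathbf y\mid\mathbf x_t) = \int_{\mathcal X} \mathcal{N}\bigl(\mathbf y;\mathcal{A}(\mathbf x_0),\bm\Sigma_{\mathbf y}\bigr)\, p^N_{0\mid t}(\mathrm{d}\mathbf x_0\mid \mathbf x_t).
\end{equation*}
Because $p^N_{0\mid t}$ is discrete, this should be written as an integral against a measure rather than a density.

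The second step plugs in $p^N_{0\mid t}(\cdot\mid\mathbf x_t) = \sum_i w_i(\mathbf x_t,t)\,\delta_{\mathbf x^{(i)}}$ from \Cref{lem:finite-sample-diffusion}. By linearity of the integral and the sifting property of the Dirac measure, each term evaluates the integrand at $\mathbf x^{(i)}$:
\begin{equation*}
p^N_{\mathbf y\mid t}(\mathbf y\mid\mathbf x_t) = \sum_{i=1}^N w_i(\mathbf x_t,t)\,\mathcal{N}\bigl(\mathbf y;\mathcal{A}(\mathbf x^{(i)}),\bm\Sigma_{\mathbf y}\bigr).
\end{equation*}
The sifting step needs no regularity of $\mathcal{A}$: the Gaussian density is finite and measurable in $\mathbf x_0$ for any deterministic $\mathcal{A}$, so nonlinear forward models are covered. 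Since the weights are nonnegative and sum to one, the result is a Gaussian mixture in $\mathbf y$, as claimed.

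The main point needing care is the conditioning step, since it conditions on the continuous variable $\mathbf x_t$ while the prior is discrete. I would handle it directly with Bayes' rule. The joint density of $(\mathbf x_t, \mathbf y)$ is
\begin{equation*}
\frac1N \sum_{i=1}^N \mathcal{N}\bigl(\mathbf x_t;\sqrt{\bar\alpha(t)}\mathbf x^{(i)},(1-\bar\alpha(t))\mathbf I_n\bigr)\,\mathcal{N}\bigl(\mathbf y;\mathcal{A}(\mathbf x^{(i)}),\bm\Sigma_{\mathbf y}\bigr),
\end{equation*}
and dividing by the marginal $p_t^N(\mathbf x_t)$ from \Cref{lem:finite-sample-diffusion} reproduces exactly the weights $w_i(\mathbf x_t,t)$. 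This route avoids any measure-theoretic subtlety about disintegrating against Dirac masses. It also holds for every $\mathbf x_t$, because $p_t^N>0$ everywhere when $t>0$. At $t=0$ the weights degenerate, and I would treat that case as a limit or state $t>0$.
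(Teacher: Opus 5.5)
Your proof is correct and follows the paper's route: write the likelihood in marginal form, use the Markov property (conditional independence of $\mathbf y$ and $\mathbf x_t$ given $\mathbf x_0$) to replace the kernel by $\mathcal{N}(\mathbf y;\mathcal{A}(\mathbf x_0),\bm\Sigma_{\mathbf y})$, substitute the discrete denoiser from \Cref{lem:finite-sample-diffusion}, and collapse the integral by linearity and the sifting property. Your extra Bayes-rule check on the joint density of $(\mathbf x_t,\mathbf y)$ and your restriction to $t>0$ are sound refinements that the paper's proof leaves implicit.
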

\begin{proof}
      See \Cref{proof:finite-sample-likelihood}.
\end{proof}

To understand the action of this likelihood, observe that the weight function $w_i(\mathbf x, t)$ solely disincentivizes $\mathbf x_t$ from denoising to training data whose expected position $\sqrt{\bar\alpha(t)} \mathbf x^{(i)}$ at time $t$ is far away, having nothing to do with the measurement $\mathbf y$. In contrast, the Gaussian density $\mathcal{N}(\mathbf y; \mathcal{A}(\mathbf x^{(i)}), \bm\Sigma_{\mathbf y})$ disincentivizes $\mathbf x_t$ from denoising to training data which are unlikely to produce $\mathbf y$ under the forward process, having nothing to do with $\mathbf x_t$'s current position. Thus, the multiplicative interaction between the two terms incentivizes the diffusion model to denoise state $\mathbf x_t$ towards training data points $\mathbf x^{(i)}$ which both (i) have high probability of noising to $\mathbf x_t$ under the (unconditional) forward process (\Cref{eq:vpsde}) and (ii) produce measurements consistent with the provided measurement $\mathbf y$.

Finally, because the denoising likelihood $p^N_{\mathbf y \mid t}$ and marginal $p^N_{t}$ are both accessible, and because the measurement likelihood $p_{\mathbf y \mid 0}$ can be tractability marginalized with respect to the prior $p_{\mathrm pr}^N$, the posterior density can be tractably computed at arbitrary points.

\begin{proposition}
      \label[proposition]{thm:finite-sample-posterior}
      Given measurement $\mathbf y$, the posterior $p_{t \mid \mathbf y}^N$ of noised samples $\mathbf x_t$ at time $t$ produced by the variance-preserving SDE defined in \Cref{eq:vpsde} acting on empirical distribution $p_{\mathrm{pr}}^N$ is a Gaussian mixture
      \begin{equation}
            p_{t \mid \mathbf y}^N(\mathbf x_t \mid \mathbf y) = \sum_{i = 1}^N \frac{\mathcal{N}(\mathbf y; \mathcal{A}(\mathbf x^{(i)}), \bm\Sigma_{\mathbf y})}{\sum_{j = 1}^N \mathcal{N}(\mathbf y; \mathcal{A}(\mathbf x^{(j)}), \bm\Sigma_{\mathbf y})} \mathcal{N}(\mathbf x_t; \sqrt{\bar\alpha(t)}\mathbf x^{(i)}, (1 - \bar\alpha(t))\mathbf I_n).
      \end{equation}
\end{proposition}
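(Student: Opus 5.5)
The plan is to apply Bayes' rule at time $t$, using the two closed forms already available in the finite-sample regime: the marginal $p_t^N$ from \Cref{lem:finite-sample-diffusion} and the likelihood $p_{\mathbf y\mid t}^N$ from \Cref{thm:finite-sample-likelihood}. The posterior of the noised state is
\begin{equation*}
    p_{t\mid \mathbf y}^N(\mathbf x_t\mid \mathbf y) = \frac{p_{\mathbf y\mid t}^N(\mathbf y\mid \mathbf x_t)\, p_t^N(\mathbf x_t)}{Z^N(\mathbf y)}, \qquad Z^N(\mathbf y) = \int_{\mathbb{R}^n} p_{\mathbf y\mid t}^N(\mathbf y\mid \mathbf x_t)\, p_t^N(\mathbf x_t)\,\mathrm d\mathbf x_t .
\end{equation*}
I will compute the numerator explicitly and then find the normalizer.

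First I substitute the two expressions into the numerator. Write $\mathcal{N}_i(\mathbf x_t) := \mathcal{N}(\mathbf x_t; \sqrt{\bar\alpha(t)}\mathbf x^{(i)}, (1-\bar\alpha(t))\mathbf I_n)$ and $\ell_i := \mathcal{N}(\mathbf y; \mathcal{A}(\mathbf x^{(i)}), \bm\Sigma_{\mathbf y})$. The weights are $w_i(\mathbf x_t,t) = \mathcal{N}_i(\mathbf x_t)/\sum_j \mathcal{N}_j(\mathbf x_t)$, and the marginal is $p_t^N(\mathbf x_t) = \frac1N\sum_j \mathcal{N}_j(\mathbf x_t)$. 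The denominator of $w_i$ therefore cancels the sum in $p_t^N$, which gives
\begin{equation*}
    p_{\mathbf y\mid t}^N(\mathbf y\mid \mathbf x_t)\, p_t^N(\mathbf x_t) = \frac1N\sum_{i=1}^N \ell_i\, \mathcal{N}_i(\mathbf x_t).
\end{equation*}
This cancellation is the key step; once it is done, the rest is bookkeeping.

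Next I compute $Z^N(\mathbf y)$ by integrating over $\mathbf x_t$. Each Gaussian integrates to one, so $Z^N(\mathbf y) = \frac1N\sum_j \ell_j$. I also want to check that this agrees with the direct marginalization of the measurement likelihood against the empirical prior, $\int p_{\mathbf y\mid 0}(\mathbf y\mid \mathbf x_0)\, p_{\mathrm{pr}}^N(\mathrm d\mathbf x_0)$, so that it is consistent with the time-$0$ Bayes rule in \Cref{eq:posterior}. Dividing the numerator by $Z^N(\mathbf y)$ cancels the factor $1/N$ and yields the stated mixture.

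The main point needing care is justifying that the Bayes rule at time $t$ uses $p_{\mathbf y\mid t}^N$, which relies on the Markov property $\mathbf y \perp \mathbf x_t \mid \mathbf x_0$. Since \Cref{thm:finite-sample-likelihood} already encodes this, I take it as given.

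As a cross-check, I can argue directly. The joint law is that $\mathbf x_0$ is uniform on the samples, $\mathbf y\mid \mathbf x_0$ is Gaussian, and $\mathbf x_t\mid \mathbf x_0$ is Gaussian with $\mathbf x_t$ and $\mathbf y$ conditionally independent given $\mathbf x_0$. Conditioning on $\mathbf y$ reweights index $i$ by $\ell_i/\sum_j \ell_j$, and pushing forward through the kernel $p_{t\mid 0}$ from \Cref{eq:def-alpha-bar} gives the same mixture.
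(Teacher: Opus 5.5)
Your proposal is correct and follows essentially the same route as the paper: Bayes' rule at time $t$, with the numerator assembled from the finite-sample likelihood and marginal, so that the normalizer of $w_i$ cancels the mixture sum in $p_t^N$ and leaves $\frac1N\sum_i \ell_i\,\mathcal{N}_i(\mathbf x_t)$. The only cosmetic difference is the evidence: the paper writes it directly as $\int p_{\mathbf y\mid 0}(\mathbf y\mid\mathbf x_0)\,p_{\mathrm{pr}}^N(\mathbf x_0)\,\mathrm{d}\mathbf x_0 = \frac1N\sum_j \ell_j$, whereas you obtain it by integrating the numerator over $\mathbf x_t$ and then check that the two agree.
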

\begin{proof}
      See \Cref{proof:finite-sample-posterior}.
\end{proof}

\Cref{thm:finite-sample-posterior} gives us the ability to analytically approximate the posterior score in the finite-sample regime. More importantly, it gives approximate posterior density access. One consequence of this is the ability to directly draw samples from the approximate posterior at arbitrary positive time $t > 0$ without iterative (e.g. Euler-Maruyama) denoising, which is significantly cheaper numerically. 

We view the finite-sample objects as Monte Carlo approximations using the finitely-many i.i.d.\ samples from the unknown prior $p_{\mathrm{pr}}$, and thus expect to need on the order of $N^{-1/2}$ samples for our finite-sample method to be accurate. However, the constant factor depends heavily on time $t$, so while we need a relatively small dataset to accurately evaluate the posterior at intermediate times, we require a significantly larger dataset of prior samples as $t$ approaches zero, increasing both memory usage and computational cost. Furthermore, like the unconditional denoiser, the posterior devolves to a discrete measure at the original data at time zero, meaning that we must additionally modify our algorithm if our goal were to generalize outside of the original dataset.

Rather than attempting to use the finite-sample regime to produce new posterior samples $\mathbf x_t \sim p_{t \mid \mathbf y}$ for small time $t \approx 0$, where our method is relatively inaccurate even for large dataset size $N$, we use the finite-sample regime as a proxy for the ground truth posterior at intermediate timesteps, which allows us to explore the behavior of existing samplers on (i) priors where the denoiser is inaccessible and (ii) measurement models where the likelihood marginal does not simplify analytically.

\section{Experiments}
\label{sec:experiments}
Each of our experiments is defined by selecting (i) a prior distribution $p_{\mathrm{pr}}$, (ii) a measurement operator $\mathcal{A}$, and (iii) a noise covariance matrix $\bm\Sigma_{\mathbf y}$. When the measurement operator $\mathcal{A}(\mathbf x) = \mathbf A \mathbf x$ is linear, we demonstrate the performance of four moment-matching methods against the finite-sample regime:
\begin{enumerate}
    \item $\sigma$-DPS \citep{chung2023diffusion}, defined in \Cref{eq:chung-score} with unmodified prefactor $\sigma_{\mathbf y}^{-2}$,
    \item $\zeta$-DPS \citep{chung2023diffusion}, defined in \Cref{eq:modified-chung-score} with modified prefactor $\zeta/\|\mathbf y - \mathcal{A}(\mathbf m_{0 \mid t}(\mathbf x_t))\|_2$,
    \item $\Pi$GDM \citep{song2023pseudoinverseguided}, defined in \Cref{eq:song-approximation}, and 
    \item TMPD \citep{boys2024tweedie}, defined in \Cref{eq:boys-approximation}. 
\end{enumerate}
When $\mathcal{A}$ is nonlinear, we only study the two DPS methods against the finite-sample regime.

To assess the accuracy of our finite-sample posterior sampler, we begin with settings where the true posterior distribution $p_{t \mid \mathbf y}(\mathbf x_t \mid \mathbf y)$ is known analytically. We first consider two classes of test problems with multimodal priors: discrete priors with arbitrary measurement operators, and Gaussian mixture (or simply Gaussian) priors with affine measurement operators:
\begin{align}
    &p_{\mathrm{pr}} = \sum_i p_i \delta_{\mathbf x^{(i)}}, \quad \mathcal{A}(\mathbf x) \text{~general}, \tag{Discrete} \\
    &p_{\mathrm{pr}} = \mathcal{N}(\mathbf m_{\mathrm{pr}}, \mathbf C_{\mathrm{pr}}), \quad \mathcal{A}(\mathbf x) = \mathbf A\mathbf x + \mathbf b , \tag{Gaussian} \\
    &p_{\mathrm{pr}} = \sum_i p_i \mathcal{N}(\mathbf m_{\mathrm{pr}, i}, \mathbf C_{\mathrm{pr}, i}), \quad \mathcal{A}(\mathbf x) = \mathbf A\mathbf x + \mathbf b .\tag{GMM}
\end{align}
For these three settings, we demonstrate in Appendices \ref{sec:discrete-prior-analysis}, \ref{sec:gaussian-prior-analysis}, and \ref{sec:gmm-prior-analysis} respectively that the (unconditional) marginal score $\nabla_{\mathbf x_t} \log p_t(\mathbf x_t)$, the likelihood and likelihood score, and the posterior and posterior score are all analytically available. Once the accuracy of our approach is established, we use the finite-sample regime as a ground truth surrogate for settings where the true likelihood and posterior are not tractably computable, specifically Gaussian and GMM priors with nonlinear measurement operator $\mathcal{A}(\mathbf x)$.

For each experiment, we construct heatmaps (such as those in \Cref{fig:linear-multimodal}) for the ground truth by evaluating the analytically known density. For the finite-sample regime (FSR), we similarly evaluate the finite-sample posterior $p_{t \mid \mathbf y}^N$ (\Cref{thm:finite-sample-posterior}) on the same grid, where the dataset size $N$ is chosen empirically. For the four target baselines, we draw $K$ samples from the unit Gaussian at time $1$ and denoise using the analytically known unconditional marginal score for both the prior score contribution and for the approximated likelihood score via Tweedie's formula. For $\zeta$-DPS, the hyperparameter $\zeta$ is hand-chosen via grid search, as shown in \Cref{sec:zeta-tuning}.

\begin{figure}[t]
    \centering
    \def\fsrTvPath{tikz-plots/fsr_tv/}%
\providecommand{\fsrTvPath}{}

\begingroup

\begin{tikzpicture}

\definecolor{grad1}{RGB}{4,   12,  128}
\definecolor{grad4}{RGB}{0,   164, 124}
\definecolor{grad7}{RGB}{204, 132, 0}

\pgfplotsset{
    panel/.style={
        width=1.92in, height=1.55in,
        xmode=log, ymode=log,
        log basis x={2},
        xtick={16, 64, 256, 1024, 4096},
        xticklabels={16, 64, 256, 1024, 4096},
        xmin=12, xmax=5500,
        grid=major, grid style={gray!25, thin},
        tick align=outside, tick pos=left,
        every axis/.append style={font=\footnotesize},
        every legend/.append style={font=\footnotesize},
        every node/.append style={font=\footnotesize},
    },
}

\begin{groupplot}[
    group style={
        group size=3 by 1,
        horizontal sep=1.1cm,
        ylabels at=edge left,
    },
    panel,
    ylabel={TV error},
]

\nextgroupplot[
    title={$t = 0.05$},
    legend to name=sharedleg, legend columns=4,
    legend style={draw=none, fill=none, column sep=0.4cm, font=\footnotesize},
]
\addlegendimage{grad1, very thick, mark=*} \addlegendentry{Discrete}
\addlegendimage{grad7, very thick, mark=diamond*} \addlegendentry{Gaussian}
\addlegendimage{grad4, very thick, mark=square*} \addlegendentry{GMM}
\addlegendimage{black, dashed, thin} \addlegendentry{$N^{-1/2}$}
\addplot[grad1, thick, mark=*]       table[col sep=comma, header=false, x index=0, y index=1] {\fsrTvPath pent_asym__quadratic__y=+1.09__tv.csv};
\addplot[grad4, thick, mark=square*] table[col sep=comma, header=false, x index=0, y index=1] {\fsrTvPath tri_equal__gain_highnoise__y=+1.00__tv.csv};
\addplot[grad7, thick, mark=diamond*] table[col sep=comma, header=false, x index=0, y index=1] {\fsrTvPath wide__gain_shift__y=-2.00__tv.csv};
\addplot[black, dashed, thin, domain=16:4096, samples=2, forget plot] {7.68/sqrt(x)};

\nextgroupplot[title={$t = 0.3$}]
\addplot[grad1, thick, mark=*]       table[col sep=comma, header=false, x index=0, y index=2] {\fsrTvPath pent_asym__quadratic__y=+1.09__tv.csv};
\addplot[grad4, thick, mark=square*] table[col sep=comma, header=false, x index=0, y index=2] {\fsrTvPath tri_equal__gain_highnoise__y=+1.00__tv.csv};
\addplot[grad7, thick, mark=diamond*] table[col sep=comma, header=false, x index=0, y index=2] {\fsrTvPath wide__gain_shift__y=-2.00__tv.csv};
\addplot[black, dashed, thin, domain=16:4096, samples=2, forget plot] {2.98/sqrt(x)};

\nextgroupplot[title={$t = 0.8$}]
\addplot[grad1, thick, mark=*]       table[col sep=comma, header=false, x index=0, y index=3] {\fsrTvPath pent_asym__quadratic__y=+1.09__tv.csv};
\addplot[grad4, thick, mark=square*] table[col sep=comma, header=false, x index=0, y index=3] {\fsrTvPath tri_equal__gain_highnoise__y=+1.00__tv.csv};
\addplot[grad7, thick, mark=diamond*] table[col sep=comma, header=false, x index=0, y index=3] {\fsrTvPath wide__gain_shift__y=-2.00__tv.csv};
\addplot[black, dashed, thin, domain=16:4096, samples=2, forget plot] {0.1568/sqrt(x)};

\end{groupplot}
\node[above=1.0em, font=\footnotesize] at ($(group c1r1.north)!0.5!(group c3r1.north)$) {\pgfplotslegendfromname{sharedleg}};
\node[below=2.0em, font=\footnotesize] at ($(group c1r1.south)!0.5!(group c3r1.south)$) {Dataset size $N$};
\end{tikzpicture}
\endgroup
    \vspace{-3mm}
    \caption{Each panel shows the median error of the finite-sample posterior sampler $p(\mathbf x_t \mid \mathbf y)$ in total variation (TV) for three example priors at a fixed diffusion time $t$. As expected, we see that error decreases in dataset size $N$ at the expected Monte Carlo rate for each fixed $t$, and that more data are needed to accurately capture the posterior at smaller times.}
    \label{fig:posterior-error-scaling}
\end{figure}
\begin{figure}[t]
    \centering
\begingroup
\graphicspath{{./}{tikz-plots/image_grid/}}

\begin{tikzpicture}

\pgfplotsset{
    cell/.style={
        width=0.78in, height=0.70in,
        scale only axis,
        enlargelimits=false,
        axis on top,
        xmin=0, xmax=1, ymin=0, ymax=1,
        xtick={0, 0.5, 1.0},
        ytick=\empty,
        tick align=outside, tick pos=left,
        every axis/.append style={font=\footnotesize},
        every node/.append style={font=\footnotesize},
    },
}

\begin{groupplot}[
    group style={
        group size=6 by 3,
        horizontal sep=0.25cm,
        vertical sep=0.25cm,
        xticklabels at=edge bottom,
    },
    cell,
]

\nextgroupplot[title={True posterior}]                  \addplot graphics[xmin=0,xmax=1,ymin=0,ymax=1]{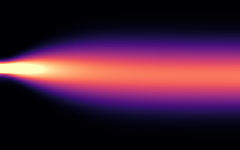};
\nextgroupplot[title={FSR (ours)}]                      \addplot graphics[xmin=0,xmax=1,ymin=0,ymax=1]{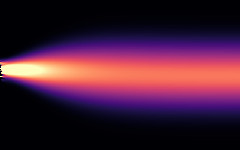};
\nextgroupplot[title={$\sigma$-DPS~\citep{chung2023diffusion}}]        \addplot graphics[xmin=0,xmax=1,ymin=0,ymax=1]{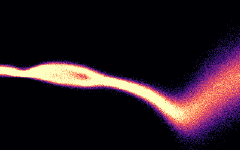};
\nextgroupplot[title={$\zeta$-DPS~\citep{chung2023diffusion}}] \addplot graphics[xmin=0,xmax=1,ymin=0,ymax=1]{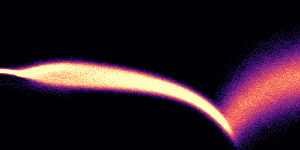};
\nextgroupplot[title={$\Pi$GDM~\citep{song2023pseudoinverseguided}}] \addplot graphics[xmin=0,xmax=1,ymin=0,ymax=1]{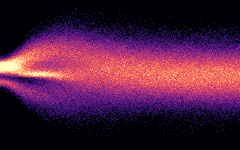};
\nextgroupplot[title={TMPD~\citep{boys2024tweedie}}]          \addplot graphics[xmin=0,xmax=1,ymin=0,ymax=1]{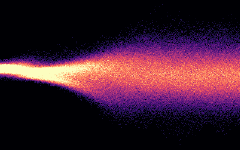};

\nextgroupplot \addplot graphics[xmin=0,xmax=1,ymin=0,ymax=1]{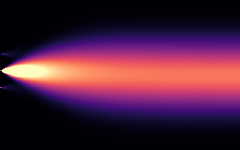};
\nextgroupplot \addplot graphics[xmin=0,xmax=1,ymin=0,ymax=1]{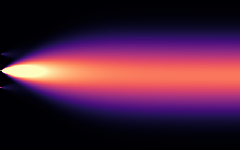};
\nextgroupplot \addplot graphics[xmin=0,xmax=1,ymin=0,ymax=1]{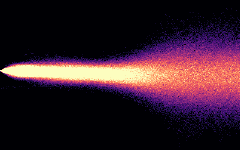};
\nextgroupplot \addplot graphics[xmin=0,xmax=1,ymin=0,ymax=1]{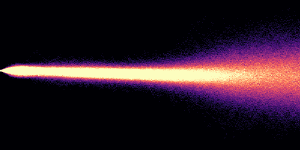};
\nextgroupplot \addplot graphics[xmin=0,xmax=1,ymin=0,ymax=1]{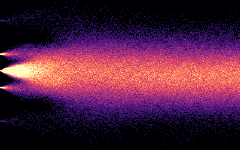};
\nextgroupplot \addplot graphics[xmin=0,xmax=1,ymin=0,ymax=1]{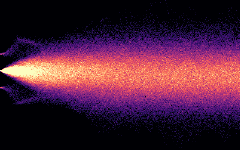};

\nextgroupplot \addplot graphics[xmin=0,xmax=1,ymin=0,ymax=1]{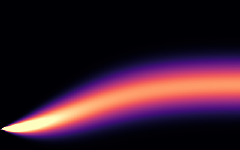};
\nextgroupplot \addplot graphics[xmin=0,xmax=1,ymin=0,ymax=1]{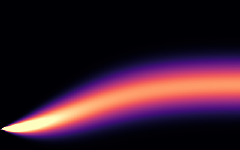};
\nextgroupplot \addplot graphics[xmin=0,xmax=1,ymin=0,ymax=1]{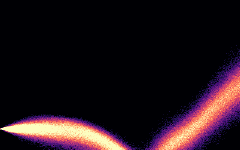};
\nextgroupplot \addplot graphics[xmin=0,xmax=1,ymin=0,ymax=1]{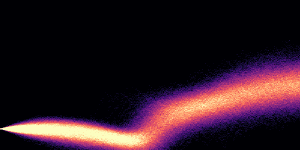};
\nextgroupplot \addplot graphics[xmin=0,xmax=1,ymin=0,ymax=1]{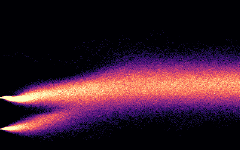};
\nextgroupplot \addplot graphics[xmin=0,xmax=1,ymin=0,ymax=1]{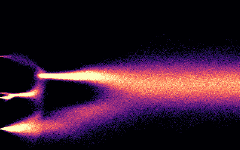};

\end{groupplot}

\node[below=1.4em, font=\footnotesize] at ($(group c1r3.south)!0.5!(group c6r3.south)$) {Time $t$};
\node[rotate=90, font=\footnotesize] at ([xshift=-1.8em] group c1r2.west) {Position $x$};

\end{tikzpicture}
\endgroup
    \vspace{-6mm}
    \caption{The reverse processes of the three moment-matching methods are shown against the ground truth and finite-sample posteriors on three test problems. We observe that the ground truth and finite-sample posteriors are qualitatively identical for sufficiently large $t$. The DPS Dirac approximation captures means accurately for small time, but has incorrect mean at intermediate time steps and incorrect variance for all time. Finally, the Gaussian approximations sample from prior modes which are inconsistent with the provided measurement.}
    \label{fig:linear-multimodal}
\end{figure}
\begin{figure}
    \centering
    \def\imgridErrPath{tikz-plots/image_grid_error/}%
\providecommand{\imgridErrPath}{}

\begin{tikzpicture}

\definecolor{cFSR}{RGB}{178,  34,  34}
\definecolor{cDPS}{RGB}{ 30, 100, 200}
\definecolor{cPGDM}{RGB}{ 34, 139,  34}
\definecolor{cTMPD}{RGB}{128,   0, 128}

\pgfplotsset{
    panel/.style={
        width=1.95in, height=1.55in,
        ymode=log,
        xmin=0, xmax=399,
        xtick={0, 100, 200, 300, 400},
        grid=major, grid style={gray!25, thin},
        tick align=outside, tick pos=left,
        every axis/.append style={font=\footnotesize},
        every legend/.append style={font=\footnotesize},
        every node/.append style={font=\footnotesize},
    },
    methodplot/.style={thick, mark size=1.6pt, mark repeat=40, mark phase=20},
}

\begin{groupplot}[
    group style={
        group size=3 by 1,
        horizontal sep=1.1cm,
        ylabels at=edge left,
    },
    panel,
    ylabel={TV error},
]

\nextgroupplot[
    title={2 Mode Mixture},
    legend to name=sharedleg, legend columns=4,
    legend style={draw=none, fill=none, column sep=0.4cm, font=\footnotesize},
]
\addlegendimage{cFSR,  very thick, mark=*}         \addlegendentry{FSR (ours)}
\addlegendimage{cDPS,  very thick, mark=square*}   \addlegendentry{$\sigma$-DPS~\citep{chung2023diffusion}}
\addlegendimage{cPGDM, very thick, mark=triangle*} \addlegendentry{$\Pi$GDM~\citep{song2023pseudoinverseguided}}
\addlegendimage{cTMPD, very thick, mark=diamond*}  \addlegendentry{TMPD~\citep{boys2024tweedie}}
\addplot[cFSR,  methodplot, mark=*]         table[col sep=comma, header=false, x index=0, y index=1] {\imgridErrPath tv_bi.csv};
\addplot[cDPS,  methodplot, mark=square*]   table[col sep=comma, header=false, x index=0, y index=2] {\imgridErrPath tv_bi.csv};
\addplot[cPGDM, methodplot, mark=triangle*] table[col sep=comma, header=false, x index=0, y index=3] {\imgridErrPath tv_bi.csv};
\addplot[cTMPD, methodplot, mark=diamond*]  table[col sep=comma, header=false, x index=0, y index=4] {\imgridErrPath tv_bi.csv};

\nextgroupplot[title={Balanced Discrete}]
\addplot[cFSR,  methodplot, mark=*]         table[col sep=comma, header=false, x index=0, y index=1] {\imgridErrPath tv_pent.csv};
\addplot[cDPS,  methodplot, mark=square*]   table[col sep=comma, header=false, x index=0, y index=2] {\imgridErrPath tv_pent.csv};
\addplot[cPGDM, methodplot, mark=triangle*] table[col sep=comma, header=false, x index=0, y index=3] {\imgridErrPath tv_pent.csv};
\addplot[cTMPD, methodplot, mark=diamond*]  table[col sep=comma, header=false, x index=0, y index=4] {\imgridErrPath tv_pent.csv};

\nextgroupplot[title={Unbalanced Discrete}]
\addplot[cFSR,  methodplot, mark=*]         table[col sep=comma, header=false, x index=0, y index=1] {\imgridErrPath tv_wild.csv};
\addplot[cDPS,  methodplot, mark=square*]   table[col sep=comma, header=false, x index=0, y index=2] {\imgridErrPath tv_wild.csv};
\addplot[cPGDM, methodplot, mark=triangle*] table[col sep=comma, header=false, x index=0, y index=3] {\imgridErrPath tv_wild.csv};
\addplot[cTMPD, methodplot, mark=diamond*]  table[col sep=comma, header=false, x index=0, y index=4] {\imgridErrPath tv_wild.csv};

\end{groupplot}
\node[above=1.0em, font=\footnotesize] at ($(group c1r1.north)!0.5!(group c3r1.north)$) {\pgfplotslegendfromname{sharedleg}};
\node[below=2.0em, font=\footnotesize] at ($(group c1r1.south)!0.5!(group c3r1.south)$) {Forward timestep};
\end{tikzpicture}
    \caption{The total variation error of the finite-sample regime (FSR, $N = 4096$) and three moment-matching approaches are computed for the three examples shown in \Cref{fig:linear-multimodal}. The FSR's accuracy deteriorates for fixed $N$ as $t \searrow 0$, but remains accurate for moderately large $t$.}
    \label{fig:tv-error}
\end{figure}

First, we observe that the finite-sample posterior converges to the true posterior at the expected Monte Carlo rate for each fixed $t$, as shown in \Cref{fig:posterior-error-scaling}. Importantly, we see that the number of samples needed for accurate posterior access increases as we sample closer to time zero. Second, we qualitatively compare the true and finite-sample posteriors at all intermediate timesteps in \Cref{fig:linear-multimodal}, and observe that the two are characteristically identical. 

In each of the three examples in \Cref{fig:posterior-error-scaling} and \Cref{fig:linear-multimodal}, the prior distribution is multimodal, but the linear measurement operator and small noise scale results in an informative-enough measurement that the posterior is unimodal. However, we empirically observe that both second-moment approximations \cite{song2023pseudoinverseguided, boys2024tweedie} sample from multimodal posteriors by producing denoised points which are probable under the prior but which are inconsistent with the provided measurement. While in the context of learned scores one might attribute this behavior to prior score mismatch, we observe this behavior when using the \emph{true} unconditional marginal score. 
We attribute these tail samples to an overestimation of the posterior covariance, which drives the tail samples $\mathbf x_t$ far from the image of prior data $\sqrt{\bar\alpha(t)} \mathbf x_0$ which are consistent with the measurement (i.e. the corresponding weight $w_i(\mathbf x_t, t)$ is small). In this region of the reverse process, the prior score dominates sampling, effectively behaving like unconditional generation. Even when the sample recovers, as in the first $\Pi$GDM example, we see that the resulting posterior has an overestimated tail, corresponding to significant oversampling of unlikely events. We observe that the Dirac approximation explores regions of low prior and posterior probability at intermediate timesteps, which may prove problematic in cases where the prior score is learned by score matching and therefore uncontrolled directly in the low probability region.

Finally, we demonstrate that the performance of the finite-sample regime is accurate at intermediate times for reasonable $N$. \Cref{fig:tv-error} shows that total variation error of the three baselines at all intermediate timesteps of the (discretized) reverse process is significantly greater than that of the finite-sample regime, but that FSR's quality degrades near time zero when the true prior admits a density (as the total variation distance between a discrete measure and a density is always 1).

\Cref{fig:quadratic-grid} shows experiments with nonlinear forward models, for which we can only compare our finite-sample approach (FSR) against the Dirac approximation of \citet{chung2023diffusion}. The first two rows involve discrete prior distributions for which the true posterior can be computed analytically; the remaining rows involve Gaussian or Gaussian-mixture priors for which the true posterior is intractable but our finite-sample approach provides an accurate proxy for the ground truth posterior across positive timesteps.
We observe that $\sigma$-DPS regularly samples from prior modes that are not present in the true posterior (rows 2, 5, and 6) and/or fails to sample from modes present in the true posterior (rows 1, 3, and 4). Even in cases where $\sigma$-DPS captures the true modes of the posterior, it often weights them incorrectly.

Tuning the hyperparameter $\zeta$ in $\zeta$-DPS for each setting can improve the relative weighting of posterior modes (row 4) and reduce hallucination of likelihood-consistent modes that are not consistent with the prior (rows 5 and 6). However, tuning $\zeta$ cannot always avoid hallucinating prior-consistent modes that are not consistent with the measurement (row 2) nor avoid hallucinating measurement-consistent modes that are not plausible under the prior (row 4).

In general, by grid search over $\zeta$ (see \Cref{sec:zeta-tuning}) in our finite-sample lens we replicate the known behavior that higher values of $\zeta$ increase the weight of the measurement likelihood relative to the prior while smaller values of $\zeta$ increase the weight of the prior relative to the likelihood. Our finite-sample lens also reveals that in some cases mode collapse can arise from selecting $\zeta$ to be too large, in some cases even sampling from a single mode that is hallucinatory rather than consistent with the true posterior.

\begin{figure}
    \centering
\begingroup
\graphicspath{{./}{tikz-plots/quadratic_grid/}}

\begin{tikzpicture}

\pgfplotsset{
    cell/.style={
        width=0.95in, height=0.70in,
        scale only axis,
        enlargelimits=false,
        axis on top,
        xmin=0, xmax=1, ymin=0, ymax=1,
        xtick={0, 0.5, 1.0},
        ytick=\empty,
        tick align=outside, tick pos=left,
        every axis/.append style={font=\footnotesize},
        every node/.append style={font=\footnotesize},
    },
}

\begin{groupplot}[
    group style={
        group size=4 by 6,
        horizontal sep=0.25cm,
        vertical sep=0.25cm,
        xticklabels at=edge bottom,
    },
    cell,
]

\nextgroupplot[title={True posterior}]                  \addplot graphics[xmin=0,xmax=1,ymin=0,ymax=1]{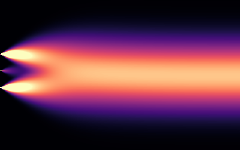};
\nextgroupplot[title={FSR (ours)}]                      \addplot graphics[xmin=0,xmax=1,ymin=0,ymax=1]{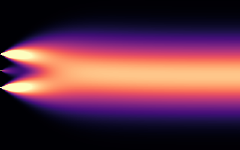};
\nextgroupplot[title={$\sigma$-DPS~\citep{chung2023diffusion}}]   \addplot graphics[xmin=0,xmax=1,ymin=0,ymax=1]{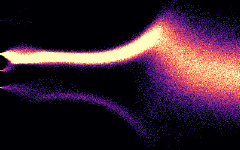};
\nextgroupplot[title={$\zeta$-DPS~\citep{chung2023diffusion}}]    \addplot graphics[xmin=0,xmax=1,ymin=0,ymax=1]{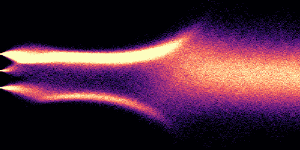};

\nextgroupplot \addplot graphics[xmin=0,xmax=1,ymin=0,ymax=1]{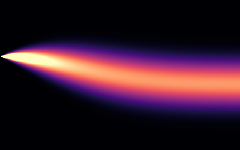};
\nextgroupplot \addplot graphics[xmin=0,xmax=1,ymin=0,ymax=1]{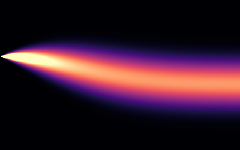};
\nextgroupplot \addplot graphics[xmin=0,xmax=1,ymin=0,ymax=1]{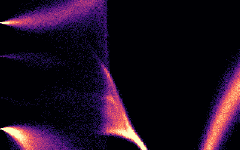};
\nextgroupplot \addplot graphics[xmin=0,xmax=1,ymin=0,ymax=1]{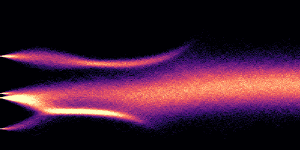};

\nextgroupplot \node[align=center] at (axis cs:0.5,0.5) {Intractable};
\nextgroupplot \addplot graphics[xmin=0,xmax=1,ymin=0,ymax=1]{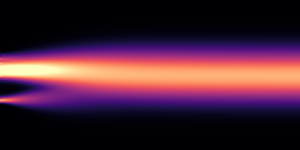};
\nextgroupplot \addplot graphics[xmin=0,xmax=1,ymin=0,ymax=1]{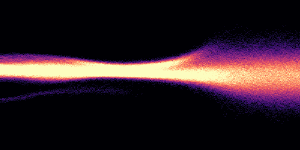};
\nextgroupplot \addplot graphics[xmin=0,xmax=1,ymin=0,ymax=1]{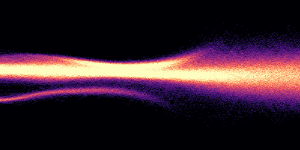};

\nextgroupplot \node[align=center] at (axis cs:0.5,0.5) {Intractable};
\nextgroupplot \addplot graphics[xmin=0,xmax=1,ymin=0,ymax=1]{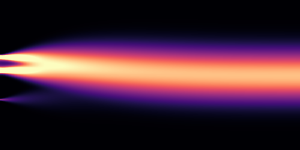};
\nextgroupplot \addplot graphics[xmin=0,xmax=1,ymin=0,ymax=1]{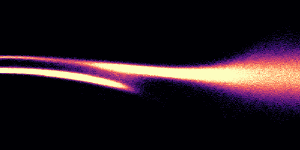};
\nextgroupplot \addplot graphics[xmin=0,xmax=1,ymin=0,ymax=1]{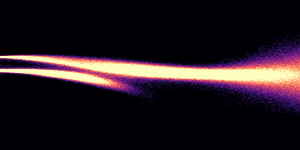};

\nextgroupplot \node[align=center] at (axis cs:0.5,0.5) {Intractable};
\nextgroupplot \addplot graphics[xmin=0,xmax=1,ymin=0,ymax=1]{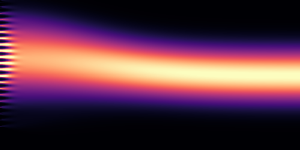};
\nextgroupplot \addplot graphics[xmin=0,xmax=1,ymin=0,ymax=1]{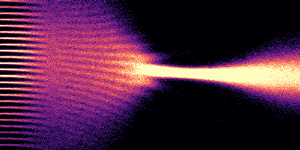};
\nextgroupplot \addplot graphics[xmin=0,xmax=1,ymin=0,ymax=1]{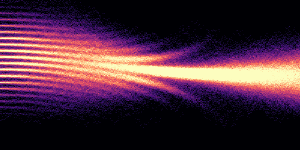};

\nextgroupplot \node[align=center] at (axis cs:0.5,0.5) {Intractable};
\nextgroupplot \addplot graphics[xmin=0,xmax=1,ymin=0,ymax=1]{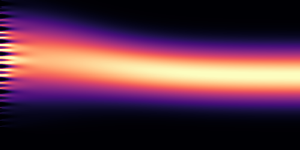};
\nextgroupplot \addplot graphics[xmin=0,xmax=1,ymin=0,ymax=1]{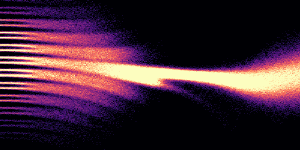};
\nextgroupplot \addplot graphics[xmin=0,xmax=1,ymin=0,ymax=1]{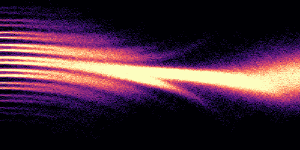};

\end{groupplot}

\node[below=1.4em, font=\footnotesize] at ($(group c1r6.south)!0.5!(group c4r6.south)$) {Time $t$};
\node[rotate=90, font=\footnotesize] at ($([xshift=-1.8em]group c1r3.west)!0.5!([xshift=-1.8em]group c1r4.west)$) {Position $x$};

\end{tikzpicture}
\endgroup
    \caption{For nonlinear measurement operators, the Dirac approximation generally demonstrates suboptimal posterior sampling, often failing to capture posterior modes and/or sampling from prior modes inconsistent with the provided measurement. For example, the true posterior in the second row collapses to a Dirac, but both DPS methods sample from prior models not present in the posterior, and underweight the lone posterior atom. Finally, we observe cases where minority modes are oversampled, such as with the middle mode in the first line.
    However, we see that proper tuning of $\zeta$ can result in improved performance over the $\sigma$-DPS baseline, such as by (partially) correcting the weighting between modes (first and fourth rows), by promoting sampling from modes which $\sigma$-DPS misses entirely (third row), or by discouraging sampling from modes which are consistent with the provided measurement but inconsitent with the prior (last two rows).
    \vspace{-4mm}
    }
    \label{fig:quadratic-grid}
\end{figure}

\section{Discussion}
\label{sec:discussion}
We introduce a finite-sample lens on posterior diffusion sampling, providing approximate access to the posterior distribution at all intermediate timesteps, for any prior and measurement model, and use this perspective to study how errors committed by existing methods arise during sampling. 

Although the common belief is that these moment-matching methods struggle with nonlinear measurement operators and/or multimodal posteriors, we show failure in cases where both the measurement operator is linear and the posterior is unimodal. We observe that the existing second-moment Gaussian approximations tend to overinflate the posterior covariance when the prior is multimodal, leading to hallucinations even when the true prior score is used. These inconsistent samples pose significant risk in real-world applications such as MRI: a model may reconstruct a tumor-free image from Fourier measurements of a tumor, but this failure may be difficult for a human to detect because the Fourier measurements are not (easily) visually interpretable. Conversely, in the nonlinear measurement operator case we observe that the first-moment Dirac approximation can entirely miss modes in the posterior, or get the relative weighting entirely wrong between posterior modes (\Cref{fig:quadratic-grid}).

Even when the posterior samples at time zero are of high quality (e.g. in contexts where existing benchmarking studies would deem a method to be accurate), the distribution at intermediate timesteps is often inaccurate. When intermediate distributions explore regions to which the (unconditional) marginal $p_t(\mathbf x_t)$ assigns low probability, the score matching objective (trained in expectation with respect to the marginal) is unlikely to have explicitly controlled the learned network in this region, leading to unpredictable behavior. Furthermore, this impacts downstream tasks which utilize intermediate states, such as out-of-distribution detection \citep{klip2026}.

A limitation of our method is that, without explicit computation of constant factors (as a function of $\bar\alpha(t)$, $t$, the prior, and $N$), it is difficult to reason about how large of a dataset is required to capture the posterior pointwise at time $t$ to sufficient accuracy.

Studying the impact of prior learning on sample quality is the most obvious extension of our work, including cases where (i) the prior under- or over-emphasizes minority modes (due to sensitivity of under- or over-representation of rare events in the training set), and (ii) the posterior sampler enters regions of low unconditional marginal probability. We conjecture that issues arising from particles traveling away from measurement-consistent regions and into prior-consistent regions of the reverse process will amplify when the prior is learned.

\begin{ack}
  BAB is supported by the National Science Foundation Graduate Research Fellowship Program under Grant No. DGE-2039655. Any opinions, findings, and conclusions or recommendations expressed in this material are those of the author(s) and do not necessarily reflect the views of the National Science Foundation. BAB would like to knowledge Sebasti\'an Guti\'errez Hern\'andez for helpful discussions.
\end{ack}

\medskip

{
\small

\bibliography{zotero, manual}



}

\clearpage
\appendix
\addtocontents{toc}{\setcounter{tocdepth}{2}}
\tableofcontents

\section{Related work}
\label[appendix]{sec:related-work}
\subsection{Generative posterior samplers}
\label[appendix]{sec:generative-posterior-samplers}

The goal of our study is to analyze likelihood (and likelihood score) approximations which assume access to a pretrained prior score, but no access to the forward model at training time.

Because the forward model is assumed to be inaccessible offline, methods which employ additional learning of the likelihood or posterior scores \emph{after} the forward model has been fixed \citep{elata2025psc,batzolis2021conditional,dhariwal2021diffusion,karras2022elucidating,denker2024deft} are beyond our scope. We additionally do not consider variable splitting methods \citep{xu2024provably,wu2024principled}, which often perform exact likelihood sampling (e.g., via Langevin dynamics or MALA). However, because moment-matching approximations can be (and are) used as initializers for more expensive and accurate methods \citep{xu2024provably}, studies of these moment-matching methods inform us of the diversity and accuracy of initializations that downstream algorithms can expect to build on.

There are alternative, flow-driven generative solutions to posterior sampling, such as through use of stochastic interpolants \citep{albergo2025stochastic, albergo2024stochastic} and flow-matching \citep{lipman2023flow,kim2025flowdps}. Due to the significant differences in how each class of method performs sampling, we do not seek a general framework categorizing when, how, and why \emph{all} generative posterior samplers fail. Although the zero-shot framework has been investigated in the flow matching context \citep{kim2025flowdps}, we limit our study to diffusion models.

\subsection{Benchmarking vs. understanding}
\label[appendix]{sec:benchmarking-vs-understanding}

We emphasize the difference between benchmarking studies and our goal of \emph{understanding}. Benchmarking studies typically draw posterior samples $p(\mathbf x_0 \mid \mathbf y)$ using a variety of algorithms, compare the quality of either individual samples (e.g. MAP quality) or the quality of the ensemble (e.g. posterior coverage), and identify which algorithms struggle in which contexts. However, benchmarking the end distribution alone is insufficient for concretely explaining \emph{why} the observed behavior arises.

For example, it is known \citep{zheng2025inversebench, zhang2025improving} that moment-matching methods, in particular the first-moment Dirac method \citep{chung2023diffusion}, struggle when the true posterior is multimodal, such as when the posterior support is non-convex \citep{zheng2025inversebench}. However, it is not clear (i) when in sampling this issue arises (e.g., at what diffusion time the moment-matching methods diverge from the desired behavior), or (ii) how this can be corrected. \citet{zheng2025inversebench} note that this multimodal behavior is not extensively discussed in the literature.

Building on observational knowledge from prior work in benchmarking that identifies which algorithms struggle in which contexts, our goal is to study how and why this error arises in specific examples, and draw general conclusions that inform practitioners on which algorithms can be used in which settings beyond existing benchmarks.

\subsection{Exact diffusion}
\label[appendix]{sec:exact-diffusion}

The finite-sample regime for diffusion models has been previously studied in the context of unconditional sampling. For example, \citet{scarvelis2025closedform} employ the finite-sample regime to develop a sampling method, and describe how to smooth the finite-sample method to promote generalization. \citet{mimikos-stamatopoulos2024scorebased} study various sources of error in diffusion sampling, including error arising from operating with finite samples.

\citet{zhang2025exacta} employ the finite-sample regime to perform posterior sampling by assuming the prior is a \emph{Gaussian mixture} with one component per training datum, but only study the case of linear forward models. \citet{wang2026errora} analyze the error of the resulting method. \citet{zhang2025iensf} employ the Gaussian mixture diffusion model for the analysis step in data assimilation. Although the method supports nonlinear measurement operators, the measurement operator is linearized in practice. Most importantly, all three works use the finite-sample perspective to develop posterior samplers, not to study existing samplers (as is our goal).

\section{Useful lemmas}
\label[appendix]{sec:useful-lemmas}

For the remainder of this section, we denote vectors $\mathbf x \in \mathcal{X} \subseteq \mathbb{R}^n$ and $\mathbf y \in \mathcal{Y} \subseteq \mathbb{R}^m$, matrix $\mathbf A \in \mathbb{R}^{m \times n}$, (possibly nonlinear) map $\mathcal{A}\colon \mathcal{X} \to \mathcal{Y}$, and symmetric positive definite matrices $\mathbf C_{\mathrm{pr}} \in \mathbb{R}^{n \times n}$ and $\bm\Sigma_{\mathbf y} \in \mathbb{R}^{m \times m}$.

\subsection{Linear updates of discrete measures}
\label[appendix]{sec:linear-updates-discrete}

\begin{lemma}[Gaussian-discrete marginal]
      \label[lemma]{lem:gaussian-discrete-marginal}
      Given Gaussian likelihood and discrete measure prior
      \begin{equation*}
            p(\mathbf y\mid \mathbf x) = \mathcal{N}(\mathbf y; \mathbf A \mathbf x, \bm\Sigma_{\mathbf y}), \quad \mu = \sum_{i = 1}^{N} w_i \delta_{\mathbf x^{(i)}},
      \end{equation*}
      with weights $w_i \geq 0$ and $\sum w_i = 1$, the marginal distribution $p(\mathbf y)$ is a Gaussian mixture
      \begin{equation*}
            p(\mathbf y) = \sum_{i = 1}^{N} w_i \, \mathcal{N}(\mathbf y; \mathbf A \mathbf x^{(i)}, \bm\Sigma_{\mathbf y}).
      \end{equation*}
\end{lemma}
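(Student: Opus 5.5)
The plan is to compute the marginal directly from its definition as an integral against the prior measure. By the law of total probability, $p(\mathbf y) = \int_{\mathcal X} p(\mathbf y \mid \mathbf x)\, \mathrm{d}\mu(\mathbf x)$. Substituting $\mu = \sum_{i=1}^N w_i \delta_{\mathbf x^{(i)}}$ and using linearity of the integral in the measure gives $p(\mathbf y) = \sum_{i=1}^N w_i \int p(\mathbf y \mid \mathbf x)\,\mathrm{d}\delta_{\mathbf x^{(i)}}(\mathbf x)$.

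Each integral is then evaluated by the sifting property of the Dirac measure: $\int f \,\mathrm{d}\delta_{\mathbf x^{(i)}} = f(\mathbf x^{(i)})$ for any function $f$ that is finite at $\mathbf x^{(i)}$. Here $f(\mathbf x) = \mathcal{N}(\mathbf y; \mathbf A \mathbf x, \bm\Sigma_{\mathbf y})$ is continuous and bounded in $\mathbf x$ because $\bm\Sigma_{\mathbf y}$ is symmetric positive definite. This yields $\mathcal{N}(\mathbf y; \mathbf A\mathbf x^{(i)}, \bm\Sigma_{\mathbf y})$ for each term, and hence the claimed mixture.

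The only step needing care is the interpretation of $p(\mathbf y)$. It should be read as a density in $\mathbf y$ with respect to Lebesgue measure on $\mathbb{R}^m$, even though $\mu$ has no density. To justify this, I would verify that for any Borel set $B$, $\Pr(\mathbf y \in B) = \sum_i w_i \int_B \mathcal{N}(\mathbf y; \mathbf A\mathbf x^{(i)}, \bm\Sigma_{\mathbf y})\,\mathrm{d}\mathbf y$, with the order of integration exchanged by Tonelli's theorem since the integrand is nonnegative. Finally, since $w_i \ge 0$ and $\sum_i w_i = 1$, the result is a valid Gaussian mixture. Nothing in the argument uses linearity of $\mathbf A$, so the same proof gives the version with a general $\mathcal{A}$.
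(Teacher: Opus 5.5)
Your proposal is correct and is essentially the paper's argument: write $p(\mathbf y)$ as the integral of the likelihood against $\mu$, then collapse it to $\sum_i w_i\,\mathcal{N}(\mathbf y; \mathbf A\mathbf x^{(i)}, \bm\Sigma_{\mathbf y})$ using the atomic-measure (sifting) identity. Your extra check that the result is a Lebesgue density in $\mathbf y$, and your observation that linearity of $\mathbf A$ is never used, are sound refinements that the paper leaves implicit.
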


\begin{proof}
First, write the marginal of interest $p(\mathbf y)$ in integral form.
\begin{align}
      p(\mathbf y)
      &= \int_{\mathcal{X}} p(\mathbf y\mid \mathbf x) \mu(\mathrm{d}\mathbf x), \\
      &= \int_{\mathcal{X}} \mathcal{N}(\mathbf y; \mathbf A \mathbf x, \bm\Sigma_{\mathbf y}) \mu(\mathrm{d}\mathbf x).
      \label{eq:discrete-marginal}
\end{align}
Because $\mu$ is a discrete measure, we have for any measurable $g : \mathcal{X} \to \mathbb{R}$ and measurable set $A$
\begin{equation}
      \int_A g(\mathbf x) \mu(\mathrm{d}\mathbf x) = \sum_{i = 1}^N w_i g(\mathbf x^{(i)}) \chi_A(\mathbf x^{(i)}).
      \label{eq:atomic-measure-integral}
\end{equation}
Applying \Cref{eq:atomic-measure-integral} to \Cref{eq:discrete-marginal} with $A = \mathbb{R}^n$ gives the desired result.
\begin{align}
      p(\mathbf y)
      = \sum_{i = 1}^{N} w_i\, \mathcal{N}(\mathbf y; \mathbf A \mathbf x^{(i)}, \bm\Sigma_{\mathbf y}).
\end{align}
\end{proof}

\begin{lemma}[Gaussian-discrete posterior]
      \label[lemma]{lem:gaussian-discrete-posterior}
      Given Gaussian likelihood and discrete measure prior
      \begin{align*}
            &p(\mathbf y\mid \mathbf x) = \mathcal{N}(\mathbf y; \mathbf A \mathbf x, \bm\Sigma_{\mathbf y}), \quad \mu = \sum_{i = 1}^{N} w_i \delta_{\mathbf x^{(i)}}
      \end{align*}
      where $w_i \geq 0$ and $\sum w_i = 1$, the posterior measure $\mu^y$ is a discrete measure with distribution and weights
      \begin{equation*}
            \mu^y = \sum_{i = 1}^{N} \tilde w_i(\mathbf y) \delta_{\mathbf x^{(i)}}, \quad
            \tilde w_i(\mathbf y) := \frac{w_i \,\mathcal{N}(\mathbf y; \mathbf A \mathbf x^{(i)}, \bm\Sigma_{\mathbf y})}{\sum_{j = 1}^{N} w_j \, \mathcal{N}(\mathbf y; \mathbf A \mathbf x^{(j)}, \bm\Sigma_{\mathbf y})}.
      \end{equation*}
\end{lemma}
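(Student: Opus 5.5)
The plan is to apply Bayes' rule at the level of measures: characterize $\mu^y$ as the unique probability measure whose density with respect to $\mu$ equals $p(\mathbf y\mid \mathbf x)/p(\mathbf y)$. By \Cref{lem:gaussian-discrete-marginal}, the normalizer is already available in closed form,
\[
p(\mathbf y) = \sum_{j=1}^N w_j\,\mathcal{N}(\mathbf y; \mathbf A\mathbf x^{(j)}, \bm\Sigma_{\mathbf y}).
\]
This is strictly positive because Gaussian densities with SPD covariance are positive everywhere and at least one $w_j>0$. So the Radon--Nikodym derivative $\frac{\mathrm d\mu^y}{\mathrm d\mu}(\mathbf x) = \mathcal{N}(\mathbf y;\mathbf A\mathbf x,\bm\Sigma_{\mathbf y})/p(\mathbf y)$ is well defined and finite.

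Next, evaluate $\mu^y(B)$ for an arbitrary measurable set $B$. Write $\mu^y(B) = \int_B \frac{\mathrm d\mu^y}{\mathrm d\mu}(\mathbf x)\,\mu(\mathrm d\mathbf x)$ and apply the atomic-integral identity \Cref{eq:atomic-measure-integral} with $g(\mathbf x) = \mathcal{N}(\mathbf y;\mathbf A\mathbf x,\bm\Sigma_{\mathbf y})/p(\mathbf y)$. This gives
\[
\mu^y(B) = \sum_{i=1}^N \frac{w_i\,\mathcal{N}(\mathbf y;\mathbf A\mathbf x^{(i)},\bm\Sigma_{\mathbf y})}{p(\mathbf y)}\,\chi_B(\mathbf x^{(i)}) = \sum_{i=1}^N \tilde w_i(\mathbf y)\,\delta_{\mathbf x^{(i)}}(B).
\]
Since $B$ is arbitrary, this identifies $\mu^y$ as the claimed discrete measure. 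Two checks remain: $\tilde w_i\ge 0$ is immediate, and $\sum_i \tilde w_i = 1$ follows from the marginal formula.

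The main (mild) subtlety is justifying Bayes' rule for a non-density prior. I would phrase it through the joint measure $\mathcal{N}(\mathbf y;\mathbf A\mathbf x,\bm\Sigma_{\mathbf y})\,\mathrm d\mathbf y\,\mu(\mathrm d\mathbf x)$ and verify the disintegration property: for measurable $B$ and $E$,
\[
\int_E \mu^y(B)\,p(\mathbf y)\,\mathrm d\mathbf y = \int_E\int_B \mathcal{N}(\mathbf y;\mathbf A\mathbf x,\bm\Sigma_{\mathbf y})\,\mu(\mathrm d\mathbf x)\,\mathrm d\mathbf y.
\]
This holds termwise by the computation above. Repeated atoms $\mathbf x^{(i)}=\mathbf x^{(j)}$ need no special handling, since the weights simply add in both expressions.
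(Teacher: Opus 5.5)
Your proposal is correct and follows essentially the same route as the paper. Both arguments write $\frac{\mathrm d\mu^y}{\mathrm d\mu}=p(\mathbf y\mid\mathbf x)/p(\mathbf y)$ with the normalizer taken from the Gaussian-discrete marginal lemma, evaluate $\mu^y(B)$ through the atomic-integral identity, and check that the weights sum to one; your extra remarks on the positivity of $p(\mathbf y)$ and the disintegration check add rigor the paper leaves implicit without changing the argument.
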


\begin{proof}
By Bayes' rule, the definition of $p(\mathbf y\mid \mathbf x)$ by proposition, and $p(\mathbf y)$ by \Cref{lem:gaussian-discrete-marginal},
\begin{align}
      \frac{d\mu^y}{d\mu}(\mathbf x) &= \frac{p(\mathbf y\mid \mathbf x)}{p(\mathbf y)}
      = \frac{\mathcal{N}(\mathbf y; \mathbf A\mathbf x, \bm\Sigma_{\mathbf y})}{p(\mathbf y)}
      = \frac{\mathcal{N}(\mathbf y; \mathbf A\mathbf x, \bm\Sigma_{\mathbf y})}{\sum_{j = 1}^{N} w_j \, \mathcal{N}(\mathbf y; \mathbf A\mathbf x^{(j)}, \bm\Sigma_{\mathbf y})}.
      \label{eq:discrete-bayes}
\end{align}
Applying Radon-Nikodym theorem for measurable set $A$, substituting \Cref{eq:discrete-bayes} and applying \Cref{eq:atomic-measure-integral}, the probability assigned to $A$ by $\mu^y$ is given by
\begin{align}
      \mu^y(A)
      &= \int_A \frac{d\mu^y}{d\mu}(\mathbf x) \mu(\mathrm{d}\mathbf x), \\
      &= \int_A \frac{\mathcal{N}(\mathbf y; \mathbf A\mathbf x, \bm\Sigma_{\mathbf y})}{\sum_{j = 1}^{N} w_j \, \mathcal{N}(\mathbf y; \mathbf A\mathbf x^{(j)}, \bm\Sigma_{\mathbf y})} \mu(\mathrm{d}\mathbf x), \\
      &= \sum_{i = 1}^N \frac{w_i \mathcal{N}(\mathbf y; \mathbf A\mathbf x^{(i)}, \bm\Sigma_{\mathbf y})}{\sum_{j = 1}^{N} w_j \, \mathcal{N}(\mathbf y; \mathbf A\mathbf x^{(j)}, \bm\Sigma_{\mathbf y})} \chi_A(\mathbf x^{(i)}).
\end{align}
Thus, as measures the posterior is a reweighting of the prior
\begin{equation}
      \mu^y = \sum_{i = 1}^N \tilde w_i(\mathbf y) \delta_{\mathbf x^{(i)}}, \qquad
      \tilde w_i(\mathbf y) := \frac{w_i \,\mathcal{N}(\mathbf y; \mathbf A \mathbf x^{(i)}, \bm\Sigma_{\mathbf y})}{\sum_{j = 1}^{N} w_j \, \mathcal{N}(\mathbf y; \mathbf A \mathbf x^{(j)}, \bm\Sigma_{\mathbf y})},
\end{equation}
whose weights are nonnegative and sum to 1,
\begin{equation}
      \sum_{i = 1}^N \tilde w_i(\mathbf y) = \frac{\sum_{i = 1}^N w_i \,\mathcal{N}(\mathbf y; \mathbf A \mathbf x^{(i)}, \bm\Sigma_{\mathbf y})}{\sum_{j = 1}^{N} w_j \, \mathcal{N}(\mathbf y; \mathbf A \mathbf x^{(j)}, \bm\Sigma_{\mathbf y})} = 1.
\end{equation}
\end{proof}

\subsection{Linear updates of Gaussians}
\label[appendix]{sec:linear-updates-gaussians}

\begin{lemma}[Gaussian-Gaussian marginal]
      \label[lemma]{lem:gaussian-gaussian-marginal}
      Given Gaussian likelihood and Gaussian prior
      \begin{equation*}
            p(\mathbf y\mid \mathbf x) = \mathcal{N}(\mathbf y; \mathbf A\mathbf x, \bm\Sigma_{\mathbf y}), \quad p(\mathbf x) = \mathcal{N}(\mathbf x; \mathbf m_{\mathrm{pr}}, \mathbf C_{\mathrm{pr}})
      \end{equation*}
      the marginal distribution $p(\mathbf y)$ is a Gaussian with distribution
      \begin{equation*}
            p(\mathbf y) = \mathcal{N}(\mathbf y; \mathbf A \mathbf m_{\mathrm{pr}}, \bm\Sigma_{\mathbf y} + \mathbf A \mathbf C_{\mathrm{pr}}\mathbf A^\top).
      \end{equation*}
\end{lemma}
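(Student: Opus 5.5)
The plan is to represent the joint law of $(\mathbf x, \mathbf y)$ as a linear image of independent Gaussians and then read off the marginal of $\mathbf y$ directly, avoiding any completion of squares. Concretely, write $\mathbf x = \mathbf m_{\mathrm{pr}} + \mathbf C_{\mathrm{pr}}^{1/2}\mathbf z$ and $\mathbf y = \mathbf A\mathbf x + \bm\eta$. Here $\mathbf z \sim \mathcal{N}(\mathbf 0, \mathbf I_n)$ and $\bm\eta \sim \mathcal{N}(\mathbf 0, \bm\Sigma_{\mathbf y})$ are independent. This construction reproduces the given prior. Conditional on $\mathbf x$, the conditional law of $\mathbf y$ is exactly $\mathcal{N}(\mathbf y; \mathbf A\mathbf x, \bm\Sigma_{\mathbf y})$, so the joint (and hence the marginal $p(\mathbf y) = \int p(\mathbf y \mid \mathbf x) p(\mathbf x)\,\mathrm d\mathbf x$) matches the statement.

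Next, observe that $\mathbf y = \mathbf A\mathbf m_{\mathrm{pr}} + \mathbf A\mathbf C_{\mathrm{pr}}^{1/2}\mathbf z + \bm\eta$ is an affine function of the jointly Gaussian vector $(\mathbf z, \bm\eta)$. It is therefore Gaussian, and it suffices to compute its first two moments. By linearity of expectation, $\mathbb E[\mathbf y] = \mathbf A\mathbf m_{\mathrm{pr}}$. By independence of $\mathbf z$ and $\bm\eta$, the cross terms vanish, so
$\operatorname{Cov}(\mathbf y) = \mathbf A\mathbf C_{\mathrm{pr}}^{1/2}(\mathbf C_{\mathrm{pr}}^{1/2})^\top\mathbf A^\top + \bm\Sigma_{\mathbf y} = \bm\Sigma_{\mathbf y} + \mathbf A\mathbf C_{\mathrm{pr}}\mathbf A^\top$.
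This matrix is symmetric positive definite because $\bm\Sigma_{\mathbf y}$ is, so the Gaussian has a density, and that density is the stated $\mathcal{N}(\mathbf y; \mathbf A\mathbf m_{\mathrm{pr}}, \bm\Sigma_{\mathbf y} + \mathbf A\mathbf C_{\mathrm{pr}}\mathbf A^\top)$.

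The only step needing care is justifying that the marginal density defined by the integral coincides with the law of the constructed $\mathbf y$. This is a direct consequence of the disintegration $p(\mathbf x, \mathbf y) = p(\mathbf y\mid\mathbf x)p(\mathbf x)$ together with Fubini's theorem for nonnegative integrands. As a fallback, one could instead compute the characteristic function $\mathbb E[e^{i\mathbf s^\top\mathbf y}] = \mathbb E\bigl[e^{i\mathbf s^\top\mathbf A\mathbf x}\bigr]e^{-\frac12\mathbf s^\top\bm\Sigma_{\mathbf y}\mathbf s}$ by first conditioning on $\mathbf x$. Using the Gaussian characteristic function of $\mathbf x$, this yields $\exp\bigl(i\mathbf s^\top\mathbf A\mathbf m_{\mathrm{pr}} - \tfrac12\mathbf s^\top(\bm\Sigma_{\mathbf y} + \mathbf A\mathbf C_{\mathrm{pr}}\mathbf A^\top)\mathbf s\bigr)$, and uniqueness of characteristic functions then concludes the proof.
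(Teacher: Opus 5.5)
Your proof is correct and follows essentially the same route as the paper. Both arguments reparametrize as $\mathbf y = \mathbf A\mathbf x + \bm\eta$ with $\bm\eta$ independent of $\mathbf x$, use that linear images and sums of independent Gaussians are Gaussian, and read off mean $\mathbf A\mathbf m_{\mathrm{pr}}$ and covariance $\bm\Sigma_{\mathbf y} + \mathbf A\mathbf C_{\mathrm{pr}}\mathbf A^\top$. Your whitening step $\mathbf x = \mathbf m_{\mathrm{pr}} + \mathbf C_{\mathrm{pr}}^{1/2}\mathbf z$, the Fubini/disintegration justification, and the characteristic-function fallback only add rigor to that same argument.
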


\begin{proof}
By reparametrization, we can express the likelihood $p(\mathbf y \mid \mathbf x)$ as
\begin{equation}
      \mathbf y = \mathbf A \mathbf x + \bm\eta, \quad \bm\eta \sim \mathcal{N}(\mathbf 0, \bm\Sigma_{\mathbf y}),
\end{equation}
where $\bm\eta$ is drawn independently of $\mathbf x$. Observe that $\mathbf A \mathbf x$ is a linear transformation of a Gaussian, which is a Gaussian with distribution
\begin{equation}
      \mathbf A \mathbf x \sim \mathcal{N}(\mathbf A \mathbf m_{\mathrm{pr}}, \mathbf A \mathbf C_{\mathrm{pr}} \mathbf A^\top).
\end{equation}
Thus, $\mathbf A \mathbf x + \bm\eta$ is a sum of independent Gaussians, which hence has distribution
\begin{equation}
      \mathbf y = \mathbf A \mathbf x + \bm\eta \sim \mathcal{N}(\mathbf y; \mathbf A \mathbf m_{\mathrm{pr}}, \bm\Sigma_{\mathbf y} + \mathbf A \mathbf C_{\mathrm{pr}}\mathbf A^\top).
\end{equation}
\end{proof}

\begin{lemma}[Gaussian-Gaussian conjugacy]
      \label[lemma]{lem:gaussian-gaussian-conjugacy}
      Given Gaussian likelihood and Gaussian prior
      \begin{equation*}
            p(\mathbf y\mid \mathbf x) = \mathcal{N}(\mathbf y; \mathbf A \mathbf x, \bm\Sigma_{\mathbf y}), \quad p(\mathbf x) = \mathcal{N}(\mathbf x; \mathbf m_{\mathrm{pr}}, \mathbf C_{\mathrm{pr}}),
      \end{equation*}
      the posterior distribution $p(\mathbf x \mid \mathbf y)$ is also Gaussian $\mathcal{N}\left(\mathbf x; \mathbf m_{\mathrm{post}}, \mathbf C_{\mathrm{post}}\right)$ with 
      covariance and mean
      \begin{align*}
            \mathbf C_{\mathrm{post}} &:= (\mathbf A^\top \bm\Sigma_{\mathbf y}^{-1}\mathbf A + \mathbf C_{\mathrm{pr}}^{-1})^{-1}, \\
            \mathbf m_{\mathrm{post}} &:= \mathbf C_{\mathrm{post}}(\mathbf A^\top\bm\Sigma_{\mathbf y}^{-1}\mathbf y + \mathbf C_{\mathrm{pr}}^{-1}\mathbf m_{\mathrm{pr}}).
      \end{align*}
\end{lemma}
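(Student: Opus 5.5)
We prove \Cref{lem:gaussian-gaussian-conjugacy} by computing the unnormalized posterior density from Bayes' rule and completing the square in $\mathbf x$. The approach mirrors \Cref{lem:gaussian-discrete-posterior}, except that the prior now has a Lebesgue density. The Radon--Nikodym step therefore reduces to the ordinary density form $p(\mathbf x \mid \mathbf y) = p(\mathbf y\mid\mathbf x)p(\mathbf x)/p(\mathbf y)$. The evidence $p(\mathbf y)$ is available from \Cref{lem:gaussian-gaussian-marginal}, but we do not need its explicit form: it is constant in $\mathbf x$, and normalization will be automatic once we recognize a Gaussian kernel.

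\textbf{Step 1: collect the exponents.} Up to terms not depending on $\mathbf x$, the log-posterior is
\begin{equation*}
\log p(\mathbf x\mid\mathbf y) = -\tfrac12(\mathbf y - \mathbf A\mathbf x)^\top\bm\Sigma_{\mathbf y}^{-1}(\mathbf y-\mathbf A\mathbf x) - \tfrac12(\mathbf x-\mathbf m_{\mathrm{pr}})^\top\mathbf C_{\mathrm{pr}}^{-1}(\mathbf x-\mathbf m_{\mathrm{pr}}) + \mathrm{const}.
\end{equation*}
Expanding, the quadratic term is $-\tfrac12\mathbf x^\top(\mathbf A^\top\bm\Sigma_{\mathbf y}^{-1}\mathbf A + \mathbf C_{\mathrm{pr}}^{-1})\mathbf x$. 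Here we use symmetry of $\bm\Sigma_{\mathbf y}^{-1}$ and $\mathbf C_{\mathrm{pr}}^{-1}$ to merge the cross terms. The linear term is $\mathbf x^\top(\mathbf A^\top\bm\Sigma_{\mathbf y}^{-1}\mathbf y + \mathbf C_{\mathrm{pr}}^{-1}\mathbf m_{\mathrm{pr}})$.

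\textbf{Step 2: complete the square.} Set $\mathbf P := \mathbf A^\top\bm\Sigma_{\mathbf y}^{-1}\mathbf A + \mathbf C_{\mathrm{pr}}^{-1}$. We first check that $\mathbf P$ is symmetric positive definite. It is the sum of a positive semidefinite matrix ($\mathbf z^\top\mathbf A^\top\bm\Sigma_{\mathbf y}^{-1}\mathbf A\mathbf z = \|\mathbf A\mathbf z\|^2_{\bm\Sigma_{\mathbf y}^{-1}}\ge 0$) and the positive definite matrix $\mathbf C_{\mathrm{pr}}^{-1}$, so it is invertible. This justifies defining $\mathbf C_{\mathrm{post}} = \mathbf P^{-1}$ and $\mathbf m_{\mathrm{post}} = \mathbf C_{\mathrm{post}}(\mathbf A^\top\bm\Sigma_{\mathbf y}^{-1}\mathbf y + \mathbf C_{\mathrm{pr}}^{-1}\mathbf m_{\mathrm{pr}})$. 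With these definitions the exponent becomes
\begin{equation*}
-\tfrac12(\mathbf x - \mathbf m_{\mathrm{post}})^\top\mathbf C_{\mathrm{post}}^{-1}(\mathbf x-\mathbf m_{\mathrm{post}}) + \mathrm{const},
\end{equation*}
which one verifies by expanding the square and matching the quadratic and linear terms from Step 1.

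\textbf{Step 3: conclude.} The posterior density is proportional in $\mathbf x$ to the $\mathcal{N}(\mathbf m_{\mathrm{post}},\mathbf C_{\mathrm{post}})$ density, and both integrate to one. Hence they are equal, and the normalizing constants must agree.

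The only genuinely non-routine point is the invertibility and positive definiteness of $\mathbf P$, which Step 2 handles. This is what makes the completed square a valid Gaussian kernel even when $\mathbf A$ is rank-deficient (e.g.\ $m<n$). Everything else is bookkeeping.
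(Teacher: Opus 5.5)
Your proof is correct and follows essentially the same route as the paper: Bayes' rule, discarding $\mathbf x$-independent terms of the log-posterior, expanding the quadratic forms, and completing the square with $\mathbf C_{\mathrm{post}}^{-1}\mathbf m_{\mathrm{post}} = \mathbf A^\top\bm\Sigma_{\mathbf y}^{-1}\mathbf y + \mathbf C_{\mathrm{pr}}^{-1}\mathbf m_{\mathrm{pr}}$. You also check explicitly that $\mathbf A^\top\bm\Sigma_{\mathbf y}^{-1}\mathbf A + \mathbf C_{\mathrm{pr}}^{-1}$ is symmetric positive definite, which the paper leaves implicit and which is what makes the completed square a valid Gaussian kernel.
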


\begin{proof}
We use $\textrm{const.}$ to denote a collection of terms which are constants or are constants with respect to the input $\mathbf x$, such as normalization constants that are functions of $\mathbf y$.
By Bayes' rule, the posterior is given by
\begin{equation}
      p(\mathbf x \mid \mathbf y) = \frac{p(\mathbf y \mid \mathbf x)p(\mathbf x)}{p(\mathbf y)}.
\end{equation}
Taking logarithms,
\begin{equation}
      \log p(\mathbf x \mid \mathbf y) = \log p(\mathbf y \mid \mathbf x) + \log p(\mathbf x) - \log p(\mathbf y),
\end{equation}
we can expand the log-likelihood and log-prior as
\begin{align}
      \log p(\mathbf y \mid \mathbf x) &= - \frac12 (\mathbf y - \mathbf A \mathbf x)^\top \bm\Sigma_{\mathbf y}^{-1}(\mathbf y - \mathbf A \mathbf x) - \frac12 \log((2\pi)^{m} \det(\bm\Sigma_{\mathbf y})),\label{eq:gaussian-log-likelihood}\\
      \log p(\mathbf x) &= - \frac12 (\mathbf x - \mathbf m_{\mathrm{pr}})^\top \mathbf C_{\mathrm{pr}}^{-1}(\mathbf x - \mathbf m_{\mathrm{pr}}) - \frac12 \log((2\pi)^{n} \det(\mathbf C_{\mathrm{pr}}))\label{eq:gaussian-log-prior}.
\end{align}
Observe that $\log p(\mathbf y)$ and the log-determinant terms in \Cref{eq:gaussian-log-likelihood} and \Cref{eq:gaussian-log-prior} are constant w.r.t.\ $\mathbf x$, thus we can re-express the log-posterior as
\begin{equation}
      - 2\log p(\mathbf x \mid \mathbf y) = (\mathbf y - \mathbf A \mathbf x)^\top \bm\Sigma_{\mathbf y}^{-1}(\mathbf y - \mathbf A \mathbf x) + (\mathbf x - \mathbf m_{\mathrm{pr}})^\top \mathbf C_{\mathrm{pr}}^{-1}(\mathbf x - \mathbf m_{\mathrm{pr}}) + \textrm{const.}
\end{equation}
Expanding the quadratic forms, combining like terms, and collapsing terms constant with respect to $\mathbf x$,
\begin{align}
      - 2\log p(\mathbf x \mid \mathbf y)
      &= {\color{black}(\mathbf y - \mathbf A \mathbf x)^\top \bm\Sigma_{\mathbf y}^{-1}(\mathbf y - \mathbf A \mathbf x)} + {\color{black}(\mathbf x - \mathbf m_{\mathrm{pr}})^\top \mathbf C_{\mathrm{pr}}^{-1}(\mathbf x - \mathbf m_{\mathrm{pr}})} + \textrm{const.}, \\
      &= {\color{black} \underbrace{\mathbf y^\top\bm\Sigma_{\mathbf y}^{-1}\mathbf y}_{\textrm{const. w.r.t. }\mathbf x} - 2 \mathbf x^\top \mathbf A^\top \bm\Sigma_{\mathbf y}^{-1}\mathbf y + \mathbf x^\top \mathbf A^\top \bm\Sigma_{\mathbf y}^{-1}\mathbf A \mathbf x} \\
      &\quad+ {\color{black}\mathbf x^\top\mathbf C_{\mathrm{pr}}^{-1}\mathbf x - 2\mathbf x^\top \mathbf C_{\mathrm{pr}}^{-1}\mathbf m_{\mathrm{pr}} + \underbrace{\mathbf m_{\mathrm{pr}}^\top \mathbf C_{\mathrm{pr}}^{-1} \mathbf m_{\mathrm{pr}}}_{\textrm{const. w.r.t. }\mathbf x}} + \textrm{const.} \notag \\
      &= {\color{black}- 2\mathbf x^\top \mathbf A^\top\bm\Sigma_{\mathbf y}^{-1}\mathbf y + \mathbf x^\top \mathbf A^\top \bm\Sigma_{\mathbf y}^{-1}\mathbf A\mathbf x} + {\color{black}\mathbf x^\top\mathbf C_{\mathrm{pr}}^{-1}\mathbf x - 2\mathbf x^\top \mathbf C_{\mathrm{pr}}^{-1}\mathbf m_{\mathrm{pr}}} + \textrm{const.} \\
      &= \mathbf x^\top (\mathbf A^\top\bm\Sigma_{\mathbf y}^{-1}\mathbf A + \mathbf C_{\mathrm{pr}}^{-1})\mathbf x - 2\mathbf x^\top(\mathbf A^\top\bm\Sigma_{\mathbf y}^{-1}\mathbf y + \mathbf C_{\mathrm{pr}}^{-1}\mathbf m_{\mathrm{pr}}) + \textrm{const.} \\
      &= \mathbf x^\top \mathbf C_{\mathrm{post}}^{-1}\mathbf x - 2\mathbf x^\top \mathbf b + \textrm{const.}\label{eq:intermediate-post}
\end{align}
where
\begin{equation}
      \mathbf C_{\mathrm{post}} := (\mathbf A^\top\bm\Sigma_{\mathbf y}^{-1}\mathbf A + \mathbf C_{\mathrm{pr}}^{-1})^{-1}, \quad \mathbf b := \mathbf A^\top\bm\Sigma_{\mathbf y}^{-1}\mathbf y + \mathbf C_{\mathrm{pr}}^{-1}\mathbf m_{\mathrm{pr}}.
\end{equation}
For vectors $\mathbf u, \mathbf v$ and symmetric matrix $\mathbf M$, we have the quadratic form identity,
\begin{equation}
      (\mathbf u - \mathbf v)^\top \mathbf M(\mathbf u - \mathbf v) = \mathbf u^\top \mathbf M \mathbf u - 2\mathbf u^\top \mathbf M\mathbf v + \mathbf v^\top \mathbf M \mathbf v,
      \label{eq:quadratic-form-identity}
\end{equation}
where the third term is notably constant with respect to $\mathbf u$. Thus, if we define $\mathbf m_{\mathrm{post}}$ such that
\begin{equation}
      \mathbf C_{\mathrm{post}}^{-1}\mathbf m_{\mathrm{post}} = \mathbf b \implies \mathbf m_{\mathrm{post}} := \left(\mathbf A^\top\bm\Sigma_{\mathbf y}^{-1}\mathbf A + \mathbf C_{\mathrm{pr}}^{-1}\right)^{-1}(\mathbf A^\top\bm\Sigma_{\mathbf y}^{-1}\mathbf y + \mathbf C_{\mathrm{pr}}^{-1}\mathbf m_{\mathrm{pr}}),
      \label{eq:def-mu-post}
\end{equation}
which is again constant w.r.t.\ $\mathbf x$, \Cref{eq:intermediate-post} can be re-expressed by completing the square as
\begin{align}
      - 2 \log p(\mathbf x \mid \mathbf y)
      &= \mathbf x^\top \mathbf C_{\mathrm{post}}^{-1}\mathbf x - 2\mathbf x^\top \mathbf b + \textrm{const.}, \\
      &= \mathbf x^\top \mathbf C_{\mathrm{post}}^{-1}\mathbf x - 2\mathbf x^\top \mathbf C_{\mathrm{post}}^{-1}\mathbf m_{\mathrm{post}} + \textrm{const.}, \\
      &= \mathbf x^\top \mathbf C_{\mathrm{post}}^{-1}\mathbf x - 2\mathbf x^\top \mathbf C_{\mathrm{post}}^{-1}\mathbf m_{\mathrm{post}} +  \mathbf m_{\mathrm{post}}^\top\mathbf C_{\mathrm{post}}^{-1}\mathbf m_{\mathrm{post}} \\
            &\quad- \mathbf m_{\mathrm{post}}^\top\mathbf C_{\mathrm{post}}^{-1}\mathbf m_{\mathrm{post}} + \textrm{const.}, \notag\\
      &= (\mathbf x - \mathbf m_{\mathrm{post}})^\top \mathbf C_{\mathrm{post}}^{-1}(\mathbf x - \mathbf m_{\mathrm{post}}) - \mathbf m_{\mathrm{post}}^\top\mathbf C_{\mathrm{post}}^{-1}\mathbf m_{\mathrm{post}} + \textrm{const.}, \\
      &= (\mathbf x - \mathbf m_{\mathrm{post}})^\top \mathbf C_{\mathrm{post}}^{-1}(\mathbf x - \mathbf m_{\mathrm{post}}) + \textrm{const.} \\
      \implies \log p(\mathbf x \mid \mathbf y) &= -\frac12 (\mathbf x - \mathbf m_{\mathrm{post}})^\top \mathbf C_{\mathrm{post}}^{-1}(\mathbf x - \mathbf m_{\mathrm{post}}) + \textrm{const.}
\end{align}
Observe that the right-hand side is the logarithm of a Gaussian density (with suitable normalization constant), thus the posterior is a Gaussian with mean and covariance
\begin{equation}
      \mathbf m_{\mathrm{post}} := \underbrace{\left(\mathbf A^\top\bm\Sigma_{\mathbf y}^{-1}\mathbf A + \mathbf C_{\mathrm{pr}}^{-1}\right)^{-1}}_{\mathbf C_{\mathrm{post}}}(\mathbf A^\top\bm\Sigma_{\mathbf y}^{-1}\mathbf y + \mathbf C_{\mathrm{pr}}^{-1}\mathbf m_{\mathrm{pr}}), \quad \mathbf C_{\mathrm{post}} = (\mathbf A^\top\bm\Sigma_{\mathbf y}^{-1}\mathbf A + \mathbf C_{\mathrm{pr}}^{-1})^{-1}.
\end{equation}
\end{proof}

\subsection{Linear updates of Gaussian Mixtures}
\label{eq:linear-updates-gmms}

\begin{lemma}[Gaussian-GMM marginal]
      \label[lemma]{lem:gaussian-gmm-marginal}
      Given Gaussian likelihood and Gaussian mixture prior
      \begin{equation*}
            p(\mathbf y\mid \mathbf x) = \mathcal{N}(\mathbf y; \mathbf A \mathbf x, \bm\Sigma_{\mathbf y}), \quad p(\mathbf x) = \sum_{i = 1}^{N} w_i\, \mathcal{N}(\mathbf x; \mathbf m_{\mathrm{pr}, i}, \mathbf C_{\mathrm{pr}, i}),
      \end{equation*}
      with weights $w_i \geq 0$ and $\sum w_i = 1$, the marginal $p(\mathbf y)$ is a Gaussian mixture
      \begin{equation*}
            p(\mathbf y) = \sum_{i = 1}^{N} w_i\,\mathcal{N}(\mathbf y; \mathbf A \mathbf m_{\mathrm{pr}, i}, \bm\Sigma_{\mathbf y} + \mathbf A\mathbf C_{\mathrm{pr}, i}\mathbf A^\top).
      \end{equation*}
\end{lemma}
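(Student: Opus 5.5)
We write the marginal as an integral against the mixture prior and use linearity of the integral to split it into one integral per mixture component. Each of these is exactly the marginal treated in \Cref{lem:gaussian-gaussian-marginal}.

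First, by the law of total probability,
\begin{equation*}
      p(\mathbf y) = \int_{\mathcal X} \mathcal{N}(\mathbf y; \mathbf A\mathbf x, \bm\Sigma_{\mathbf y}) \sum_{i=1}^N w_i\, \mathcal{N}(\mathbf x; \mathbf m_{\mathrm{pr},i}, \mathbf C_{\mathrm{pr},i})\,\mathrm{d}\mathbf x .
\end{equation*}
The sum is finite and each integrand is nonnegative and integrable, since it is a product of a bounded Gaussian likelihood and a probability density. We may therefore interchange the sum and the integral, which gives
\begin{equation*}
      p(\mathbf y) = \sum_{i=1}^N w_i \int_{\mathcal X} \mathcal{N}(\mathbf y; \mathbf A\mathbf x, \bm\Sigma_{\mathbf y})\,\mathcal{N}(\mathbf x; \mathbf m_{\mathrm{pr},i}, \mathbf C_{\mathrm{pr},i})\,\mathrm{d}\mathbf x .
\end{equation*}

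Second, the $i$-th integral is the marginal density of $\mathbf y$ under the Gaussian prior $\mathcal{N}(\mathbf m_{\mathrm{pr},i}, \mathbf C_{\mathrm{pr},i})$ with the same Gaussian likelihood. By \Cref{lem:gaussian-gaussian-marginal} it equals $\mathcal{N}(\mathbf y; \mathbf A\mathbf m_{\mathrm{pr},i}, \bm\Sigma_{\mathbf y} + \mathbf A\mathbf C_{\mathrm{pr},i}\mathbf A^\top)$. Substituting this into the sum yields the claimed mixture.

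Third, the result is a valid Gaussian mixture. The weights $w_i$ are unchanged, so they remain nonnegative and sum to one. Each covariance $\bm\Sigma_{\mathbf y} + \mathbf A\mathbf C_{\mathrm{pr},i}\mathbf A^\top$ is symmetric positive definite, because $\bm\Sigma_{\mathbf y}$ is positive definite and $\mathbf A\mathbf C_{\mathrm{pr},i}\mathbf A^\top$ is positive semidefinite.

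None of these steps is a real obstacle. The only point needing care is the interchange of the finite sum and the integral, and this is elementary.
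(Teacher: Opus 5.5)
Your proposal is correct and follows the paper's proof essentially verbatim. Like the paper, you write $p(\mathbf y)$ as an integral against the mixture prior, interchange the finite sum and the integral, and apply \Cref{lem:gaussian-gaussian-marginal} to each component; your extra check that each $\bm\Sigma_{\mathbf y} + \mathbf A\mathbf C_{\mathrm{pr},i}\mathbf A^\top$ is positive definite is a harmless addition that the paper leaves implicit.
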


\begin{proof}
First, write the marginal of interest $p(\mathbf y)$ in integral form.
\begin{equation}
      p(\mathbf y) = \int_{\mathcal{X}}p(\mathbf y \mid \mathbf x)p(\mathbf x)\mathrm{d}\mathbf x.
\end{equation}
Substituting the definitions for the likelihood and prior, distributing, and interchanging the integral and the finite sum, we obtain
\begin{align}
     p(\mathbf y)
     &= \int_{\mathcal{X}}p(\mathbf y \mid \mathbf x)p(\mathbf x)\mathrm{d}\mathbf x, \notag\\
     &= \int_{\mathcal{X}}\mathcal{N}(\mathbf y; \mathbf A \mathbf x, \bm\Sigma_{\mathbf y})\sum_{i = 1}^{N} w_i\, \mathcal{N}(\mathbf x; \mathbf m_{\mathrm{pr}, i}, \mathbf C_{\mathrm{pr}, i})\mathrm{d}\mathbf x, \\
     &= \int_{\mathcal{X}}\sum_{i = 1}^{N} w_i\,\mathcal{N}(\mathbf y; \mathbf A \mathbf x, \bm\Sigma_{\mathbf y}) \,\mathcal{N}(\mathbf x; \mathbf m_{\mathrm{pr}, i}, \mathbf C_{\mathrm{pr}, i})\mathrm{d}\mathbf x, \\
     &= \sum_{i = 1}^{N} w_i\int_{\mathcal{X}}\mathcal{N}(\mathbf y; \mathbf A \mathbf x, \bm\Sigma_{\mathbf y})\, \mathcal{N}(\mathbf x; \mathbf m_{\mathrm{pr}, i}, \mathbf C_{\mathrm{pr}, i})\mathrm{d}\mathbf x.
     \label{eq:marginal-mixture}
\end{align}
Observe the integral is a marginal distribution with a linear-Gaussian likelihood and Gaussian prior, and so by Lemma \ref{lem:gaussian-gaussian-marginal} can be re-written as
\begin{equation}
      \int_{\mathcal{X}}\mathcal{N}(\mathbf y; \mathbf A \mathbf x, \bm\Sigma_{\mathbf y})\, \mathcal{N}(\mathbf x; \mathbf m_{\mathrm{pr}, i}, \mathbf C_{\mathrm{pr}, i})\mathrm{d}\mathbf x = \mathcal{N}(\mathbf y; \mathbf A \mathbf m_{\mathrm{pr}, i}, \bm\Sigma_{\mathbf y} + \mathbf A\mathbf C_{\mathrm{pr}, i}\mathbf A^\top).
      \label{eq:rewritten-marginal}
\end{equation}
Substituting \Cref{eq:rewritten-marginal} into \Cref{eq:marginal-mixture} gives us the desired result.
\begin{equation}
      p(\mathbf y) = \sum_{i = 1}^{N} w_i\, \mathcal{N}(\mathbf y; \mathbf A \mathbf m_{\mathrm{pr}, i}, \bm\Sigma_{\mathbf y} + \mathbf A\mathbf C_{\mathrm{pr}, i}\mathbf A^\top).
\end{equation}
\end{proof}

\begin{lemma}[Gaussian-GMM conjugacy]
      \label[lemma]{lem:gaussian-gmm-conjugacy}
      Given a Gaussian likelihood and a Gaussian mixture prior
      \begin{equation*}
            p(\mathbf y\mid \mathbf x) = \mathcal{N}(\mathbf y; \mathbf A \mathbf x, \bm\Sigma_{\mathbf y}),
            \qquad
            p(\mathbf x) = \sum_{i = 1}^{N} w_i\, \mathcal{N}(\mathbf x; \mathbf m_{\mathrm{pr}, i}, \mathbf C_{\mathrm{pr}, i}),
      \end{equation*}
      with weights $w_i \geq 0$ and $\sum w_i = 1$, the posterior distribution $p(\mathbf x \mid \mathbf y)$ is a Gaussian mixture
      \begin{equation*}
            p(\mathbf x \mid \mathbf y) = \sum_{i = 1}^{N}\tilde w_{i}(\mathbf y) \mathcal{N}\left(\mathbf x; \mathbf m_{\mathrm{post}, i}, \mathbf C_{\mathrm{post}, i}\right),
      \end{equation*}
      with component covariances, means, and weights
      \begin{align*}
            \mathbf C_{\mathrm{post}, i}
            &:= \left(\mathbf A^\top\bm\Sigma_{\mathbf y}^{-1}\mathbf A + \mathbf C_{\mathrm{pr}, i}^{-1}\right)^{-1},\\
            \mathbf m_{\mathrm{post}, i} &:= \mathbf C_{\mathrm{post}, i}\left(\mathbf A^\top\bm\Sigma_{\mathbf y}^{-1}\mathbf y + \mathbf C_{\mathrm{pr}, i}^{-1}\mathbf m_{\mathrm{pr}, i}\right),\\
            \tilde w_i(\mathbf y)& := \frac{w_i\,\mathcal{N}\left(\mathbf y; \mathbf A \mathbf m_{\mathrm{pr}, i}, \bm\Sigma_{\mathbf y} + \mathbf A \mathbf C_{\mathrm{pr}, i}\mathbf A^\top\right)}{\sum_{j = 1}^{N}w_j\,\mathcal{N}\left(\mathbf y; \mathbf A \mathbf m_{\mathrm{pr}, j}, \bm\Sigma_{\mathbf y} + \mathbf A \mathbf C_{\mathrm{pr}, j}\mathbf A^\top\right)}.
      \end{align*}
\end{lemma}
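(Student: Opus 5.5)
We will obtain the posterior as a reweighting of the mixture components, each updated by Gaussian--Gaussian conjugacy (\Cref{lem:gaussian-gaussian-conjugacy}), with the normalizing constant supplied by the Gaussian--GMM marginal (\Cref{lem:gaussian-gmm-marginal}).

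\textbf{Bayes' rule.} Start from
\begin{equation*}
p(\mathbf x \mid \mathbf y) = \frac{p(\mathbf y\mid\mathbf x)p(\mathbf x)}{p(\mathbf y)} = \frac{\sum_{i=1}^N w_i\,\mathcal{N}(\mathbf y;\mathbf A\mathbf x,\bm\Sigma_{\mathbf y})\,\mathcal{N}(\mathbf x;\mathbf m_{\mathrm{pr},i},\mathbf C_{\mathrm{pr},i})}{\sum_{j=1}^N w_j\,\mathcal{N}(\mathbf y;\mathbf A\mathbf m_{\mathrm{pr},j},\bm\Sigma_{\mathbf y}+\mathbf A\mathbf C_{\mathrm{pr},j}\mathbf A^\top)}.
\end{equation*}
Here the prior is distributed over the likelihood, and the denominator is taken directly from \Cref{lem:gaussian-gmm-marginal}.

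\textbf{Key step: factor each summand.} For each fixed $i$, apply Bayes' rule to the single-Gaussian pair consisting of likelihood $\mathcal{N}(\mathbf y;\mathbf A\mathbf x,\bm\Sigma_{\mathbf y})$ and prior $\mathcal{N}(\mathbf x;\mathbf m_{\mathrm{pr},i},\mathbf C_{\mathrm{pr},i})$. By \Cref{lem:gaussian-gaussian-marginal}, the marginal of this pair is $\mathcal{N}(\mathbf y;\mathbf A\mathbf m_{\mathrm{pr},i},\bm\Sigma_{\mathbf y}+\mathbf A\mathbf C_{\mathrm{pr},i}\mathbf A^\top)$. By \Cref{lem:gaussian-gaussian-conjugacy}, its posterior is $\mathcal{N}(\mathbf x;\mathbf m_{\mathrm{post},i},\mathbf C_{\mathrm{post},i})$, with exactly the stated formulas. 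Hence, for every $\mathbf x$,
\begin{equation*}
\mathcal{N}(\mathbf y;\mathbf A\mathbf x,\bm\Sigma_{\mathbf y})\,\mathcal{N}(\mathbf x;\mathbf m_{\mathrm{pr},i},\mathbf C_{\mathrm{pr},i}) = \mathcal{N}(\mathbf y;\mathbf A\mathbf m_{\mathrm{pr},i},\bm\Sigma_{\mathbf y}+\mathbf A\mathbf C_{\mathrm{pr},i}\mathbf A^\top)\,\mathcal{N}(\mathbf x;\mathbf m_{\mathrm{post},i},\mathbf C_{\mathrm{post},i}).
\end{equation*}
This pointwise identity is the only real content of the proof. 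One might worry that \Cref{lem:gaussian-gaussian-conjugacy} identifies the posterior only up to the constants it discards. Normalization resolves this: the left side is the joint density, the conditional is a normalized Gaussian density, and their ratio is therefore exactly the marginal.

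\textbf{Assembly.} Substitute the identity into the numerator. The $\mathbf x$-independent factors $w_i\,\mathcal{N}(\mathbf y;\mathbf A\mathbf m_{\mathrm{pr},i},\dots)$, divided by the common denominator, are precisely $\tilde w_i(\mathbf y)$. These weights are nonnegative and sum to one, as in \Cref{lem:gaussian-discrete-posterior}. This gives the stated mixture.

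The main obstacle is bookkeeping rather than analysis. In particular, we must check that positive definiteness of $\mathbf C_{\mathrm{pr},i}$ and $\bm\Sigma_{\mathbf y}$ makes $\mathbf C_{\mathrm{post},i}$ well-defined, which holds because it is the inverse of a sum of a PSD matrix and a PD matrix.
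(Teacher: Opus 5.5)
Your proposal is correct and follows essentially the same route as the paper: Bayes' rule with the denominator taken from the Gaussian--GMM marginal lemma, a componentwise factorization of each summand into $p_i(\mathbf y)\,p_i(\mathbf x\mid\mathbf y)$ via the Gaussian--Gaussian marginal and conjugacy lemmas, and reassembly into the normalized weights $\tilde w_i(\mathbf y)$. Your extra remarks are small but useful tightenings of steps the paper leaves implicit: that normalization pins down the discarded constants, and that $\mathbf A^\top\bm\Sigma_{\mathbf y}^{-1}\mathbf A + \mathbf C_{\mathrm{pr},i}^{-1}$ is invertible.
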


\begin{proof}
By Bayes' rule, the posterior can be written as
\begin{equation}
p(\mathbf x \mid \mathbf y)
= \frac{p(\mathbf y \mid \mathbf x)p(\mathbf x)}{p(\mathbf y)}.
\end{equation}
For the denominator, observe that $p(\mathbf y)$ is a marginal with linear-Gaussian likelihood and Gaussian mixture prior, hence by \Cref{lem:gaussian-gmm-marginal} is
\begin{equation}
      p(\mathbf y) = \sum_{j = 1}^{N} w_j\,\mathcal{N}\left(\mathbf y; \mathbf A \mathbf m_{\mathrm{pr}, j}, \bm\Sigma_{\mathbf y} + \mathbf A \mathbf C_{\mathrm{pr}, j}\mathbf A^\top\right)
      \label{eq:final-denominator}
\end{equation}
For the numerator, we first substitute the known forms for the likelihood and prior to obtain
\begin{align}
p(\mathbf y \mid \mathbf x)p(\mathbf x)
&= \mathcal{N}(\mathbf y; \mathbf A \mathbf x, \bm\Sigma_{\mathbf y})
\sum_{i = 1}^{N} w_i\,\mathcal{N}(\mathbf x; \mathbf m_{\mathrm{pr}, i}, \mathbf C_{\mathrm{pr}, i}) \\
&= \sum_{i = 1}^{N} w_i\,\mathcal{N}(\mathbf y; \mathbf A \mathbf x, \bm\Sigma_{\mathbf y})\,\mathcal{N}(\mathbf x; \mathbf m_{\mathrm{pr}, i}, \mathbf C_{\mathrm{pr}, i}). \label{eq:posterior-numerator}
\end{align}
We can re-write the (unweighted) summand via Bayes' rule as
\begin{equation}
p_i(\mathbf x \mid \mathbf y) = \frac{\mathcal{N}(\mathbf y; \mathbf A \mathbf x, \bm\Sigma_{\mathbf y})\, \mathcal{N}(\mathbf x; \mathbf m_{\mathrm{pr}, i}, \mathbf C_{\mathrm{pr}, i})}{p_i(\mathbf y)}.
\label{eq:bayes-component-posterior}
\end{equation}
where $p_i(\mathbf y)$ and $p_i(\mathbf x \mid \mathbf y)$ are the marginal and posterior of the $i$th component. Observe that the posterior of the $i$th component is exactly the Gaussian-Gaussian conjugate form of \Cref{lem:gaussian-gaussian-conjugacy}, thus the corresponding posterior update of component $i$ is Gaussian with distribution
\begin{equation}
p_i(\mathbf x \mid \mathbf y)
= \mathcal{N}\left(\mathbf x; \mathbf m_{\mathrm{post}, i}, \mathbf C_{\mathrm{post}, i}\right),
\label{eq:component-posterior}
\end{equation}
with covariance and mean
\begin{align}
\mathbf C_{\mathrm{post}, i}
&= \left(\mathbf A^\top\bm\Sigma_{\mathbf y}^{-1}\mathbf A + \mathbf C_{\mathrm{pr}, i}^{-1}\right)^{-1}, \\
\mathbf m_{\mathrm{post}, i}
&= \mathbf C_{\mathrm{post}, i}
\left(\mathbf A^\top\bm\Sigma_{\mathbf y}^{-1}\mathbf y + \mathbf C_{\mathrm{pr}, i}^{-1}\mathbf m_{\mathrm{pr}, i}\right).
\end{align}
Similarly, the marginal of the $i$th component $p_i(\mathbf y)$ is exactly the Gaussian-Gaussian marginal form of Lemma~\ref{lem:gaussian-gaussian-marginal}, thus is a Gaussian with distribution
\begin{equation}
p_i(\mathbf y) = \mathcal{N}\left(\mathbf y; \mathbf A \mathbf m_{\mathrm{pr}, i}, \bm\Sigma_{\mathbf y} + \mathbf A \mathbf C_{\mathrm{pr}, i}\mathbf A^\top\right),
\label{eq:component-marginal}
\end{equation}
Rearranging \Cref{eq:bayes-component-posterior} and substituting \Cref{eq:component-posterior} and \Cref{eq:component-marginal},
\begin{align}
      \mathcal{N}(\mathbf y; \mathbf A \mathbf x, \bm\Sigma_{\mathbf y})\, \mathcal{N}(\mathbf x; \mathbf m_{\mathrm{pr}, i}, \mathbf C_{\mathrm{pr}, i}) &= p_i(\mathbf y)p_i(\mathbf x \mid \mathbf y),\\
      &= \mathcal{N}\bigl(\mathbf y; \mathbf A \mathbf m_{\mathrm{pr}, i}, \bm\Sigma_{\mathbf y} + \mathbf A \mathbf C_{\mathrm{pr}, i}\mathbf A^\top\bigr) \mathcal{N}\left(\mathbf x; \mathbf m_{\mathrm{post}, i}, \mathbf C_{\mathrm{post}, i}\right),
\end{align}
we can rewrite the numerator summand \Cref{eq:posterior-numerator} as
\begin{equation}
      p(\mathbf y \mid \mathbf x)p(\mathbf x) = \sum_{i = 1}^N w_i \,\mathcal{N}\left(\mathbf y; \mathbf A \mathbf m_{\mathrm{pr}, i}, \bm\Sigma_{\mathbf y} + \mathbf A \mathbf C_{\mathrm{pr}, i}\mathbf A^\top\right)\mathcal{N}\left(\mathbf x; \mathbf m_{\mathrm{post}, i}, \mathbf C_{\mathrm{post}, i}\right). \label{eq:final-numerator}
\end{equation}
Thus, substituting the computed numerator and denominator terms \Cref{eq:final-numerator} and \Cref{eq:final-denominator} into the original Bayes' rule expression, we have that the posterior is a Gaussian mixture with distribution
\begin{align}
      p(\mathbf x \mid \mathbf y)
      &= \sum_{i = 1}^{N} \frac{w_i\,\mathcal{N}\left(\mathbf y; \mathbf A \mathbf m_{\mathrm{pr}, i}, \bm\Sigma_{\mathbf y} + \mathbf A \mathbf C_{\mathrm{pr}, i} \mathbf A^\top\right)}{\sum_{j = 1}^{N} w_j\,\mathcal{N}\left(\mathbf y; \mathbf A \mathbf m_{\mathrm{pr}, j}, \bm\Sigma_{\mathbf y} + \mathbf A \mathbf C_{\mathrm{pr}, j}\mathbf A^\top\right)}\mathcal{N}\left(\mathbf x; \mathbf m_{\mathrm{post}, i}, \mathbf C_{\mathrm{post}, i}\right),\\
      &= \sum_{i = 1}^{N} \tilde w_i(\mathbf y)\,\mathcal{N}\left(\mathbf x; \mathbf m_{\mathrm{post}, i}, \mathbf C_{\mathrm{post}, i}\right).
\label{eq:gmm-post-unnormalized}
\end{align}
whose $i$th component has covariance, mean, and weight
\begin{align}
      \mathbf C_{\mathrm{post}, i}
      &= \left(\mathbf A^\top\bm\Sigma_{\mathbf y}^{-1}\mathbf A + \mathbf C_{\mathrm{pr}, i}^{-1}\right)^{-1}, \\
      \mathbf m_{\mathrm{post}, i}
      &= \mathbf C_{\mathrm{post}, i}
      \left(\mathbf A^\top\bm\Sigma_{\mathbf y}^{-1}\mathbf y + \mathbf C_{\mathrm{pr}, i}^{-1}\mathbf m_{\mathrm{pr}, i}\right), \\
      \tilde w_i(\mathbf y)
      &= \frac{w_i\,\mathcal{N}\left(\mathbf y; \mathbf A \mathbf m_{\mathrm{pr}, i}, \bm\Sigma_{\mathbf y} + \mathbf A \mathbf C_{\mathrm{pr}, i} \mathbf A^\top\right)}{\sum_{j = 1}^{N}w_j\,\mathcal{N}\left(\mathbf y; \mathbf A \mathbf m_{\mathrm{pr}, j}, \bm\Sigma_{\mathbf y} + \mathbf A \mathbf C_{\mathrm{pr}, j}\mathbf A^\top\right)}.
\end{align}
\end{proof}

\subsection{Gradients and scores}
\label[appendix]{sec:gradients-and-scores}

\begin{lemma}[Gaussian mixture score]
      \label[lemma]{lem:gmm-score}
      The score function of a Gaussian mixture with distribution
      \begin{equation*}
            p(\mathbf x) = \sum_{i = 1}^{N} w_i\, \mathcal{N}(\mathbf x; \mathbf m_i, \mathbf C_i)
      \end{equation*}
      is given by
      \begin{equation*}
            \nabla_{\mathbf x}\log p(\mathbf x) = -\sum_{i = 1}^{N} \tilde w_i(\mathbf x) \mathbf C_i^{-1}(\mathbf x - \mathbf m_i)
      \end{equation*}
      with updated weights
      \begin{equation*}
            \tilde w_i(\mathbf x) = \frac{w_i\, \mathcal{N}(\mathbf x; \mathbf m_i, \mathbf C_i)}{\sum_{j = 1}^{N} w_j \, \mathcal{N}(\mathbf x; \mathbf m_j, \mathbf C_j)}.
      \end{equation*}
\end{lemma}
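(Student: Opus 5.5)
The plan is to differentiate the logarithm of the mixture directly, using the chain rule for $\log$ and the closed-form gradient of a single Gaussian density. Writing $p(\mathbf x) = \sum_i w_i \mathcal{N}(\mathbf x; \mathbf m_i, \mathbf C_i)$, which is strictly positive and smooth everywhere, we have $\nabla_{\mathbf x}\log p(\mathbf x) = \nabla_{\mathbf x} p(\mathbf x)/p(\mathbf x)$. Since the sum is finite, the gradient passes through it term by term: $\nabla_{\mathbf x} p(\mathbf x) = \sum_i w_i \nabla_{\mathbf x}\mathcal{N}(\mathbf x; \mathbf m_i, \mathbf C_i)$.

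Next, compute the gradient of each component. Using $\log \mathcal{N}(\mathbf x; \mathbf m_i, \mathbf C_i) = -\tfrac12(\mathbf x-\mathbf m_i)^\top \mathbf C_i^{-1}(\mathbf x - \mathbf m_i) + \text{const.}$, together with the symmetry of $\mathbf C_i^{-1}$, we get $\nabla_{\mathbf x}\mathcal{N}(\mathbf x; \mathbf m_i, \mathbf C_i) = -\mathcal{N}(\mathbf x; \mathbf m_i, \mathbf C_i)\,\mathbf C_i^{-1}(\mathbf x - \mathbf m_i)$. This is the log-derivative trick $\nabla f = f\,\nabla\log f$ applied to a positive $f$.

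Substituting this back gives
\begin{equation*}
\nabla_{\mathbf x}\log p(\mathbf x) = -\sum_{i=1}^N \frac{w_i\,\mathcal{N}(\mathbf x; \mathbf m_i, \mathbf C_i)}{\sum_{j=1}^N w_j\,\mathcal{N}(\mathbf x; \mathbf m_j, \mathbf C_j)}\,\mathbf C_i^{-1}(\mathbf x - \mathbf m_i).
\end{equation*}
The fraction is exactly $\tilde w_i(\mathbf x)$, which completes the identity.

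There is no serious obstacle. The only points needing care are that $p(\mathbf x) > 0$, so the logarithm and the division are well defined (each $\mathbf C_i$ is symmetric positive definite and at least one $w_i>0$), and that the quadratic form has gradient $2\mathbf C_i^{-1}(\mathbf x - \mathbf m_i)$, which relies on the symmetry of $\mathbf C_i^{-1}$.
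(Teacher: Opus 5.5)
Your proof is correct and follows essentially the same route as the paper: write the score as $\nabla_{\mathbf x} p(\mathbf x)/p(\mathbf x)$, pass the gradient through the finite sum, use $\nabla_{\mathbf x}\mathcal{N}(\mathbf x; \mathbf m_i, \mathbf C_i) = -\mathcal{N}(\mathbf x; \mathbf m_i, \mathbf C_i)\,\mathbf C_i^{-1}(\mathbf x - \mathbf m_i)$, and recognize the normalized weights $\tilde w_i(\mathbf x)$. Your explicit checks that $p(\mathbf x) > 0$ and that $\mathbf C_i^{-1}$ is symmetric are small points the paper leaves implicit.
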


\begin{proof}
First, by chain rule,
\begin{equation}
      \nabla_{\mathbf x}\log p(\mathbf x) = \frac{\nabla_{\mathbf x}p(\mathbf x)}{p(\mathbf x)}.
\end{equation}
The denominator is a known Gaussian mixture by proposition, thus what remains is to compute the numerator. Applying linearity of the gradient and the gradient of the Gaussian density, we obtain
\begin{align}
      \nabla_{\mathbf x}p(\mathbf x)
      &= \nabla_{\mathbf x}\sum_{i = 1}^{N} w_i\, \mathcal{N}(\mathbf x; \mathbf m_i, \mathbf C_i) \\
      &=  \sum_{i = 1}^{N} w_i\, \nabla_{\mathbf x}\mathcal{N}(\mathbf x; \mathbf m_i, \mathbf C_i) \\
      &=  - \sum_{i = 1}^{N} w_i \, \mathcal{N}(\mathbf x; \mathbf m_i, \mathbf C_i)\,\mathbf C^{-1}_i(\mathbf x - \mathbf m_i).
\end{align}
Substituting into the numerator of our chain rule expression, the score of the Gaussian mixture is given by
\begin{equation}
      \nabla_{\mathbf x}\log p(\mathbf x) = -\sum_{i = 1}^{N} \tilde w_i(\mathbf x) \mathbf C_i^{-1}(\mathbf x - \mathbf m_i)
\end{equation}
with weights
\begin{equation}
      \tilde w_i(\mathbf x) = \frac{w_i\, \mathcal{N}(\mathbf x; \mathbf m_i, \mathbf C_i)}{\sum_{j = 1}^{N} w_j \, \mathcal{N}(\mathbf x; \mathbf m_j, \mathbf C_j)}.
\end{equation}
\end{proof}

\begin{lemma}[canonical weight gradient]
      \label[lemma]{lem:weight-gradient}
      The gradient of the weight function $w_i(\mathbf x_t, t)$ defined in \Cref{lem:finite-sample-diffusion} is given by
      \begin{equation*}
            \nabla_{\mathbf x_t}w_i(\mathbf x_t, t) = \frac{\sqrt{\bar\alpha(t)}}{1-\bar\alpha(t)}w_i(\mathbf x_t, t)\bigl(\mathbf x^{(i)} - \mathbf m_{0 \mid t}(\mathbf x_t)\bigr)
      \end{equation*}
\end{lemma}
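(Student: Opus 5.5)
The plan is to differentiate the softmax-type weight directly and then identify the resulting average with the finite-sample denoiser mean. For brevity, write $a := \sqrt{\bar\alpha(t)}$, $s := 1-\bar\alpha(t)$, and $\phi_i(\mathbf x_t) := \mathcal{N}(\mathbf x_t; a\mathbf x^{(i)}, s\mathbf I_n)$. By \Cref{lem:finite-sample-diffusion}, $w_i = \phi_i/\sum_j \phi_j$. The Gaussian gradient is
\[
\nabla_{\mathbf x_t}\phi_i = -\tfrac{1}{s}\phi_i\,(\mathbf x_t - a\mathbf x^{(i)}).
\]
The cleanest route is through logarithms:
\[
\nabla_{\mathbf x_t} w_i = w_i\Bigl(\nabla_{\mathbf x_t}\log\phi_i - \nabla_{\mathbf x_t}\log\textstyle\sum_j\phi_j\Bigr).
\]

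For the first term, $\nabla\log\phi_i = -\tfrac1s(\mathbf x_t - a\mathbf x^{(i)})$. For the second, note that $\sum_j\phi_j = N p_t^N$, so its log-gradient is the score of $p_t^N$. By \Cref{lem:gmm-score}, or directly by the quotient rule, this equals
\[
-\tfrac1s\sum_j w_j(\mathbf x_t - a\mathbf x^{(j)}) = -\tfrac1s\Bigl(\mathbf x_t - a\sum_j w_j\mathbf x^{(j)}\Bigr),
\]
using $\sum_j w_j = 1$. Subtracting, the $\mathbf x_t$ terms cancel and we get
\[
\nabla_{\mathbf x_t} w_i = \tfrac{a}{s}\,w_i\Bigl(\mathbf x^{(i)} - \sum_j w_j\mathbf x^{(j)}\Bigr).
\]

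The remaining step is to identify $\sum_j w_j\mathbf x^{(j)}$ with $\mathbf m_{0\mid t}(\mathbf x_t)$. This is immediate, since by \Cref{lem:finite-sample-diffusion} the denoiser $p^N_{0\mid t}$ is the discrete measure with weights $w_j$ at the atoms $\mathbf x^{(j)}$, so its mean is exactly this sum. As a consistency check, the same conclusion follows from Tweedie's formula \eqref{eq:tweedies-formula} applied with the score computed above:
\[
\tfrac1a\Bigl(\mathbf x_t + s\cdot\bigl(-\tfrac1s\bigr)\bigl(\mathbf x_t - a\textstyle\sum_j w_j\mathbf x^{(j)}\bigr)\Bigr) = \textstyle\sum_j w_j\mathbf x^{(j)}.
\]
Substituting back gives the stated formula with prefactor $\sqrt{\bar\alpha(t)}/(1-\bar\alpha(t))$.

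The only point needing care is this identification. One must make clear that $\mathbf m_{0\mid t}$ here denotes the mean of the finite-sample denoiser $p^N_{0\mid t}$, and not that of the true prior. Everything else is routine calculus, valid for $t>0$ where $s>0$.
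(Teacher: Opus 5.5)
Your proof is correct and is essentially the paper's computation. The paper applies the quotient rule to $N_i/D$ instead of writing $\nabla w_i = w_i \nabla \log w_i$, but both routes reduce to the Gaussian gradient, cancel the $\mathbf x_t$ terms via $\sum_j w_j = 1$, and identify $\sum_j w_j \mathbf x^{(j)}$ with the finite-sample denoiser mean $\mathbf m_{0\mid t}$. Your remark that $\mathbf m_{0\mid t}$ must be read as the mean of $p^N_{0\mid t}$ is a useful clarification that the paper leaves implicit.
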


\begin{proof}
We start by recalling the definition of $w_i$ for convenience:
\begin{equation}
      w_i(\mathbf x_t, t) :=
      \frac{\mathcal{N}\Bigl(\mathbf x_t; \sqrt{\bar\alpha(t)}\mathbf x^{(i)}, \bigl(1 - \bar \alpha(t)\bigr)\mathbf I_n\Bigr)}{\sum_{j = 1}^{N}\mathcal{N}\Bigl(\mathbf x_t; \sqrt{\bar\alpha(t)}\mathbf x^{(j)}, \bigl(1 - \bar \alpha(t)\bigr)\mathbf I_n\Bigr)}
\end{equation}
For bookkeeping, define shorthand notation for the numerator and denominator of the weight function
\begin{align}
      N_i(\mathbf x_t) &:= \mathcal{N}\Bigl(\mathbf x_t; \sqrt{\bar\alpha(t)}\mathbf x^{(i)}, \bigl(1 - \bar \alpha(t)\bigr)\mathbf I_n\Bigr) \\
      D(\mathbf x_t) &:= \sum_{j = 1}^{N}\mathcal{N}\Bigl(\mathbf x_t; \sqrt{\bar\alpha(t)}\mathbf x^{(j)}, \bigl(1 - \bar \alpha(t)\bigr)\mathbf I_n\Bigr) \\
      &= \sum_{j = 1}^{N}N_j(\mathbf x_t)
\end{align}
To compute the gradient $\nabla_{\mathbf x_t}w_i(\mathbf x_t, t)$, we proceed via quotient rule
\begin{equation}
      \nabla_{\mathbf x_t}w_i(\mathbf x_t, t) = \frac{D(\mathbf x_t)\nabla_{\mathbf x_t}N_i(\mathbf x_t) - N_i(\mathbf x_t)\nabla_{\mathbf x_t}D(\mathbf x_t)}{(D(\mathbf x_t))^2}
\end{equation}
The gradient of the numerator term (for any $i$) can be computed via chain rule
\begin{align}
      \nabla_{\mathbf x_t}\log N_i(\mathbf x_t) &= \frac{\nabla_{\mathbf x_t}N_i(\mathbf x_t)}{N_i(\mathbf x_t)}, \\
      \nabla_{\mathbf x_t}N_i(\mathbf x_t)
      &= N_i(\mathbf x_t) \nabla_{\mathbf x_t}\log N_i(\mathbf x_t), \\
      &= N_i(\mathbf x_t) \nabla_{\mathbf x_t}\Bigl(-\frac{1}{2}\bigl(1-\bar\alpha(t)\bigr)^{-1}\bigl(\mathbf x_t-\sqrt{\bar\alpha(t)}\mathbf x^{(i)}\bigr)^\top \bigl(\mathbf x_t-\sqrt{\bar\alpha(t)}\mathbf x^{(i)}\bigr) + \textrm{const.}\Bigr), \\
      &= -N_i(\mathbf x_t)\bigl(1-\bar\alpha(t)\bigr)^{-1}\bigl(\mathbf x_t-\sqrt{\bar\alpha(t)}\mathbf x^{(i)}\bigr),
\end{align}
which can then be used to compute the gradient of the denominator term via linearity
\begin{align}
      \nabla_{\mathbf x_t}D(\mathbf x_t)
      &= \sum_{j = 1}^{N} \nabla_{\mathbf x_t}N_j(\mathbf x_t), \\
      &= \sum_{j = 1}^{N} -N_j(\mathbf x_t)\bigl(1-\bar\alpha(t)\bigr)^{-1}\bigl(\mathbf x_t-\sqrt{\bar\alpha(t)}\mathbf x^{(j)}\bigr), \\
      &= -\frac{1}{1-\bar\alpha(t)} \sum_{j = 1}^{N} N_j(\mathbf x_t)\bigl(\mathbf x_t-\sqrt{\bar\alpha(t)}\mathbf x^{(j)}\bigr)
\end{align}
Next, the components of numerator term in the \emph{gradient} of the weight function $w_i$ are computable as
\begin{align}
      D(\mathbf x_t)\nabla_{\mathbf x_t}N_i(\mathbf x_t)
      &= \left(\sum_{j = 1}^{N}N_j(\mathbf x_t)\right)\cdot \left(-N_i(\mathbf x_t)\bigl(1-\bar\alpha(t)\bigr)^{-1}\bigl(\mathbf x_t-\sqrt{\bar\alpha(t)}\mathbf x^{(i)}\bigr)\right) \\
      &= - \frac{N_i(\mathbf x_t)\bigl(\mathbf x_t-\sqrt{\bar\alpha(t)}\mathbf x^{(i)}\bigr)}{1-\bar\alpha(t)}\sum_{j = 1}^{N}N_j(\mathbf x_t), \\
      N_i(\mathbf x_t) \nabla_{\mathbf x_t}D(\mathbf x_t)
      &= -\frac{N_i(\mathbf x_t)}{1-\bar\alpha(t)}\sum_{j = 1}^{N} N_j(\mathbf x_t)\bigl(\mathbf x_t-\sqrt{\bar\alpha(t)}\mathbf x^{(j)}\bigr).
\end{align}
Taking the difference, the entire numerator simplifies as
\begin{align}
      &D(\mathbf x_t)\nabla_{\mathbf x_t}N_i(\mathbf x_t) - N_i(\mathbf x_t) \nabla_{\mathbf x_t}D(\mathbf x_t) \\
      &=- \frac{N_i(\mathbf x_t)\bigl(\mathbf x_t-\sqrt{\bar\alpha(t)}\mathbf x^{(i)}\bigr)}{1-\bar\alpha(t)}\sum_{j = 1}^{N}N_j(\mathbf x_t) + \frac{N_i(\mathbf x_t)}{1-\bar\alpha(t)}\sum_{j = 1}^{N} N_j(\mathbf x_t)\bigl(\mathbf x_t-\sqrt{\bar\alpha(t)}\mathbf x^{(j)}\bigr) \\
      &=- \frac{N_i(\mathbf x_t)}{1-\bar\alpha(t)}\sum_{j = 1}^{N}N_j(\mathbf x_t)\bigl(\mathbf x_t-\sqrt{\bar\alpha(t)}\mathbf x^{(i)}\bigr) + \frac{N_i(\mathbf x_t)}{1-\bar\alpha(t)}\sum_{j = 1}^{N} N_j(\mathbf x_t)\bigl(\mathbf x_t-\sqrt{\bar\alpha(t)}\mathbf x^{(j)}\bigr) \\
      &= \frac{N_i(\mathbf x_t)}{1-\bar\alpha(t)}\sum_{j = 1}^{N} N_j(\mathbf x_t)\left[\bigl(\mathbf x_t-\sqrt{\bar\alpha(t)}\mathbf x^{(j)}\bigr) - \bigl(\mathbf x_t-\sqrt{\bar\alpha(t)}\mathbf x^{(i)}\bigr)\right] \\
      &= \frac{\sqrt{\bar\alpha(t)}N_i(\mathbf x_t)}{1-\bar\alpha(t)}\sum_{j = 1}^{N} N_j(\mathbf x_t) (\mathbf x^{(i)} - \mathbf x^{(j)}).
\end{align}
Substituting the numerator into the quotient rule expression, and simply by recalling that $w_i(\mathbf x_t, t) = N_i(\mathbf x_t) / D(\mathbf x_t)$,
\begin{align}
      \nabla_{\mathbf x_t}w_i(\mathbf x_t, t)
      &= \frac{D(\mathbf x_t)\nabla_{\mathbf x_t}N_i(\mathbf x_t) - N_i(\mathbf x_t) \nabla_{\mathbf x_t}D(\mathbf x_t)}{(D(\mathbf x_t))^2} \\
      &= \frac{1}{(D(\mathbf x_t))^2} \left(\frac{\sqrt{\bar\alpha(t)}N_i(\mathbf x_t)}{1-\bar\alpha(t)}\sum_{j = 1}^{N} N_j(\mathbf x_t)(\mathbf x^{(i)} - \mathbf x^{(j)})\right)\\
      &= \frac{\sqrt{\bar\alpha(t)}N_i(\mathbf x_t)/D(\mathbf x_t)}{1-\bar\alpha(t)}\sum_{j = 1}^{N} \frac{N_j(\mathbf x_t)}{D(\mathbf x_t)}(\mathbf x^{(i)} - \mathbf x^{(j)}) \\
      &= \frac{\sqrt{\bar\alpha(t)}}{1-\bar\alpha(t)}w_i(\mathbf x_t, t)\sum_{j = 1}^{N} w_j(\mathbf x_t, t) (\mathbf x^{(i)} - \mathbf x^{(j)}).
\end{align}
Rearranging, recalling that $\sum w_i = 1$, and applying the definition of $\mathbf m_{0 \mid t}$ gives the desired result.
\begin{align}
      \nabla_{\mathbf x_t}w_i(\mathbf x_t, t)
      &= \frac{\sqrt{\bar\alpha(t)}}{1-\bar\alpha(t)}w_i(\mathbf x_t, t)\sum_{j = 1}^{N} w_j(\mathbf x_t, t) \mathbf x^{(i)} - w_j(\mathbf x_t, t)\mathbf x^{(j)} \\
      &= \frac{\sqrt{\bar\alpha(t)}}{1-\bar\alpha(t)}w_i(\mathbf x_t, t)\Biggl(\mathbf x^{(i)}\underbrace{\sum_{j = 1}^{N} w_j(\mathbf x_t, t)}_1  - \underbrace{\sum_{j = 1}^{N}w_j(\mathbf x_t, t)\mathbf x^{(j)}}_{\mathbf m_{0 \mid t}}\Biggr)\\
      &= \frac{\sqrt{\bar\alpha(t)}}{1-\bar\alpha(t)}w_i(\mathbf x_t, t)\bigl(\mathbf x^{(i)} - \mathbf m_{0\mid t}\bigr)
\end{align}

\end{proof}

\setcounter{theorem}{0}

\section{Finite sample regime proofs}
\label[appendix]{sec:fsr-proofs}
\subsection{Proof of \Cref{lem:finite-sample-diffusion}}
\label[appendix]{proof:finite-sample-marginal}

\begin{proof}
By the diffusion forward transition, the probability of  state $\mathbf x_t$ at time $t$ given initial condition $\mathbf x_0$ is given by
\begin{equation}
    \label{eq:prop1-vpsde-likelihood}
    p_t(\mathbf x_t \mid \mathbf x_0) = \mathcal{N}\Bigl(\mathbf x_t; \sqrt{\bar\alpha(t)}\mathbf x_0, \bigl(1 - \bar \alpha(t)\bigr)\mathbf I_n\Bigr).
\end{equation}
Applying \Cref{lem:gaussian-discrete-marginal} with likelihood \Cref{eq:prop1-vpsde-likelihood} and prior $p_{\mathrm{pr}}^N$ gives the desired marginal.
\begin{equation}
    p_t(\mathbf x_t) = \frac{1}{N}\sum_{i = 1}^{N}\mathcal{N}\Bigl(\mathbf x_t; \sqrt{\bar\alpha(t)}\mathbf x^{(i)}, \bigl(1 - \bar \alpha(t)\bigr)\mathbf I_n\Bigr).
\end{equation}
Similarly, applying Lemma~\ref{lem:gaussian-discrete-posterior} with likelihood \Cref{eq:prop1-vpsde-likelihood} and prior $p_{\mathrm{pr}}^N$ gives the desired posterior.
\end{proof}

\subsection{Proof of \Cref{thm:finite-sample-likelihood}}
\label[appendix]{proof:finite-sample-likelihood}

\begin{proof}
We start by writing the likelihood $p_{\mathbf y \mid t}(\mathbf y \mid \mathbf x_t)$ in marginal form
\begin{equation}
    p_{\mathbf y \mid t}(\mathbf y \mid \mathbf x_t) = \int p_{\mathbf y \mid 0, t}(\mathbf y\mid \mathbf x_0, \mathbf x_t)p_{0 \mid t}(\mathbf x_0 \mid \mathbf x_t)\mathrm{d}\mathbf x_0.
\end{equation}
The likelihood kernel $p(\mathbf y\mid \mathbf x_0, \mathbf x_t)$ can be rewritten via Markov property and the definition of the forward model.
\begin{equation}
    p(\mathbf y\mid \mathbf x_0, \mathbf x_t) = p_{\mathbf y \mid 0}(\mathbf y \mid \mathbf x_0) = \mathcal{N}(\mathbf y; \mathcal{A}(\mathbf x_0), \bm \Sigma_{\mathbf y}).
\end{equation}
By \Cref{lem:finite-sample-diffusion}, the diffusion process posterior $p_{0 \mid t}(\mathbf x_0 \mid \mathbf x_t)$ is a discrete measure
\begin{equation}
    p_{0 \mid t}(\mathbf x_0 \mid \mathbf x_t) = \sum_{i = 1}^{N} w_i(\mathbf x_t, t) \delta(\mathbf x_0 - \mathbf x^{(i)})
\end{equation}
with weights $w_i(\mathbf x_t, t)$ as defined in \Cref{lem:finite-sample-diffusion}.

Substituting these two facts into the integral form of the likelihood, we obtain a Gaussian mixture whose means are constant w.r.t.\ $\mathbf y$ and $\mathbf x_t$ (i.e., of identical form to the case when $\mathcal{A}$ is linear).
\begin{align}
     p_{\mathbf y \mid t}(\mathbf y \mid \mathbf x_t)
     &= \int \mathcal{N}(\mathbf y; \mathcal{A}(\mathbf x_0), \bm \Sigma_{\mathbf y}) \sum_{i = 1}^{N} w_i(\mathbf x_t, t) \delta(\mathbf x_0 - \mathbf x^{(i)}) \mathrm{d}\mathbf x_0\\
     &= \sum_{i = 1}^{N} w_i(\mathbf x_t, t) \int \mathcal{N}(\mathbf y; \mathcal{A}(\mathbf x_0), \bm \Sigma_{\mathbf y}) \delta(\mathbf x_0 - \mathbf x^{(i)}) \mathrm{d}\mathbf x_0 \\
     &= \sum_{i = 1}^{N} w_i(\mathbf x_t, t) \,\mathcal{N}\Bigl(\mathbf y; \mathcal{A}\bigl(\mathbf x^{(i)}\bigr), \bm \Sigma_{\mathbf y}\Bigr)
\end{align}
\end{proof}

\subsection{Proof of \Cref{thm:finite-sample-posterior}}
\label[appendix]{proof:finite-sample-posterior}

\begin{proof}
    We start by writing the posterior $p^N_{t \mid \mathbf y}(\mathbf x_t \mid \mathbf y)$ via Bayes's rule
    \begin{equation}
        p^N_{t \mid \mathbf y}(\mathbf x_t \mid \mathbf y) = \frac{p^N_{\mathbf y \mid t}(\mathbf y \mid \mathbf x_t)p^N_t(\mathbf x_t)}{\int_{\mathcal{X}} p_{\mathbf y \mid 0}(\mathbf y \mid \mathbf x_0) p_{\mathrm{pr}}^N(\mathbf x_0) \mathrm{d}\mathbf x_0}
    \end{equation}
    The terms in the numerator are known from \Cref{thm:finite-sample-likelihood} and \Cref{lem:finite-sample-diffusion} respectively, and the denominator terms are given by the measurement model and the finite-sample regime respectively. Performing direct substitution and rearranging,
\begin{align}
p^N_{t \mid \mathbf y}(\mathbf x_t \mid \mathbf y)
&=
\frac{
\left(\sum_{i=1}^{N} w_i(\mathbf x_t,t)\,
\mathcal{N}\Bigl(\mathbf y;\mathcal{A}(\mathbf x^{(i)}),\bm\Sigma_{\mathbf y}\Bigr)\right)
\left(\frac{1}{N}\sum_{i=1}^{N}
\mathcal{N}\Bigl(\mathbf x_t;\sqrt{\bar\alpha(t)}\mathbf x^{(i)},
(1-\bar\alpha(t))\mathbf I_n\Bigr)\right)
}{
\frac1N\sum_{j=1}^{N}
\mathcal{N}\Bigl(\mathbf y;\mathcal{A}(\mathbf x^{(j)}),\bm\Sigma_{\mathbf y}\Bigr)
}
\\
&=
\frac{
\frac1N
\sum_{i=1}^{N}
\mathcal{N}\Bigl(\mathbf y;\mathcal{A}(\mathbf x^{(i)}),\bm\Sigma_{\mathbf y}\Bigr)
\mathcal{N}\Bigl(\mathbf x_t;\sqrt{\bar\alpha(t)}\mathbf x^{(i)},
(1-\bar\alpha(t))\mathbf I_n\Bigr)
}{
\frac1N\sum_{j=1}^{N}
\mathcal{N}\Bigl(\mathbf y;\mathcal{A}(\mathbf x^{(j)}),\bm\Sigma_{\mathbf y}\Bigr)
}
\\
&=
\sum_{i=1}^{N}
\frac{
\mathcal{N}\Bigl(\mathbf y;\mathcal{A}(\mathbf x^{(i)}),\bm\Sigma_{\mathbf y}\Bigr)
}{
\sum_{j=1}^{N}
\mathcal{N}\Bigl(\mathbf y;\mathcal{A}(\mathbf x^{(j)}),\bm\Sigma_{\mathbf y}\Bigr)
}
\mathcal{N}\Bigl(\mathbf x_t;\sqrt{\bar\alpha(t)}\mathbf x^{(i)},
(1-\bar\alpha(t))\mathbf I_n\Bigr).
\end{align}
\end{proof}

\section{Experimental details}
\label[appendix]{sec:experiemental-details}
\subsection{Target objects}
\label[appendix]{sec:target-objects}

In this section, we derive closed-form unconditional and conditional diffusion distributions and score functions for a variety of priors and measurement operators. For every prior we report five \emph{unconditional objects}: the marginal $p_t$, marginal score $\nabla_{\mathbf x_t} \log p_t(\mathbf x_t)$, denoiser $p_{0 \mid t}$, denoiser mean $\mathbf m_{0 \mid t}(\mathbf x_t)$, and denoiser covariance $\mathbf C_{0 \mid t}(\mathbf x_t)$. Unconditional objects are known analytically in all three cases independent of the forward model. Next, we additionally report four \emph{conditional objects}: the likelihood $p_{\mathbf y \mid t}$, likelihood score $\nabla_{\mathbf x_t} \log p_{\mathbf y \mid t}$, posterior $p_{t \mid \mathbf y}$, and posterior score $\nabla_{\mathbf x_t} \log p_{t \mid \mathbf y}$ only in contexts where they are accessible. For all settings, we assume VP-SDE forward transition
\begin{equation}
    p_{t \mid 0}(\mathbf x_t \mid \mathbf x_0) = \mathcal{N}(\mathbf x_t; \sqrt{\bar\alpha(t)} \mathbf x_0, (1 - \bar\alpha(t))\mathbf I_n),
\end{equation}
and additive Gaussian measurement noise
\begin{equation}
  p_{\mathbf y \mid 0}(\mathbf y \mid \mathbf x_0) = \mathcal{N}(\mathbf y; \mathcal{A}(\mathbf x_0), \bm \Sigma_{\mathbf y}).
\end{equation}

\subsubsection{Discrete prior}
\label[appendix]{sec:discrete-prior-analysis}

Let $p_{\mathrm{pr}}(\mathbf x_0)$ denote a discrete measure assigning positive probability $p_i > 0$ to finitely-many atoms $\mathbf x^{(i)}$:
\begin{equation}
    p_{\mathrm{pr}}(\mathbf x_0) = \sum_{i = 1}^N p_i \, \delta(\mathbf x_0 - \mathbf x^{(i)})
\end{equation}

\paragraph{Unconditional objects.}
The marginal $p_t(\mathbf x_t)$ at diffusion time $t$ can be derived by direct substitution, exchanging the integral and the finite sum, and integrating the mean of the Gaussian density against the Dirac measure $\delta$.
\begin{align}
    p_t(\mathbf x_t)
    &= \int_{\mathcal X} p_{t \mid 0}(\mathbf x_t \mid \mathbf x_0)\, p_{\mathrm{pr}} (\mathbf x_0)\, \mathrm{d}\mathbf x_0 \\
    &=\int_{\mathcal X} \mathcal{N}\bigl(\mathbf x_t; \sqrt{\bar\alpha(t)} \mathbf x_0, (1-\bar\alpha(t))\mathbf I_n\bigr)\,  \sum_{i = 1}^N p_i \delta(\mathbf x_0 - \mathbf x^{(i)})\, \mathrm{d}\mathbf x_0\\
    &= \sum_{i = 1}^N p_i \int_{\mathcal X} \mathcal{N}\bigl(\mathbf x_t; \sqrt{\bar\alpha(t)} \mathbf x_0, (1-\bar\alpha(t))\mathbf I_n\bigr)\, \delta(\mathbf x_0 - \mathbf x^{(i)})\, \mathrm{d}\mathbf x_0\\
    &= \sum_{i = 1}^N p_i \,\mathcal{N}\bigl(\mathbf x_t; \sqrt{\bar\alpha (t)} \mathbf x^{(i)}, (1 - \bar\alpha(t))\mathbf I_n\bigr).
\end{align}
Because the marginal is a Gaussian mixture, the Gaussian mixture score identity (\Cref{lem:gmm-score}) yields
\begin{equation}
    \nabla_{\mathbf x_t} \log p_t(\mathbf x_t) =
    -\sum_{i = 1}^{N} \tilde p_i(\mathbf x_t, t) \frac{\mathbf x_t - \sqrt{\bar\alpha (t)}\mathbf x^{(i)}}{1 - \bar\alpha(t)}
\end{equation}
\begin{equation}
    \tilde p_{i} (\mathbf x_t, t) = \frac{p_i \,\mathcal{N}\bigl(\mathbf x_t; \sqrt{\bar\alpha (t)} \mathbf x^{(i)}, (1 - \bar\alpha(t))\mathbf I_n\bigr)}{\sum_{j = 1}^N p_j \,\mathcal{N}\bigl(\mathbf x_t; \sqrt{\bar\alpha (t)} \mathbf x^{(j)}, (1 - \bar\alpha(t))\mathbf I_n\bigr)}.
\end{equation}

Once the marginal $p_t$ is known, the denoiser $p_{0 \mid t}$ follows from Bayes's rule, direct substitution, and rearrangement:
\begin{align}
    p_{0 \mid t}(\mathbf x_0 \mid \mathbf x_t)
    &= \frac{p_{t \mid 0}(\mathbf x_t \mid \mathbf x_0)\, p_{\mathrm{pr}}(\mathbf x_0)}{p_t(\mathbf x_t)} \\
    &= \frac{\mathcal{N}(\mathbf x_t; \sqrt{\bar\alpha(t)} \mathbf x_0, (1 - \bar\alpha(t))\mathbf I_n)\, \sum_{i = 1}^N p_i \, \delta(\mathbf x_0 - \mathbf x^{(i)})}{\sum_{j = 1}^N p_j \,\mathcal{N}\bigl(\mathbf x_t; \sqrt{\bar\alpha (t)} \mathbf x^{(j)}, (1 - \bar\alpha(t))\mathbf I_n\bigr)} \\
    &= \sum_{i = 1}^N\frac{p_i\mathcal{N}(\mathbf x_t; \sqrt{\bar\alpha(t)} \mathbf x_0, (1 - \bar\alpha(t))\mathbf I_n)}{\sum_{j = 1}^N p_j \,\mathcal{N}\bigl(\mathbf x_t; \sqrt{\bar\alpha (t)} \mathbf x^{(j)}, (1 - \bar\alpha(t))\mathbf I_n\bigr)} \delta(\mathbf x_0 - \mathbf x^{(i)})\\
    &= \sum_{i = 1}^N \tilde p_i(\mathbf x_t, t)\, \delta(\mathbf x_0 - \mathbf x^{(i)}).
\end{align}
Because the denoiser is a discrete measure, its mean is simply a weighted average of the atoms:
\begin{align}
    \mathbf m_{0 \mid t}(\mathbf x_t) &= \sum_{i = 1}^N \tilde p_i(\mathbf x_t, t)\, \mathbf x^{(i)}.
\end{align}
Furthermore, its covariance is a weighted spread over pairs of atoms,
\begin{align}
  \mathbf C_{0 \mid t}(\mathbf x_t) 
  &= \sum_{i = 1}^N \tilde p_i(\mathbf x_t, t) (\mathbf x^{(i)} - \mathbf m_{0 \mid t}(\mathbf x_t))(\mathbf x^{(i)} - \mathbf m_{0 \mid t}(\mathbf x_t))^\top \\
  &= \sum_{i = 1}^N \tilde p_i(\mathbf x_t, t)\,
     \left[
       \mathbf x^{(i)}{\mathbf x^{(i)}}^\top
       - \mathbf x^{(i)}\mathbf m_{0 \mid t}(\mathbf x_t)^\top
       - \mathbf m_{0 \mid t}(\mathbf x_t){\mathbf x^{(i)}}^\top
       + \mathbf m_{0 \mid t}(\mathbf x_t)\mathbf m_{0 \mid t}(\mathbf x_t)^\top
     \right] \\
  &= \sum_{i = 1}^N \tilde p_i(\mathbf x_t, t)\,
       \mathbf x^{(i)}{\mathbf x^{(i)}}^\top
     - \left(\sum_{i = 1}^N \tilde p_i(\mathbf x_t, t)\mathbf x^{(i)}\right)
       \mathbf m_{0 \mid t}(\mathbf x_t)^\top \\
  &\qquad
     - \mathbf m_{0 \mid t}(\mathbf x_t)
       \left(\sum_{i = 1}^N \tilde p_i(\mathbf x_t, t)\mathbf x^{(i)}\right)^\top
     + \left(\sum_{i = 1}^N \tilde p_i(\mathbf x_t, t)\right)
       \mathbf m_{0 \mid t}(\mathbf x_t)\mathbf m_{0 \mid t}(\mathbf x_t)^\top \\
  &= \sum_{i = 1}^N \tilde p_i(\mathbf x_t, t)\,
       \mathbf x^{(i)}{\mathbf x^{(i)}}^\top
     - \mathbf m_{0 \mid t}(\mathbf x_t)\mathbf m_{0 \mid t}(\mathbf x_t)^\top
     - \mathbf m_{0 \mid t}(\mathbf x_t)\mathbf m_{0 \mid t}(\mathbf x_t)^\top
     + \mathbf m_{0 \mid t}(\mathbf x_t)\mathbf m_{0 \mid t}(\mathbf x_t)^\top \\
  &= \sum_{i = 1}^N \tilde p_i(\mathbf x_t, t)\,
       \mathbf x^{(i)}{\mathbf x^{(i)}}^\top
     - \mathbf m_{0 \mid t}(\mathbf x_t)\mathbf m_{0 \mid t}(\mathbf x_t)^\top \\
  &= \sum_{i = 1}^N \tilde p_i(\mathbf x_t, t)\, \mathbf x^{(i)}{\mathbf x^{(i)}}^\top
     - \sum_{i,j = 1}^N \tilde p_i(\mathbf x_t, t)\tilde p_j(\mathbf x_t, t)\,
       \mathbf x^{(i)}{\mathbf x^{(j)}}^\top \\
  &= \frac{1}{2}\sum_{i,j = 1}^N \tilde p_i(\mathbf x_t, t)\tilde p_j(\mathbf x_t, t)\,
     \left(
       \mathbf x^{(i)}{\mathbf x^{(i)}}^\top
       + \mathbf x^{(j)}{\mathbf x^{(j)}}^\top
       - \mathbf x^{(i)}{\mathbf x^{(j)}}^\top
       - \mathbf x^{(j)}{\mathbf x^{(i)}}^\top
     \right) \\
  &= \frac{1}{2}\sum_{i, j = 1}^N \tilde p_i(\mathbf x_t, t)\, \tilde p_j(\mathbf x_t, t)\, \bigl(\mathbf x^{(i)} - \mathbf x^{(j)}\bigr)\bigl(\mathbf x^{(i)} - \mathbf x^{(j)}\bigr)^\top
\end{align}

\paragraph{Conditional objects.}
Because the denoiser is a discrete measure, the marginal integral collapses to a finite sum.
\begin{align}
    p_{\mathbf y \mid t}(\mathbf y \mid \mathbf x_t)
    &= \int_{\mathcal X} p_{\mathbf y \mid 0}(\mathbf y \mid \mathbf x_0)\, p_{0 \mid t}(\mathbf x_0 \mid \mathbf x_t) \,\mathrm{d}\mathbf x_0 \\
    &= \int_{\mathcal X} \mathcal{N}(\mathbf y; \mathcal{A}(\mathbf x_0), \bm \Sigma_{\mathbf y})\,\sum_{i = 1}^N \tilde p_i(\mathbf x_t, t)\, \delta(\mathbf x_0 - \mathbf x^{(i)}) \,\mathrm{d}\mathbf x_0 \\
    &= \sum_{i = 1}^N \tilde p_i(\mathbf x_t, t)\int_{\mathcal X} \mathcal{N}(\mathbf y; \mathcal{A}(\mathbf x_0), \bm \Sigma_{\mathbf y})\, \delta(\mathbf x_0 - \mathbf x^{(i)}) \,\mathrm{d}\mathbf x_0 \\
    &= \sum_{i = 1}^N \tilde p_i(\mathbf x_t, t)\, \mathcal{N}\bigl(\mathbf y; \mathcal{A}(\mathbf x^{(i)}), \bm\Sigma_{\mathbf y}\bigr).
\end{align}
Differentiating, and observing only the weights $\tilde p_i(\mathbf x_t, t)$ depend on $\mathbf x_t$,
\begin{align}
    \nabla_{\mathbf x_t} \log p_{\mathbf y \mid t}(\mathbf y \mid \mathbf x_t)
    &= -\sum_{i = 1}^N \frac{\tilde p_i(\mathbf x_t, t)\, \mathcal{N}\bigl(\mathbf y; \mathcal{A}(\mathbf x^{(i)}), \bm\Sigma_{\mathbf y}\bigr)}{\sum_{j = 1}^N \tilde p_j(\mathbf x_t, t)\, \mathcal{N}\bigl(\mathbf y; \mathcal{A}(\mathbf x^{(j)}), \bm\Sigma_{\mathbf y}\bigr)}\frac{\mathbf x_t - \sqrt{\bar\alpha(t)}\mathbf x^{(i)}}{1 - \bar\alpha(t)} \nonumber \\
    &\phantom{{}={}} + \underbrace{\sum_{i = 1}^N \tilde p_i(\mathbf x_t, t) \frac{\mathbf x_t - \sqrt{\bar\alpha(t)}\mathbf x^{(i)}}{1 - \bar\alpha(t)}}_{- \nabla_{\mathbf x_t} \log p_t(\mathbf x_t)}.
\end{align}
By Bayes's rule and rearrangement, the posterior is the Gaussian mixture:
\begin{align}
    p_{t \mid \mathbf y}(\mathbf x_t \mid \mathbf y) 
    &= \frac{p_{\mathbf y \mid t}(\mathbf y \mid \mathbf x_t) p_t(\mathbf x_t)}{\int_{\mathcal X} p_{\mathbf y \mid 0}(\mathbf y \mid \mathbf x_0)p_{\mathrm pr}(\mathbf x_0)\,\mathrm{d}\mathbf x_0}  \\
    &= \sum_{i = 1}^N \frac{p_i\, \mathcal{N}\bigl(\mathbf y; \mathcal{A}(\mathbf x^{(i)}), \bm\Sigma_{\mathbf y}\bigr)}{\sum_{j = 1}^N p_j\, \mathcal{N}\bigl(\mathbf y; \mathcal{A}(\mathbf x^{(j)}), \bm\Sigma_{\mathbf y}\bigr)}\, \mathcal{N}\bigl(\mathbf x_t; \sqrt{\bar\alpha(t)}\mathbf x^{(i)}, (1 - \bar\alpha(t))\mathbf I_n\bigr),
\end{align}
and the posterior score collapses to
\begin{equation}
    \nabla_{\mathbf x_t} \log p_{t \mid \mathbf y}(\mathbf x_t \mid \mathbf y) = -\sum_{i = 1}^N \frac{\tilde p_i(\mathbf x_t, t)\, \mathcal{N}\bigl(\mathbf y; \mathcal{A}(\mathbf x^{(i)}), \bm\Sigma_{\mathbf y}\bigr)}{\sum_{j = 1}^N \tilde p_j(\mathbf x_t, t)\, \mathcal{N}\bigl(\mathbf y; \mathcal{A}(\mathbf x^{(j)}), \bm\Sigma_{\mathbf y}\bigr)}\frac{\mathbf x_t - \sqrt{\bar\alpha(t)}\mathbf x^{(i)}}{1 - \bar\alpha(t)}.
\end{equation}

\subsubsection{Gaussian prior}
\label[appendix]{sec:gaussian-prior-analysis}

Let $p_{\mathrm{pr}}(\mathbf x_0)$ denote a (unimodal) Gaussian with mean vector $ \mathbf m_{\mathrm{pr}}$ and (positive definite) covariance matrix $\mathbf C_{\mathrm{pr}}$:
\begin{equation}
    p_{\mathrm{pr}}(\mathbf x_0) = \mathcal{N}\bigl(\mathbf x_0; \mathbf m_{\mathrm{pr}}, \mathbf C_{\mathrm{pr}}\bigr)
\end{equation}

\paragraph{Unconditional objects.}
The marginal $p_t$ is Gaussian by \Cref{lem:gaussian-gaussian-marginal}:
\begin{align}
    p_t(\mathbf x_t)
    &= \int_{\mathcal X} \mathcal{N}\bigl(\mathbf x_t; \sqrt{\bar\alpha(t)} \mathbf x_0, (1-\bar\alpha(t))\mathbf I_n\bigr)\, \mathcal{N}\bigl(\mathbf x_0; \mathbf m_{\mathrm{pr}}, \mathbf C_{\mathrm{pr}}\bigr)\, \mathrm{d}\mathbf x_0 \\
    &= \mathcal{N}\bigl(\mathbf x_t; \sqrt{\bar\alpha(t)} \mathbf m_{\mathrm{pr}},\, \bar\alpha(t) \mathbf C_{\mathrm{pr}} + (1-\bar\alpha(t))\mathbf I_n\bigr),
\end{align}
hence
\begin{equation}
    \nabla_{\mathbf x_t} \log p_t(\mathbf x_t)
    =  -\bigl(\bar\alpha(t) \mathbf C_{\mathrm{pr}} + (1-\bar\alpha(t))\mathbf I_n\bigr)^{-1} \bigl(\mathbf x_t - \sqrt{\bar\alpha(t)}\, \mathbf m_{\mathrm{pr}}\bigr).
\end{equation}
By \Cref{lem:gaussian-gaussian-conjugacy}, the denoiser is Gaussian,
\begin{equation}
    p_{0 \mid t}(\mathbf x_0 \mid \mathbf x_t) = \mathcal{N}\bigl(\mathbf x_0; \mathbf m_{0 \mid t}(\mathbf x_t), \mathbf C_{0 \mid t}\bigr),
\end{equation}
\begin{align}
    \mathbf C_{0 \mid t} &= \Bigl(\mathbf C_{\mathrm{pr}}^{-1} + \tfrac{\bar\alpha(t)}{1-\bar\alpha(t)}\mathbf I_n\Bigr)^{-1}, \\
    \mathbf m_{0 \mid t}(\mathbf x_t) &= \mathbf C_{0 \mid t} \Bigl(\mathbf C_{\mathrm{pr}}^{-1}\mathbf m_{\mathrm{pr}} + \tfrac{\sqrt{\bar\alpha(t)}}{1 - \bar\alpha(t)}\mathbf x_t\Bigr).
\end{align}
The denoiser covariance is independent of $\mathbf x_t$, reflecting the linear-Gaussian structure of the prior.

\paragraph{Conditional objects.}
The likelihood is
\begin{equation}
    p_{\mathbf y \mid t}(\mathbf y \mid \mathbf x_t) = \int_{\mathcal X} \mathcal{N}\bigl(\mathbf y; \mathcal{A}(\mathbf x_0), \bm\Sigma_{\mathbf y}\bigr)\, \mathcal{N}\bigl(\mathbf x_0; \mathbf m_{0 \mid t}, \mathbf C_{0 \mid t}\bigr)\,\mathrm{d}\mathbf x_0,
\end{equation}
which has no closed form for general $\mathcal{A}$. For affine $\mathcal{A}(\mathbf x_0) = \mathbf A \mathbf x_0 + \mathbf b$, \Cref{lem:gaussian-gaussian-marginal} gives
\begin{equation}
    p_{\mathbf y \mid t}(\mathbf y \mid \mathbf x_t) = \mathcal{N}\bigl(\mathbf y;\, \mathbf A\mathbf m_{0 \mid t} + \mathbf b,\, \bm\Sigma_{\mathbf y} + \mathbf A\mathbf C_{0 \mid t}\mathbf A^\top \bigr).
\end{equation}
The likelihood score follows from chain rule on $\mathbf m_{0 \mid t}(\mathbf x_t)$:
\begin{equation}
    \nabla_{\mathbf x_t} \log p_{\mathbf y \mid t}(\mathbf y \mid \mathbf x_t)
    = \tfrac{\sqrt{\bar\alpha(t)}}{1 - \bar\alpha(t)}\, \mathbf C_{0 \mid t}\, \mathbf A^\top \bigl(\bm\Sigma_{\mathbf y} + \mathbf A\mathbf C_{0 \mid t}\mathbf A^\top\bigr)^{-1} \bigl(\mathbf y - \mathbf A\mathbf m_{0 \mid t} - \mathbf b\bigr).
\end{equation}

For the posterior we work with the joint $(\mathbf x_t, \mathbf y)$. Since $\mathbf x_t = \sqrt{\bar\alpha(t)}\mathbf x_0 + \bm\eta_1$ and $\mathbf y = \mathbf A \mathbf x_0 + \mathbf b + \bm\eta_2$ with independent Gaussian noise $\bm\eta_1 \sim \mathcal{N}(\mathbf 0, (1-\bar\alpha(t))\mathbf I_n)$ and $\bm\eta_2 \sim \mathcal{N}(\mathbf 0, \bm\Sigma_{\mathbf y})$, the joint is Gaussian with
\begin{equation}
    p(\mathbf x_t, \mathbf y) = \mathcal{N}\!\left(\begin{pmatrix}\mathbf x_t\\\mathbf y\end{pmatrix}; \begin{pmatrix}\sqrt{\bar\alpha(t)}\,\mathbf m_{\mathrm{pr}} \\ \mathbf A\mathbf m_{\mathrm{pr}} + \mathbf b\end{pmatrix},\, \bm\Lambda\right),
\end{equation}
\begin{equation}
    \bm\Lambda = \begin{pmatrix}\bar\alpha(t)\mathbf C_{\mathrm{pr}} + (1-\bar\alpha(t))\mathbf I_n & \sqrt{\bar\alpha(t)}\,\mathbf C_{\mathrm{pr}}\mathbf A^\top \\ \sqrt{\bar\alpha(t)}\,\mathbf A\mathbf C_{\mathrm{pr}} & \mathbf A\mathbf C_{\mathrm{pr}}\mathbf A^\top + \bm\Sigma_{\mathbf y}\end{pmatrix}.
\end{equation}
Conditioning on $\mathbf y$ yields a Gaussian posterior $p_{t \mid \mathbf y}(\mathbf x_t \mid \mathbf y) = \mathcal{N}\bigl(\mathbf x_t; \mathbf m_{\mathrm{post}}(\mathbf y), \mathbf C_{\mathrm{post}}\bigr)$ with
\begin{align}
    \mathbf m_{\mathrm{post}}(\mathbf y) &= \sqrt{\bar\alpha(t)}\,\mathbf m_{\mathrm{pr}} + \sqrt{\bar\alpha(t)}\,\mathbf C_{\mathrm{pr}}\mathbf A^\top\bigl(\mathbf A\mathbf C_{\mathrm{pr}}\mathbf A^\top + \bm\Sigma_{\mathbf y}\bigr)^{-1}(\mathbf y - \mathbf A\mathbf m_{\mathrm{pr}} - \mathbf b), \\
    \mathbf C_{\mathrm{post}} &= \bar\alpha(t)\mathbf C_{\mathrm{pr}} + (1-\bar\alpha(t))\mathbf I_n - \bar\alpha(t)\mathbf C_{\mathrm{pr}}\mathbf A^\top\bigl(\mathbf A\mathbf C_{\mathrm{pr}}\mathbf A^\top + \bm\Sigma_{\mathbf y}\bigr)^{-1}\mathbf A\mathbf C_{\mathrm{pr}}.
\end{align}
The posterior score is linear in $\mathbf x_t$,
\begin{equation}
    \nabla_{\mathbf x_t} \log p_{t \mid \mathbf y}(\mathbf x_t \mid \mathbf y) = -\mathbf C_{\mathrm{post}}^{-1}\bigl(\mathbf x_t - \mathbf m_{\mathrm{post}}(\mathbf y)\bigr).
\end{equation}

\subsubsection{Gaussian mixture prior}
\label[appendix]{sec:gmm-prior-analysis}

Let $p_{\mathrm{pr}}(\mathbf x_0)$ denote a Gaussian mixture where the $i$th component has weight $p_i$, mean vector $ \mathbf m_{\mathrm{pr}, i}$ and (positive definite) covariance matrix $\mathbf C_{\mathrm{pr}, i}$:
\begin{equation}
    p_{\mathrm{pr}}(\mathbf x_0) = \sum_{i = 1}^N p_i\, \mathcal{N}\bigl(\mathbf x_0; \mathbf m_{\mathrm{pr}, i}, \mathbf C_{\mathrm{pr}, i}\bigr)
\end{equation}

\paragraph{Unconditional objects.}
The marginal is the Gaussian mixture
\begin{align}
    p_t(\mathbf x_t)
    &= \sum_{i = 1}^N p_i \int_{\mathcal X} \mathcal{N}\bigl(\mathbf x_t; \sqrt{\bar\alpha(t)} \mathbf x_0, (1-\bar\alpha(t))\mathbf I_n\bigr)\, \mathcal{N}\bigl(\mathbf x_0; \mathbf m_{\mathrm{pr}, i}, \mathbf C_{\mathrm{pr}, i}\bigr)\, \mathrm{d}\mathbf x_0 \\
    &= \sum_{i = 1}^N p_i\, \mathcal{N}\bigl(\mathbf x_t; \sqrt{\bar\alpha(t)}\, \mathbf m_{\mathrm{pr}, i},\, \bar\alpha(t)\mathbf C_{\mathrm{pr}, i} + (1-\bar\alpha(t))\mathbf I_n\bigr).
\end{align}
By \Cref{lem:gmm-score},
\begin{equation}
    \nabla_{\mathbf x_t} \log p_t(\mathbf x_t)
    = - \sum_{i = 1}^N \tilde p_i(\mathbf x_t, t)\, \bigl(\bar\alpha(t) \mathbf C_{\mathrm{pr}, i} + (1-\bar\alpha(t))\mathbf I_n\bigr)^{-1}\bigl(\mathbf x_t - \sqrt{\bar\alpha(t)} \mathbf m_{\mathrm{pr}, i}\bigr),
\end{equation}
\begin{equation}
    \tilde p_i(\mathbf x_t, t) = \frac{p_i\, \mathcal{N}\bigl(\mathbf x_t; \sqrt{\bar\alpha(t)} \mathbf m_{\mathrm{pr}, i}, \bar\alpha(t) \mathbf C_{\mathrm{pr}, i} + (1-\bar\alpha(t))\mathbf I_n\bigr)}{\sum_{j = 1}^N p_j\, \mathcal{N}\bigl(\mathbf x_t; \sqrt{\bar\alpha(t)} \mathbf m_{\mathrm{pr}, j}, \bar\alpha(t) \mathbf C_{\mathrm{pr}, j} + (1-\bar\alpha(t))\mathbf I_n\bigr)}.
\end{equation}
By Bayes's rule, the denoiser is a Gaussian mixture with the same weights as the marginal,
\begin{equation}
    p_{0 \mid t}(\mathbf x_0 \mid \mathbf x_t) = \sum_{i = 1}^N \tilde p_i(\mathbf x_t, t)\, \mathcal{N}\bigl(\mathbf x_0; \mathbf m_{0 \mid t, i}(\mathbf x_t), \mathbf C_{0 \mid t, i}\bigr),
\end{equation}
\begin{align}
    \mathbf C_{0 \mid t, i} &= \Bigl(\mathbf C_{\mathrm{pr}, i}^{-1} + \tfrac{\bar\alpha(t)}{1-\bar\alpha(t)}\mathbf I_n\Bigr)^{-1}, \\
    \mathbf m_{0 \mid t, i}(\mathbf x_t) &= \mathbf C_{0 \mid t, i} \Bigl(\mathbf C_{\mathrm{pr}, i}^{-1}\mathbf m_{\mathrm{pr}, i} + \tfrac{\sqrt{\bar\alpha(t)}}{1 - \bar\alpha(t)}\mathbf x_t\Bigr).
\end{align}
By the laws of total expectation and total covariance over the component label,
\begin{align}
    \mathbf m_{0 \mid t}(\mathbf x_t) &= \sum_{i = 1}^N \tilde p_i(\mathbf x_t, t)\, \mathbf m_{0 \mid t, i}(\mathbf x_t), \\
    \mathbf C_{0 \mid t}(\mathbf x_t) &= \sum_{i = 1}^N \tilde p_i(\mathbf x_t, t)\, \mathbf C_{0 \mid t, i} + \frac{1}{2}\sum_{i, j = 1}^N \tilde p_i\, \tilde p_j\, \bigl(\mathbf m_{0 \mid t, i} - \mathbf m_{0 \mid t, j}\bigr)\bigl(\mathbf m_{0 \mid t, i} - \mathbf m_{0 \mid t, j}\bigr)^{\!\top}.
\end{align}
As in the discrete case, the second-moment decomposition makes the structure of $\mathbf C_{0 \mid t}$ transparent without reference to the aggregate mean: the first sum is the within-component covariance, the second sum is the pairwise outer-product spread over component means.

\paragraph{Conditional objects.}
The likelihood reduces to a mixture of integrals,
\begin{equation}
    p_{\mathbf y \mid t}(\mathbf y \mid \mathbf x_t) = \sum_{i = 1}^N \tilde p_i(\mathbf x_t, t) \int_{\mathcal X} \mathcal{N}\bigl(\mathbf y; \mathcal{A}(\mathbf x_0), \bm\Sigma_{\mathbf y}\bigr)\, \mathcal{N}\bigl(\mathbf x_0; \mathbf m_{0 \mid t, i}, \mathbf C_{0 \mid t, i}\bigr) \mathrm{d}\mathbf x_0,
\end{equation}
each of which is intractable for general $\mathcal{A}$. For affine $\mathcal{A}(\mathbf x_0) = \mathbf A \mathbf x_0 + \mathbf b$, each component integrates via \Cref{lem:gaussian-gaussian-marginal},
\begin{equation}
    p_{\mathbf y \mid t}(\mathbf y \mid \mathbf x_t) = \sum_{i = 1}^N \tilde p_i(\mathbf x_t, t)\, \mathcal{N}\bigl(\mathbf y; \mathbf A\mathbf m_{0 \mid t, i} + \mathbf b,\, \bm\Sigma_{\mathbf y} + \mathbf A\mathbf C_{0 \mid t, i}\mathbf A^\top\bigr).
\end{equation}

For the posterior we again work with the joint $(\mathbf x_t, \mathbf y)$. By \Cref{lem:gaussian-gmm-marginal} applied componentwise,
\begin{equation}
    p(\mathbf x_t, \mathbf y) = \sum_{i = 1}^N p_i\, \mathcal{N}\!\left(\begin{pmatrix}\mathbf x_t\\\mathbf y\end{pmatrix}; \begin{pmatrix}\sqrt{\bar\alpha(t)}\,\mathbf m_{\mathrm{pr}, i} \\ \mathbf A\mathbf m_{\mathrm{pr}, i} + \mathbf b\end{pmatrix},\, \bm\Lambda_i\right),
\end{equation}
\begin{equation}
    \bm\Lambda_i = \begin{pmatrix}\bar\alpha(t)\mathbf C_{\mathrm{pr}, i} + (1-\bar\alpha(t))\mathbf I_n & \sqrt{\bar\alpha(t)}\,\mathbf C_{\mathrm{pr}, i}\mathbf A^\top \\ \sqrt{\bar\alpha(t)}\,\mathbf A\mathbf C_{\mathrm{pr}, i} & \mathbf A\mathbf C_{\mathrm{pr}, i}\mathbf A^\top + \bm\Sigma_{\mathbf y}\end{pmatrix}.
\end{equation}
Conditioning each component on $\mathbf y$ and reweighting by its $\mathbf y$-marginal likelihood (\Cref{lem:gaussian-gmm-conjugacy}) yields a Gaussian mixture posterior
\begin{equation}
    p_{t \mid \mathbf y}(\mathbf x_t \mid \mathbf y) = \sum_{i = 1}^N \pi_i(\mathbf y)\, \mathcal{N}\bigl(\mathbf x_t; \mathbf m_{\mathrm{post}, i}(\mathbf y), \mathbf C_{\mathrm{post}, i}\bigr),
\end{equation}
\begin{align}
    \pi_i(\mathbf y) &= \frac{p_i\, \mathcal{N}\bigl(\mathbf y; \mathbf A\mathbf m_{\mathrm{pr}, i} + \mathbf b,\, \mathbf A\mathbf C_{\mathrm{pr}, i}\mathbf A^\top + \bm\Sigma_{\mathbf y}\bigr)}{\sum_{j = 1}^N p_j\, \mathcal{N}\bigl(\mathbf y; \mathbf A\mathbf m_{\mathrm{pr}, j} + \mathbf b,\, \mathbf A\mathbf C_{\mathrm{pr}, j}\mathbf A^\top + \bm\Sigma_{\mathbf y}\bigr)}, \\
    \mathbf m_{\mathrm{post}, i}(\mathbf y) &= \sqrt{\bar\alpha(t)}\,\mathbf m_{\mathrm{pr}, i} + \sqrt{\bar\alpha(t)}\,\mathbf C_{\mathrm{pr}, i}\mathbf A^\top\bigl(\mathbf A\mathbf C_{\mathrm{pr}, i}\mathbf A^\top + \bm\Sigma_{\mathbf y}\bigr)^{-1}(\mathbf y - \mathbf A\mathbf m_{\mathrm{pr}, i} - \mathbf b), \\
    \mathbf C_{\mathrm{post}, i} &= \bar\alpha(t)\mathbf C_{\mathrm{pr}, i} + (1-\bar\alpha(t))\mathbf I_n - \bar\alpha(t)\mathbf C_{\mathrm{pr}, i}\mathbf A^\top\bigl(\mathbf A\mathbf C_{\mathrm{pr}, i}\mathbf A^\top + \bm\Sigma_{\mathbf y}\bigr)^{-1}\mathbf A\mathbf C_{\mathrm{pr}, i}.
\end{align}
By \Cref{lem:gmm-score} applied to this posterior mixture,
\begin{equation}
    \nabla_{\mathbf x_t}\log p_{t \mid \mathbf y}(\mathbf x_t \mid \mathbf y) = -\sum_{i = 1}^N \pi'_i(\mathbf x_t, \mathbf y)\, \mathbf C_{\mathrm{post}, i}^{-1}\bigl(\mathbf x_t - \mathbf m_{\mathrm{post}, i}(\mathbf y)\bigr),
\end{equation}
\begin{equation}
    \pi'_i(\mathbf x_t, \mathbf y) = \frac{\pi_i(\mathbf y)\, \mathcal{N}\bigl(\mathbf x_t; \mathbf m_{\mathrm{post}, i}(\mathbf y), \mathbf C_{\mathrm{post}, i}\bigr)}{\sum_{j = 1}^N \pi_j(\mathbf y)\, \mathcal{N}\bigl(\mathbf x_t; \mathbf m_{\mathrm{post}, j}(\mathbf y), \mathbf C_{\mathrm{post}, j}\bigr)}.
\end{equation}
The likelihood score follows by subtracting the prior score from the posterior score.

\subsection{Experimental framework}
\label{app:setup-common}

All experiments use the variance-preserving SDE (VP-SDE) of
\Cref{eq:vpsde} with the default linear $\beta$-schedule.
Forward marginals, scores, and posteriors are evaluated on a uniform time grid of $N_t = 400$ points spanning $t \in [10^{-3},\, 1]$, and on a uniform spatial grid of $N_x = 600$ points whose extent is set per-prior to bracket its support. All computations are carried out in double precision (\texttt{torch.float64}) on CPU.

For all problems, the observation model is one-dimensional with additive Gaussian noise:
\begin{equation}
  y = \mathcal{A}(x_0) + \eta, \quad \eta \sim \mathcal{N}(0,\, \sigma_{\mathbf y}^2).
  \label{eq:exp-obs}
\end{equation}

\paragraph{Methods.}
For every $(t, x)$ grid we report up to six posterior distributions $p(x_t \mid y)$:
\begin{enumerate}
    \item the true posterior (where available), derived in \Cref{sec:target-objects},
    \item FSR: the finite-sample regime introduced in \Cref{sec:finite-sample-regime},
    \item $\sigma$-DPS \citep{chung2023diffusion}, defined in \Cref{eq:chung-score} with unmodified prefactor $\sigma_{\mathbf y}^{-2}$,
    \item $\zeta$-DPS \citep{chung2023diffusion}, defined in \Cref{eq:modified-chung-score} with modified prefactor $\zeta/\|\mathbf y - \mathcal{A}(\mathbf m_{0 \mid t}(\mathbf x_t))\|_2$,
    \item $\Pi$GDM \citep{song2023pseudoinverseguided}, defined in \Cref{eq:song-approximation}, and 
    \item TMPD \citep{boys2024tweedie}, defined in \Cref{eq:boys-approximation}. 
\end{enumerate}
The FSR and DPS methods are present for all problems, while $\Pi$GDM and TMPD are only applicable when the measurement model is affine.

Heatmaps use power-law normalization with exponent $\gamma = 0.55$. The color ceiling is set per row to the $99$\textsuperscript{th} percentile of that row's analytic field with flipping at the $99.9$\textsuperscript{th} percentile to prevent Dirac distributions from dominating the colormap.

\subsection{Per-prior posterior reconstructions}
\label{sec:per-prior-recons}

This subsection presents posterior-score reconstructions for every $(\text{prior}, A, \mathbf y)$ target in the testbed. Each prior is introduced with its mathematical definition and then exercised across its three forward operators and three measurements. Per target we report all six methods of \Cref{app:setup-common}; on non-affine $A$ the Song and Boys cells are marked \emph{intractable} per the linear-only restriction. The $\zeta$-DPS column displays the per-target $\zeta^\star$ chosen by visual inspection from the $\zeta \in \{0.01, 0.03, \dots, 0.49\}$ tuning sweep.

\clearpage
\subsubsection{Discrete prior \texttt{tri\_equal}}
\label{sec:setup-discrete-tri-equal}

Symmetric three-atom prior, equal weights.
\begin{equation*}
  p_{\mathrm{pr}}(\mathbf x_0) = \tfrac{1}{3}\,\delta(\mathbf x_0 + 1.8) + \tfrac{1}{3}\,\delta(\mathbf x_0 - 0.2) + \tfrac{1}{3}\,\delta(\mathbf x_0 - 2.2)
\end{equation*}

\begin{equation*}
    \mathcal{A}(\mathbf x) = \mathbf x, \quad \sigma = 0.3, \quad \mathbf y = -1.8000
\end{equation*}
\begin{figure}[H]
  \centering

\end{figure}

\clearpage
\subsubsection{Discrete prior \texttt{pent\_asym}}
\label{sec:setup-discrete-pent-asym}

Five-atom prior with mildly asymmetric tails.
\begin{equation*}
  p_{\mathrm{pr}}(\mathbf x_0) = 0.10\,\delta(\mathbf x_0 + 2.7) + 0.25\,\delta(\mathbf x_0 + 0.7) + 0.30\,\delta(\mathbf x_0 - 0.3) + 0.25\,\delta(\mathbf x_0 - 1.3) + 0.10\,\delta(\mathbf x_0 - 3.3)
\end{equation*}

\begin{equation*}
    \mathcal{A}(\mathbf x) = \mathbf x, \quad \sigma = 0.3, \quad \mathbf y = -2.7000
\end{equation*}
\begin{figure}[H]
  \centering
  %
\end{figure}

\clearpage
\subsubsection{Discrete prior \texttt{wild}}
\label{sec:setup-discrete-wild}

Five-atom prior concentrating $80\%$ of mass on a tight negative cluster.
\begin{equation*}
  p_{\mathrm{pr}}(\mathbf x_0) = 0.05\,\delta(\mathbf x_0 + 4.0) + 0.50\,\delta(\mathbf x_0 + 1.2) + 0.30\,\delta(\mathbf x_0 + 0.8) + 0.10\,\delta(\mathbf x_0 - 2.5) + 0.05\,\delta(\mathbf x_0 - 5.5)
\end{equation*}

\begin{equation*}
    \mathcal{A}(\mathbf x) = \mathbf x, \quad \sigma = 0.3, \quad \mathbf y = -4.0000
\end{equation*}
\begin{figure}[H]
  \centering
  %
\end{figure}

\clearpage
\subsubsection{Gaussian prior \texttt{narrow}}
\label{sec:setup-gaussian-narrow}

Unimodal Gaussian with small variance, displaced from zero so the forward marginal drifts visibly toward the unit Gaussian as $t \to 1$.
\begin{equation*}
  p_{\mathrm{pr}}(\mathbf x_0) = \mathcal{N}(0.6,\ 0.5)
\end{equation*}

\begin{equation*}
    \mathcal{A}(\mathbf x) = \mathbf x, \quad \sigma = 0.3, \quad \mathbf y = -2.0000
\end{equation*}
\begin{figure}[H]
  \centering
  %
\end{figure}

\clearpage
\subsubsection{Gaussian prior \texttt{wide}}
\label{sec:setup-gaussian-wide}

Unimodal Gaussian with large variance, displaced from zero.
\begin{equation*}
  p_{\mathrm{pr}}(\mathbf x_0) = \mathcal{N}(1.5,\ 2.0)
\end{equation*}

\begin{equation*}
    \mathcal{A}(\mathbf x) = \mathbf x, \quad \sigma = 0.3, \quad \mathbf y = -2.0000
\end{equation*}
\begin{figure}[H]
  \centering
  %
\end{figure}

\clearpage
\subsubsection{Gaussian-mixture prior \texttt{tri\_equal}}
\label{sec:setup-gmm-tri-equal}

Symmetric three-component Gaussian mixture, equal weights and equal isotropic variances.
\begin{equation*}
  p_{\mathrm{pr}}(\mathbf x_0) = \tfrac{1}{3}\,\mathcal{N}(-2.6,\ 0.25) + \tfrac{1}{3}\,\mathcal{N}(0.4,\ 0.25) + \tfrac{1}{3}\,\mathcal{N}(3.4,\ 0.25)
\end{equation*}

\begin{equation*}
    \mathcal{A}(\mathbf x) = \mathbf x, \quad \sigma = 0.2, \quad \mathbf y = -2.6000
\end{equation*}
\begin{figure}[H]
  \centering
  %
\end{figure}

\clearpage
\subsubsection{Gaussian-mixture prior \texttt{bi\_asym}}
\label{sec:setup-gmm-bi-asym}

Bimodal Gaussian mixture with asymmetric weights and component variances.
\begin{equation*}
  p_{\mathrm{pr}}(\mathbf x_0) = 0.30\,\mathcal{N}(-1.7,\ 0.16) + 0.70\,\mathcal{N}(2.3,\ 0.36)
\end{equation*}

\begin{equation*}
    \mathcal{A}(\mathbf x) = \mathbf x, \quad \sigma = 0.2, \quad \mathbf y = -1.7000
\end{equation*}
\begin{figure}[H]
  \centering
  %
\end{figure}

\section{$\zeta$-DPS tuning}
\label[appendix]{sec:zeta-tuning}
\input{E-zeta-DPS-tuning}



\end{document}